\newtheorem{lemma}{Lemma}
\newtheorem{theorem}{Theorem}
\newtheorem*{proposition}{Proposition}
\newtheorem{definition}{Definition}
\newtheorem{example}{Example}
\newtheorem{remark}{Remark}
\newcommand{\new}{\newcommand}
\new{\bg}{\begin}
\new{\lp}{\left(}
\new{\rp}{\right)}
\new{\lb}{\left\{}
\new{\rb}{\right\}}
\new{\lsq}{\left[}
\new{\rsq}{\right]}
\new{\B}{B}
\new{\Bn}{{\B_\n}}
\new{\Bnd}{{\B^\dd_\n}}
\new{\C}{\mathcal C}
\new{\Cn}{{\C_\n}}
\new{\grad}{\nabla}
\new{\gradn}{\hat{\grad}}
\new{\der}{D}
\new{\derh}{\partial}
\new{\dern}{\hat{\der}}
\new{\fdag}{f^{\dag}_\rho}
\new{\reg}{\Omega}
\new{\Regze}{\reg^{ \textrm{D}}_0}
\new{\Reg}{\reg^{ \textrm{D}}_1}
\new{\Regn}{\widehat{\Omega}^D_1}
\new{\RR}{\omega}
\new{\jj}{a}
\new{\jjj}{b}
\new{\jjjj}{c}
\new{\ii}{i}
\new{\iii}{j}
\new{\itext}{t}
\new{\itint}{q}
\new{\step}{\sigma}
\new{\pone}{\tau}
\new{\ptwo}{\tau}
\new{\taumu}{\nu}
\new{\fff}{\end{enumerate}}
\new{\iiii}{\begin{itemize}}
\new{\ffff}{\end{itemize}}
\new{\mfi}{\begin{eqnarray*}}
\new{\mff}{\end{eqnarray*}}
\new{\mfni}{\begin{eqnarray}}
\new{\mfnf}{\end{eqnarray}}
\new{\beeq}[2]{\begin{equation}\label{#1}{#2}\end{equation}}
\new{\eqn}[1]{(\ref{#1})}
\new{\room}{\ \ \ \ }
\new{\card}{\#}
\new{\zz}[1]{z_{j,{#1}}}
\new{\nor}[1]{\|{#1}\|}
\new{\norh}[1]{\|{#1}\|_\hh}
\new{\scal}[2]{\langle{#1},{#2}\rangle}
\new{\scalh}[2]{\langle{#1},{#2}\rangle_\hh}
\new{\set}[1]{\{{#1}\}}
\new{\com}{{\mathbb C}}
\new{\rone}{{\mathbb R}}
\new{\nat}{{\mathbb N}}
\new{\fz}{\hat{f}^{\pone}}
\new{\fzn}{\hat{f}^{\pone_\n}}
\new{\argmin}{\operatornamewithlimits{argmin}}
\new{\argmax}{\operatornamewithlimits{argmax}}
\new{\Prob}[1]{\mathrm{P}\!\left(\, #1 \right)}
\new{\E}{{\mathbb E}}
\new{\eps}{\varepsilon}
\new{\marg}{\rho_\X}
\new{\prob}{\rho}
\new{\dd}{d}
\new{\deff}{|\I|}
\new{\tr}{\vz}
\new{\n}{n}
\new{\vx}{{\mathbf x}}
\new{\vy}{{\mathbf y}}
\new{\vz}{{{\mathbf z}_\n}}
\new{\kk}{k}
\new{\kkx}{K_x}
\new{\ka}{\kappa_1}
\new{\kader}{\kappa_2}
\new{\da}{{\delta_2}}
\new{\db}{{\delta_3}}
\new{\dder}{{\delta_1}}
\new{\pr}{\eta}
\new{\matD}{{\mathrm D}}
\new{\matL}{{\mathrm L}}
\new{\matK}{{\mathrm K}}
\new{\matZ}{{\mathrm Z}}
\new{\err}{{\mathcal E}}
\new{\errm}{{\mathcal E}_{n,\tau}}
\new{\emp}{\widehat{{\mathcal E}}}
\new{\hh}{{\mathcal H}}
\new{\ldue}{L^2(\X,\marg)}
\new{\lduen}{L^2(\X,\rho_\vx)}
\new{\la}{\lambda}
\new{\Vh}{V}
\new{\Vhn}{{\hat{V}}}
\new{\Ik}{I_\kk}
\new{\IIk}{I^*_\kk}
\new{\Tx}{T_n}
\new{\Sx}{\hat{S}}
\new{\Sxa}{\hat{S}^\act}
\new{\T}{T}
\new{\Kx}{{K_{\mathbf x}}}
\new{\K}{K}
\new{\pen}{\text{\sf pen}}
\new{\vre}{\vf_\rho}
\new{\re}{f_\rho}
\new{\X}{\mathcal X}
\new{\Z}{\mathcal Z}
\new{\Y}{\mathcal Y}
\new{\I}{{R_\rho}}
\new{\Izn}{{\hat{R}^{\pone_\n}}}
\new{\Iznon}{{\hat{R}^\pone}}
\new{\act}{A}
\new{\EL}{\err^{\theta}}
\begin{document}

\title{Nonparametric Sparsity and Regularization}

\author{Lorenzo Rosasco$^1$ \and Silvia Villa$^2$ \and Sofia Mosci $^3$ \and Matteo Santoro $^4$ \and Alessandro Verri $^5$}

\maketitle

\begin{center}
$^1$ CBCL, McGovern Institute, 
       Massachussets Institute of Technology, USA\\ and Istituto Italiano di Tecnologia, ITALY, lrosassco@mit.edu\\
$^2$ Istituto Italiano di Tecnologia, ITALY, silvia.villa@iit.it\\
$^3$ DIBRIS,         University of Genova,  ITALY, sofia.mosci@unige.it\\
$^4$ Istituto Italiano di Tecnologia, ITALY, matteo.santoro@iit.it\\
$^5$ DIBRIS,         University of Genova,  ITALY, alessandro.verri@unige.it
\end{center}

\maketitle

\begin{abstract}%   <- trailing '%' for backward compatibility of .sty file
 In this work we are interested in the problems of  supervised learning and variable selection 
 when the input-output dependence  is described by a  nonlinear function depending on a  few variables.  
 Our goal is to consider a sparse nonparametric model, hence  avoiding linear or additive models. 
The key idea is to measure the importance of each variable in the model by making use of partial derivatives.
 Based on this intuition we propose a new notion of nonparametric sparsity and a corresponding  
 least squares regularization scheme. Using concepts and results  from the theory of reproducing 
 kernel Hilbert spaces and proximal methods, we show that the proposed learning algorithm corresponds
 to a minimization problem which can be provably solved by an iterative procedure. 
 The consistency properties of the obtained estimator are studied both in terms of prediction and 
 selection performance. An extensive empirical analysis shows that the proposed method
 performs favorably with respect to the state-of-the-art methods.\\
 \textbf{Keywords:}
Sparsity,  Nonparametrics, Variable selection, Regularization, Proximal methods, RKHS
\end{abstract}

\section{Introduction}
 
It is now common to see practical applications, for example in bioinformatics and computer vision,  where the dimensionality of the data  
is in the order of hundreds, thousands  and even tens of thousands.
It is known that learning  in such a  high dimensional regime is feasible only if the quantity to be estimated 
satisfies some regularity assumptions \cite{degylu96}.
In particular, the idea behind, so called,  {\em sparsity} is that the quantity  
of interest depends only on a  few relevant variables (dimensions). In turn,  this latter assumption is often at the basis
of the construction of interpretable data models, since the relevant dimensions 
allow for a compact, hence interpretable,  representation. An instance of the above situation is the problem of 
learning from samples a multivariate function  which depends only on a (possibly small) subset of {\em relevant} variables.
Detecting such  variables is the problem of variable selection. 

Largely motivated by recent advances in compressed sensing \cite{carota06,do06},   the above problem has been extensively studied 
under the assumption that the function of interest (target function) depends {\em linearly} 
to the relevant variables. 
While a naive approach (trying all possible subsets of variables) would not be computationally 
feasible it is known that meaningful approximations can be found either by greedy methods \cite{Tropp07},  or 
convex relaxation  ($\ell^1$ regularization a.k.a. basis pursuit or LASSO \cite{Tibshirani96, chen1999, efron04}). 
In this context efficient algorithms (see \cite{schmidt2007,loris09} and references therein) 
as well as  theoretical guarantees are now available
(see  \cite{buhvan11} and references therein).
In this paper we are interested into the situation where the target function  depends {\em non-linearly} 
to the relevant variables.  This latter situation is much less understood. 
Approaches in the literature are mostly restricted to additive models  \cite{hastie1990}. In such models the target function is assumed to be a sum of (non-linear) 
univariate functions. Solutions to the problem of variable selection in this class of models 
include \cite{ravikumar2008} and are related to  multiple kernel learning \cite{bach04}.  
Higher order additive models can be further considered,   
encoding explicitly dependence among the variables -- for example assuming  the target function to be  also  sum of   
functions  depending on  couples, triplets etc. of variables, as in \cite{lin2006} and  \cite{bach09}. Though this approach 
provides a more interesting, while still interpretable, model,  its size/complexity is essentially  more than exponential in the initial variables.
Only a few works, that we discuss in details in Section \ref{Sec:relwor},  have considered notions of sparsity beyond additive models.

In this paper, we  propose a new approach  based  on the idea that the importance of a variable,  while learning 
a non-linear functional relation, can be captured by the corresponding partial derivative. This observation suggests a way to define a new notion of nonparametric sparsity and a corresponding  regularizer which favors functions where most partial derivatives are  essentially zero.  The question is how to make this intuition  precise and how to derive a  feasible computational learning scheme.
The first observation is  that, while we cannot measure a partial  derivative {\em everywhere},  we can do it at the training set points and hence design 
a data-dependent regularizer.  In order to derive an actual algorithm we have to consider two further issues: How can we estimate reliably
partial derivatives in high dimensions? How can we ensure that the data-driven penalty is sufficiently stable?
The theory of reproducing kernel Hilbert spaces (RKHSs) provides us with tools to answer both questions.
In fact, partial derivatives in a RKHS  are bounded linear functionals and hence have a suitable representation that 
allows  efficient computations. Moreover, the norm in the RKHS provides a natural further regularizer
ensuring stable behavior of the empirical, derivative based  penalty.
 Our contribution is threefold.
 First, we propose a new notion of sparsity and discuss a corresponding regularization scheme using concept from the theory of reproducing kernel Hilbert spaces.
 Second, since the proposed algorithm corresponds to the minimization of a convex, but not differentiable functional, we 
develop a suitable optimization procedure relying on forward-backward splitting and proximal methods. 
Third, we study properties of the proposed methods both in theory, in terms of statistical consistency, and in practice, by means of an extensive set of experiments.

Some preliminary  results  have appeared in a short conference version of this paper \cite{rosasco2010nvs}.
With respect to the conferecen  version, the current version contains:  the detailed discussion of the derivation 
of the algorithm with all the proofs, the consistency results of Section 4, an augmented set of experiments and several  further discussions.
The paper  is organized as follows. In section \ref{sec:approach} we discuss our approach and present the main results in the paper.
In Section \ref{sec:computation} we discuss the computational aspects of the method. In Section \ref{sec:cons} we prove consistency results. In Section \ref{sec:emp_val}
we provide an  extensive empirical analysis. Finally in Section \ref{sec:disc} we conclude with a summary of our study and a discussion of future work.

%%%%%%%%%%%%%%%%%%%%%%%%%%%%%%%%%%%%%%
\section{Problem Setting and Previous Work}\label{Sec:relwor}
 %%%%%%%%%%%%%%%%%%%%%%%%%%%%%%%%%%%%%%
Given a training set  $\vz =(\vx,\vy)=(x_\ii, y_\ii)_{i=1}^n$ of input output pairs, 
with $x_\ii\in \X \subseteq \rone^\dd$ and $y_\ii \in\Y \subseteq \rone$, 
we are interested  into learning about the functional relationship between input and output. 
More precisely, in statistical learning the data are assumed  to be sampled identically and independently from a probability 
measure $\rho$ on $\X\times\Y$ so that if we measure the error by the square loss function, the regression 
function $f_\rho (x)=\int yd\rho(x,y)$ minimizes the expected risk $\err(f)=\int (y-f(x))^2d\rho(x,y)$.\\
Finding an estimator $\hat{f}$ of $f_\rho$ from finite data is possible, if $f_\rho$ satisfies some suitable prior assumption \cite{degylu96}. 
In this paper we are interested in the case where the regression function is {\em sparse} in the sense that it depends 
only  on a subset $R_\rho$ of  the possible $\dd$ variables. Estimating the set $R_\rho$ of  {\em relevant} variables is the problem of variable selection.

\paragraph{Linear and additive models}

The  sparsity requirement  can be made precise considering linear functions  $f(x)=\sum_{\jj=1}^\dd\beta_\jj x^\jj$ with $x=(x^1, \dots, x^\dd)$. In this case the 
sparsity of a function is  quantified by the so called {\em zero-norm} $\Omega_0(f)=\#\{\jj=1, \dots, \dd~|~\beta_\jj\neq 0\}$.
The zero norm,  while natural for variable selection,  does not lead to efficient algorithms and 
is often replaced by the $\ell^1$ norm, that is $\Omega_1(f)=\sum_{\jj=1}^{\dd}|\beta_\jj|$.  
This approach has been studied extensively and is now fairly  well understood, see \cite{buhvan11} and references therein. Regularization
with $\ell^1$ regularizers, obtained by minimizing 
$$
\emp(f)+\la \Omega_1(f),\quad \quad \emp(f)= \frac{1}{\n}\sum_{\ii=1}^\n(y_\ii - f(x_\ii))^2, 
$$
can be solved efficiently and,  under suitable conditions, provides a solution 
close to that of the zero-norm regularization. 

The above scenario can be generalized 
to additive models  $f(x)=\sum_{\jj=1}^\dd f_\jj(x^\jj)$, where $f_\jj$ are univariate functions in some (reproducing kernel) 
Hilbert spaces $\hh_\jj$, $\jj=1, \dots, \dd$. In this case the analogous 
of the zero-norm and the $\ell^1$ norm are  $\Omega_0(f)=\#\{\jj\in\{1,\dots,\dd\}~:~\nor{f_\jj}\neq 0\}$ 
and $\Omega_1(f)=\sum_{\jj=1}^{\dd}\nor{f_\jj}$, respectively.
This latter setting, related to multiple kernel learning \cite{bach04,bach08},  has been considered for example 
in \cite{ravikumar2008},  see also \cite{kolyua10} and references therein.
Considering additive models limits the way in which the variables can interact.
This can be partially alleviated considering higher order terms in 
the model as it is done  in ANOVA decomposition \cite{wahba1995,gu2002}.
More precisely, we can add to the simplest additive model functions of couples $f_{\jj,\jjj}(x^\jj,x^\jjj)$, triplets $f_{\jj,\jjj, \jjjj}(x^\jj,x^\jjj, x^\jjjj)$, etc. of variables -- see \cite{lin2006}. 
For example one can consider functions of the form 
$f(x)=\sum_{\jj=1}^\dd f_\jj(x^\jj)+\sum_{\jj<\jjj} f_{\jj,\jjj}(x^\jj,x^\jjj)$. In this case the analogous to the
zero and $\ell^1$ norms are  $\Omega_0(f)=\#\{\jj=1, \dots, \dd~:~\nor{f_\jj}\neq 0\} +
\#\{(\jj,\jjj) ~:~\jj<\jjj, ~ \nor{f_{\jjj,\jjjj}}\neq 0\}$ and $\Omega_1(f)=\sum_{\jj=1}^{\dd}\nor{f_\jj}+\sum_{\jj<\jjj}\nor{f_{\jj,\jjj}}$, respectively.
Note that in this case sparsity will not be in general with respect to the original variables but rather with respect 
to the elements in the additive model.
Clearly,  while this approach  provides a more interesting and yet interpretable model, its size/complexity is essentially more than exponential in the number of  variables. 
Some proposed attempts to tackle this problem are based on  restricting the set of  allowed sparsity patterns and  can be found in \cite{bach09}. 
 
\subsection{Nonparametric approaches}
 The above discussion naturally raises the  question:\\
 {\em What if we are interested into learning and performing variable selection when the functions of interest are not described by an additive model?}

Few papers have considered this question. Here we discuss  in some more 
details \cite{lafferty08,bertin08,MilHal10}, \cite{comminges2011},  to which we also refer for further references.\\
The first three papers \cite{lafferty08,bertin08,MilHal10} follow similar approaches focusing on the  point-wise estimation 
of the regression function and of the relevant variables. The basic idea is to start from a locally linear (or polynomial) point wise estimator $f_n(x)$
at a point $x$  obtained from the minimizer of 
\begin{equation}\label{local risk}
\frac 1 n \sum_{i=1}^n (y_i- \scal{w}{x_i-x}_{\rone^d})^2K_H(x_i-x)
\end{equation}
where $K_H$ is a localizing window function depending on a matrix (or a vector) $H$ of smoothing parameters.
Different techniques are used to (locally) select variables.
In the RODEO algorithm \cite{lafferty08}, the localizing window function depends on 
one smoothing parameter per variable and the partial derivative of the local estimator with respect to the 
smoothing parameter is used to select variables. In \cite{bertin08}, selection is considering a {\em local lasso}, that is 
 an $\ell_1$ to  the local empirical risk functional~\eqref{local risk}. In the LABAVS algorithm discussed in 
\cite{MilHal10} several variable selection criterion are discussed including the local lasso, hard thresholding, 
and backward step wise approach. 
The above approaches typically leads to cumbersome computations and 
do not scale well with the dimensionality of the space and with the number of relevant variables.\\
Indeed, in all the above works the emphasis is in the theoretical analysis quantifying  the estimation errors of the proposed methods.
It is shown in  \cite{lafferty08} that the RODEO algorithm is a nearly optimal pointwise estimator of the regression function, 
under assumption on the marginal distribution and the regression functions.
These results are further improved in \cite{bertin08} where optimal rates are derived under milder assumptions 
and sparsistency (the recovery of $R_\rho$) is also studied. Uniform error estimates are derived in \cite{MilHal10} 
(see Section 2.6 in \cite{MilHal10} for further discussions and comparison).
More recently, an estimator  based on the comparison of some well chosen empirical Fourier 
coefficients to a prescribed significance level is described and  studied in \cite{comminges2011}
where a careful statistical analysis is proposed  considering different regimes for $n, d$ and $d^*$, where
$d^*$ is the cardinality of $R_\rho$. Finally, in a slightly different context,  \cite{depewo11} studies the related problem of determining the number of function values 
at adaptively chosen points that are needed in order to correctly estimate the set of globally relevant variables.

\section{Sparsity Beyond linear Models}\label{sec:approach}
%%%%%%%%%%%%%%%%%%%%%%%%%%%%%%%%%%%%%%%%%%%%
In this section we present our approach and summarize our main contributions. 

 \subsection{Sparsity and Regularization using  Partial Derivatives}\label{sec:main_explain}

Our study starts from the observation that, if a  function $f$ is differentiable, the relative importance of
a variable at a point $x$ can be captured by the magnitude of the  corresponding partial 
derivative\footnote{In order for the partial derivatives to be defined at all points we always assume 
that the closure of $\X$ coincides with the closure of its interior.}
$$\left|\frac{\partial f}{\partial x^\jj}\right|.$$
This observation can be developed into a new notion of sparsity and corresponding regularization scheme that we study in the rest of the paper.
We note, that tegularization using derivatives is not new. Indeed, the classical splines (Sobolev spaces) regularization \cite{wahba1990}, as well as more modern techniques such as manifold 
regularization \cite{BelNiy08} use derivatives to measure the regularity of a function. Similarly total variation regularization utilizes derivatives 
to define regular function.
None of the above  methods though allows to capture a  notion of sparsity suitable both for learning and variable selection-- see Remark \ref{rem:der}.

Using partial derivatives to define a new notion of a sparsity and design a regularizer for learning and variable selection requires considering the following two  issues.
First,  we need to quantify  the relevance of a variable beyond a single input point to define a proper (global) notion of sparsity.
If the partial derivative is  continuous
\footnote{In the following, see Remark \ref{smoothAss},  we will see that further appropriate regularity properties on $f$ are needed 
depending on whether the support of $\marg$ is connected or not.} 
then a natural idea is to consider 
\beeq{cont-penalty}{
% \nor{\der_jf}_{\rho}=
\left\|\frac{\partial f}{\partial x^\jj}\right\|_{\marg}=
 \sqrt{ \int_{\X}\left(\frac{\partial f(x)}{\partial x^\jj}\right)^2d\marg(x)}.
 } 
where $\marg$ is the marginal probability measure of $\rho$ on $X$. While considering other $L^p$ norms 
is possible,  in this paper we restrict our attention to $L^2$.
A notion of nonparametric  sparsity for a smooth, non-linear function $f$  is captured by the following functional
\beeq{nonparzero}{
\Regze(f)=\#\left \{\jj=1, \dots, \dd~:~\left\|\frac{\partial f}{\partial x^\jj}\right\|_{\marg} \neq 0\right \},
}
and the corresponding  relaxation is 
 $$
\Reg(f)= \sum_{\jj=1}^{\dd} \left\|\frac{\partial f}{\partial x^\jj}\right\|_{\marg}.
$$
The above functionals encode the notion of sparsity that we are going to consider. 
While for linear models, the above definition subsumes the classic notion of sparsity, the above definition 
is non constrained to any (parametric) additive model.

Second, since   $\marg$ is only known through the training set, 
to obtain a practical algorithm  we  start by replacing the $L^2$ norm with an empirical version
$$
\left\|\frac{\partial f}{\partial x^\jj}\right\|_{n}=\sqrt{\frac{1}{\n}\sum_{i=1}^\n \left(\frac{\partial f (x_\ii)}{\partial x^\jj} \right)^2 }
$$
and  by replacing \eqref{cont-penalty} by the data-driven regularizer,
\beeq{penalty}{
\Regn(f)= \sum_{\jj=1}^{\dd} \left\|\frac{\partial f}{\partial x^\jj}\right\|_{\n}.
}
While the above quantity is a natural estimate of \eqref{cont-penalty} in practice it might not be sufficiently stable 
to ensure good function estimates where data are poorly sampled. In the same spirit of manifold 
regularization \cite{BelNiy08},  we then propose to further consider functions in  a reproducing 
kernel Hilbert space (RKHS) defined by a differentiable  kernel and use the penalty, 
$$
\Regn(f)+\taumu\norh{f}^2,
$$
where $\nu$  is a small positive number. The latter terms ensures stability while making the regularizer strongly convex.
This latter property is a key for well-posedeness and generalization, as we discuss in Section \ref{sec:cons}.  
As we will see in the following, RKHS will also be a key tool allowing computations of partial derivative 
of potentially high dimensional functions.

The final  learning algorithm is given by the minimization of the functional 
\begin{equation}\label{algo_init}
 \frac{1}{\n}\sum_{i=1}^\n(y_\ii - f(x_\ii))^2+\pone \left( \sum_{\jj=1}^{\dd} \left\|\frac{\partial f}{\partial x^\jj}\right\|_{n}+\taumu\norh{f}^2\right).
\end{equation}
The remainder of the paper is devoted to the  analysis of  the above regularization algorithm.
Before summarizing our main results we add two  remarks.

\begin{figure}[t]
\begin{center}
\caption{\label{fig:norms} 
%\note{sofi:update?}
Difference between  $\ell^1/\ell^1$ and $\ell^1/\ell^2$  
norm for binary matrices (white = 1, black=0), 
where in the latter case the $\ell^1$ norm is taken over the rows (variables) and the $\ell^2$ norm over the columns (samples). 
The two matrices have the same number of  nonzero entries, 
and thus the same $\ell^1/\ell^1$ norm,  but the value of the $\ell^1/\ell^2$ norm is smaller for the matrix on the right, 
where the nonzero entries are positioned to fill a subset of the rows.
The situation on the right is thus favored by $\ell^1/\ell^2$ regularization. }
\includegraphics[width = \linewidth]{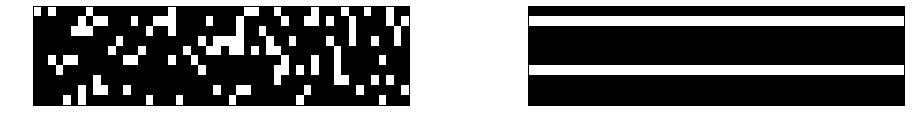}
\end{center}
\end{figure}
\begin{remark}[Comparison with Derivative Based Regulrizers]\label{rem:der}
It is perhaps useful to remark the difference between the regularizer we propose and other derivative based regularizers.
We start by considering 
$$
 \sum_{\jj=1}^{\dd} \left\|\frac{\partial f}{\partial x^\jj}\right\|_{n}^2= \frac{1}{\n}\sum_{\ii=1}^\n  \sum_{\jj=1}^{\dd}
 \left(\frac{\partial f (x_\ii)}{\partial x^\jj} \right)^2=  \frac{1}{\n}\sum_{\ii=1}^\n \nor{\nabla f(x_\ii)}^2, 
$$
where $\nabla f(x)$ is the gradient of $f$ at $x$.
This is essentially a data-dependent version of the classical penalty in Sobolev spaces which writes $\int \nor{\nabla f(x)}^2dx$, where the uniform (Lebesgue) measure is considered. It is well known that while this regularizer measure the smoothness it does not yield any sparsity property. 
 A different derivative based regularizer is given by 
 $
 \frac{1}{\n}\sum_{i=1}^\n  \sum_{\jj=1}^{\dd}
 \left|\frac{\partial f (x_\ii)}{\partial x^\jj} \right|.
 $
Though this  penalty (which we call $\ell^1/\ell^1$) favors sparsity, it only forces partial derivative at points to be zero.
In comparison the regularizer we propose is of the $\ell^1/\ell^2$ type and utilizes 
the square root to ``group'' the values of each partial derivative at different points hence favoring functions for which 
each partial derivative is small at most points. The difference between penalties is illustrated in Figure \ref{fig:norms}.
Finally note that we can also consider  
$\frac{1}{\n}\sum_{\ii=1}^\n \nor{\nabla f(x_\ii)}. $
This regularizer, which is akin to the total variation regularizer $\int \nor{\nabla f(x)}dx$, groups the partial derivatives  differently 
and favors functions with localized singularities rather than  selecting variables.
\end{remark}
\begin{remark}\label{smoothAss} As it is clear from the previous discussion, we quantify the importance of a variable based on the norm of the corresponding partial derivative. This approach makes sense only if
\beeq{teofondcal}{\nor{\frac{\partial f}{\partial x_\jj}}_{\marg}=0\ \Rightarrow \ f \text{ is constant with respect to $x_\jj$.}}
The previous fact  holds trivially if we assume the function $f$ to be continuously differentiable (so that the derivative is pointwise defined, and is a continuous function) and  $\text{supp}\marg$ to be connected. If the latter assumption is not satisfied the situation is more complicated, as the following example shows. Suppose that $\marg$ is the uniform distribution on the disjoint intervals $[-2,-1]$ and $[1,2]$, and $\Y=\{-1,1\}$. Moreover assume that $\rho(y|x)=\delta_{-1},$ if $x\in[-2,-1]$ and $\rho(y|x)=\delta_{1},$ if $x\in[1,2]$. Then, if we consider the regression function 
\[
f(x)=\begin{cases} -1 & \text{if } x\in[-2,-1]\\
1 & \text{if } x\in[1,2]
\end{cases} 
\]
we get that $f'(x)=0$ on the support of $\marg$, although the variable $x$ is relevant. To avoid such pathological situations when $\text{supp}\marg$ is not connected in 
$\mathbb{R}^\dd$ we need to impose more stringent regularity assumptions that basically imply that a function which is constant on a open interval is constant everywhere. This is verified when $f$ belongs to the RKHS defined by a polynomial kernel, or, more generally, an  analytic kernel such as the Gaussian kernel.
\end{remark}

\subsection{Main Results} 
%%%%%%%%%%%%%%%%%%%%%%%%%%%%%%%%%%%%%%%%%

We summarize our main contributions.
\begin{enumerate}
\item Our main contribution is the analysis of the minimization of \eqref{algo_init} and the derivation of a provably convergent iterative  optimization 
procedure. We begin by extending the  representer theorem  \cite{wahba1990} and show that the minimizer of \eqref{algo_init} 
has the  finite dimensional representation 
$$
\fz(x)= 
  \sum_{\ii=1}^n \frac 1 \n \alpha_\ii k(x_\ii,x)+
  \sum_{\ii=1}^n \sum_{\jj=1}^\dd  \frac 1 \n \beta_{\jj\ii} 
  \left.\frac{\partial k(s,x)}{\partial s^\jj}\right|_{s=x_\ii},
$$
  with $\alpha, (\beta_{\jj\ii})_{\ii=1}^\n\in \rone^\n$ for all $\jj=1, \dots, \dd$.
Then, we show that the coefficients in the expansion  can be computed using forwards-backward splitting and proximal methods
\cite{combettes,beck09}. 
More precisely, we present a fast forward-backward splitting algorithm, in which 
 the proximity operator does not admit a closed form and  is thus computed in an approximated way.
%In particular,  we show that the computation of the proximal mapping
%amounts to the computation of a projection on a suitable set, 
%unfortunately not admitting a closed form. 
%We will show that the projection can be approximated via the 
%basic  forward-backward splitting algorithm.
Using recent results for proximal methods with approximate proximity operators, we are able to prove 
convergence (and convergence rates) for the overall procedure.
The resulting algorithm  requires  only  matrix multiplications and thresholding operations 
and is in terms of the coefficients $\alpha$ and $\beta$
and  matrices given by the kernel and its first and second derivatives
evaluated at the training set points.\\

\item We study the consistency properties of the obtained estimator. 
We prove that, if the kernel we use is universal, then there exists a choice of $\pone = \pone_\n$
depending on $n$  such that 
the algorithm is universally consistent  \cite{stechr08}, that is 
\[
 \lim_{\n\to\infty}\Prob{\err(\fzn)-\err(\re)>\eps}=0%\qquad \qquad  \lim_{\n\to\infty}\norh{\fz-\re}=0
 \] 

for all $\eps>0$.
Moreover,  we study the  selection properties of the algorithm and prove that, if 
$\I$ is the set of relevant variables and $\Izn$ the set estimated by our algorithm, then the following
consistency result holds
$$
\lim_{\n\to\infty} \Prob{\Izn \subseteq \I} = 1.
$$

\item Finally we provide an extensive empirical analysis both on simulated and benchmark data,
showing that the proposed algorithm (DENOVAS)  compares favorably and often outperforms
other algorithms. This is particularly evident when the function to be estimated is highly non linear.
The proposed method can take advantage of working in a rich, possibly infinite dimensional,
hypotheses  space given by a RKHS, to obtain better estimation and selection properties.
This is illustrated  in Figure \ref{fig:radial_intro}, 
where the regression function is a  nonlinear function of 2 of  20 possible input variables. 
With 100 training samples the algorithms we propose is the  only one able to correctly solve the problem 
among different linear and non linear additive models.
On real data our method outperforms  other methods on several data sets. In most cases, the performance 
of our method and regularized least squares (RLS) are similar. However our method  brings higher interpretability since it is able to 
select a smaller subset of relevant variable, while the estimator provided by RLS depends on all  variables.\\

\end{enumerate}

\begin{figure}[t]
\caption{
 Comparison of predictions for a radial function of 2 out of 20 variables (the 18 irrelevant variables are not shown in the figure). 
In the upper left plot is depicted the value of the function on the test points (left),
the noisy training points (center), the values predicted for the test points by our method (DENOVAS)  (right).
The bottom plots represent the values predicted for the test points by 
state-of-the-art algorithms based on additive models.
Left: Multiple kernel learning based on additive models using kernels. 
Center: COSSO, which is a higher order additive model based on ANOVA decomposition \cite{lin2006}.
Right: Hierarchical kernel learning  \cite{bach09}. }
\label{fig:radial_intro}
	 \begin{center}
		  \includegraphics[width=.8\linewidth]{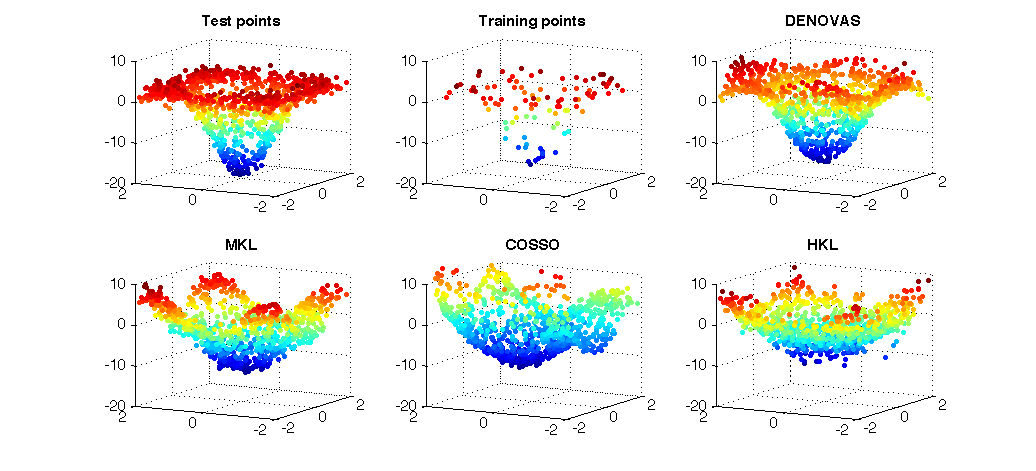}
	  \end{center}
\end{figure}

%%%%%%%%%%%%%%%%%%%%%%%%%%%%%%%%%%%%%%%%%%%%
\section{Computational Analysis}\label{sec:computation}
%%%%%%%%%%%%%%%%%%%%%%%%%%%%%%%%%%%%%%%%%%%%
In this section we study the minimization of the functional \eqref{algo_init}.

%%%%%%%%%%%%%%%%%%%%%%%%%%%%%%%%%%%%%%%%%%%%
\subsection{Basic Assumptions}
%%%%%%%%%%%%%%%%%%%%%%%%%%%%%%%%%%%%%%%%%%%%

We first begin by listing some basic conditions that we assume to hold throughout the paper.

We let $\rho$ be a probability measure on $\X\times\Y$ with  $\X\subset \rone^\dd$ and $\Y \subseteq \rone$. 
A training set  $\vz =(\vx,\vy)=(x_\ii, y_\ii)_{i=1}^n$ is a sample from $\rho^n$.
We consider a reproducing kernel $K:\X\times\X\to \rone$ \cite{aron50} and the associated reproducing kernel Hilbert space.
We assume $\rho$ and $K$ to satisfy the following assumptions.
\begin{itemize}
\item[\bf{[A1]}] {\it There exists $\ka<\infty$ such that $\sup_{x\in X} \norh{t\mapsto k(x,t)} < \ka.$} 

\item[ \bf{[A2]}] {\it The kernel $k$ is $\mathcal{C}^2(\X\times \X)$  and there exists $\kader<\infty$ such that for all $\jj=1,\dots,\dd$ we have
$\sup_{x\in X}\norh{t\mapsto \left. \frac{\partial k(s,x)}{\partial s^\jj}\right|_{s=t}} < \kader$\,.}

\item[\bf{[A3]}] {\it There exists $M<\infty$ such that $\Y\subseteq [-M,M].$}
\end{itemize}

%%%%%%%%%%%%%%%%%%%%%%%%%%%%%%%%%%%%%%%%%%%%
\subsection{Computing the regularized solution}
%%%%%%%%%%%%%%%%%%%%%%%%%%%%%%%%%%%%%%%%%%%%

We start our analysis discussing how to compute  efficiently a regularized solution of the functional

\begin{equation}\label{algo_computation}
\emp^{\pone}(f):=\frac{1}{\n}\sum_{i=1}^\n(y_\ii - f(x_\ii))^2+\pone \left(2\Regn(f)+\taumu\norh{f}^2\right),
\end{equation}
where $\Regn(f)$ is defined in \eqref{penalty}.
We start observing that the term  $\norh{f}^2$ makes the above functional coercive and strongly convex 
with modulus\footnote{We say that a function $\mathcal{E}:\hh\to\mathbb{R}\cup\{+\infty\}$ is: \begin{itemize}\item {\em coercive} if $\lim_{\nor{f}\to +\infty}{\mathcal{E}(f)}/{\nor{f}}=+\infty$; \item {\em strongly convex of modulus $\mu$} if $\mathcal{E}(t f+(1-t)g)\leq t\mathcal{E}(f)+(1-t)\mathcal{E}(g)-\frac{\mu}{2}t(1-t)\nor{f-g}^2$ for all $t\in[0,1]$.\end{itemize}} $\pone\taumu/2$,
so that standard results (\cite{ekeltem}) ensures existence and uniqueness of a minimizer $\fz$, for any $\nu>0$.

The rest of this section is divided into two parts. First we show how the theory of RKHS \cite{Aronszajn:1950} allows
to compute derivatives of functions on high dimensional spaces and also to derive a new representer theorem
that allows to deal with finite dimensional minimization problems. Second we discuss how to apply 
proximal methods \cite{combettes,beck09} to derive an iterative optimization procedure for which we can prove convergence.
It is possible to see that  the solution of Problem \eqref{algo_computation}
can be written as 
 \beeq{finite_rep_init}{
  \fz(x)= \sum_{\ii=1}^n \frac 1 \n \alpha_\ii k_{x_\ii}(x)+
  \sum_{\ii=1}^n \sum_{\jj=1}^\dd  \frac 1 \n \beta_{\jj,\ii} (\derh_\jj k)_{x_\ii}(x),
  }
 where  $\alpha, (\beta_{\jj,\ii})_{\ii=1}^\n\in \rone^\n$ for all $\jj=1, \dots, \dd$ $k_x$ is the function $t\mapsto k(x,t)$, and   $ (\derh_\jj k)_x$ denotes partial 
 derivatives of the kernel, see \eqref{der_repres}.
 The main outcome of our analysis  is that the coefficients $\alpha$ and $\beta$ can be provably computed through an iterative procedure.
To describe the algorithm we need some notation.
%First, if $\gamma\in\Gamma$ and $\gamma'\in\Gamma'$ are shortcuts for (possibly) multiple  indices,
%for any $|\Gamma|\times|\Gamma'|$ matrix $\mathrm{M}$, 
%  we denote its element $(\gamma,\gamma')$ with 
%$\mathrm{M}_{\gamma,\gamma'}$ -- with $\gamma$ and $\gamma'$ separated by a comma --
%for all $\gamma\in\Gamma, \gamma'\in\Gamma'$.
For all $\jj,\jjj = 1,\dots,\dd$, we define the $\n\times\n$ matrices $\matK, \matZ_\jj, \matL_{\jj,\jjj}$ 
%$\matK\in\mathrm{M}^{\n,\n}, \matZ_\jj\in\mathrm{M}^{\n,\n}, \matL_\jj\in\mathrm{M}^{\n,\n\dd}$, 
%of size $\n\times\n,\n\times\n$ and $\n\times\n\dd$ respectively,
as
\beeq{defK}{
\matK_{\ii,\iii}=\frac{1}{\n}k(x_\ii,x_\iii),
}
\beeq{defZ}{
[\matZ_\jj]_{\ii,\iii}=
\frac{1}{\n}\left.\frac{\partial k(s,x_\iii)}{\partial s^\jj}\right|_{s=x_\ii},
}
and
\[
[\matL_{\jj,\jjj}]_{\ii,\iii}=
\frac{1}{\n} \left.\frac{\partial^2 k(x,s)}{\partial x^\jj\partial s^\jjj}\right|_{x=x_\ii,s=x_\iii}\]
for all $\ii,\iii = 1,\dots,\n$.
%\beeq{defL}{
%\matL_\jj = \left(\!\!|\begin{array}{ccc}
%\matL_{\jj,1}&\dots &\matL_{\jj,\dd}
%\end{array}\!\!\right)
%\qquad
%\text{such that}\qquad
%[\matL_{\jj,\jjj}]_{\ii,\iii}=
%\frac{1}{\n} \left.\frac{\partial^2 k(x,s)}{\partial x^\jj\partial s^\jjj}\right|_{x=x_\ii,s=x_\iii}.
%}
Clearly the above quantities can be easily computed as soon as we have an explicit expression of the kernel, see Example \ref{DerExample} in Appendix \ref{tools_rkhs}.
We introduce also the  $\n\times\n\dd$ matrices
\[
\matZ = (\matZ_1,\dots,\matZ_\dd)\]
\beeq{defL}{\matL_\jj =(\matL_{\jj,1},\dots,\matL_{\jj,\dd}) \qquad \forall\jj=1,\dots,\dd}
and  the  $\n\dd\times\n\dd$ matrix 
$$
\matL = \left(\begin{array}{ccc}
\matL_{1,1}&\dots &\matL_{1,\dd}\\
\dots&\dots &\dots\\
\matL_{\dd,1}&\dots &\matL_{\dd,\dd}
\end{array}\right)
=\left(\begin{array}{c}
\matL_{\jj}\\
\dots\\
\matL_{\dd}
\end{array}\right)
$$
%$$\matL \beta =((\matL_1)^T,\dots,(\matL_\dd)^T)^T$$,
%%$\matZ\in \mathrm{M}^{\n,\n\dd}$ and $\matL\in \mathrm{M}^{\n\dd,\n\dd}$ as  
%$\matZ \beta = \sum_{\jj=1}^\dd \matZ_\jj(\beta_{\jj,\ii})_{\ii=1}^\n$ and
%$\matL \beta =((\matL_1\beta)^T,\dots,(\matL_\dd\beta)^T)^T$,
%%$\matL \beta = \sum_{\jj=1}^\dd \matL_\jj(\beta_{\jj,\ii})_{\ii=1}^\n$,
%respectively,
% for all $\beta\in\rone^{\n\dd}$. 
Denote with $\Bn$ the unitary ball in $\rone^\n$,
\beeq{Bn}{
\Bn=\{v\in\rone^\n~| \nor{v}_{\n}\le 1\}.
}
The coefficients in \eqref{finite_rep_init} are obtained through  Algorithm~\ref{algo:DENOVAS}, 
where $\beta$ is considered as a $\n\dd$ column vector $\beta = (\beta_{1,1},\dots,\beta_{1,\n},\dots,\beta_{\dd,1},\dots,\beta_{\dd,\n})^T$.

  \begin{algorithm}[!h]
   \caption{\  }
   \label{algo:DENOVAS}
   \begin{algorithmic}
    \STATE \textbf{Given:} parameters $\pone,\taumu>0$ and step-sizes $\step,\eta>0$\\
    \STATE \textbf{Initialize:} $\alpha^0 = \tilde{\alpha}^1 = 0$, $\beta^0= \tilde{\beta}^1 =0$, $s_1=1$, 
    $\bar{v}^0 = 0$, $\itext=1$\\
    \WHILE{\texttt{convergence not reached}}
    \STATE
    $ \itext = \itext+1$
    	\beeq{update_s}{
		s_{\itext} =\frac{1}{2}\lp1+\sqrt{1+4s_{\itext-1}^2}\rp
    }
         	\beeq{update_tilde}{
	    \tilde{\alpha}^{\itext} = \lp 1 +\frac{s_{\itext-1}-1}{s_{\itext}}\rp  \alpha^{\itext-1} + \frac{1-s_{\itext-1}}{s_{\itext}}\alpha^{\itext-2},\quad
	    \tilde{\beta}^{\itext} = \lp 1 +\frac{s_{\itext-1}-1}{s_{\itext}}\rp  \beta^{\itext-1} + \frac{1-s_{\itext-1}}{s_{\itext}}\beta^{\itext-2},\quad
    	}

     \beeq{update_a}{
    \alpha^\itext= \lp1-\frac{\ptwo \taumu}{\step} \rp \tilde{\alpha}^{\itext}- 
    \frac{1}{\step}\lp \matK\tilde{\alpha}^{\itext}+\matZ\tilde{\beta}^{\itext} -\vy \rp
    }
    \STATE \textbf{set} $ v^0 = \bar{v}^{\itext-1}$, $\itint=0$
    \WHILE{\texttt{convergence not reached}}
   \STATE  $\itint=\itint+1$
    \FOR{$\jj = 1,\dots \dd$\quad}
    \STATE
    \beeq{update_v}{
    v_\jj^{\itint} = \pi_{\frac{\pone}{\step} \Bn}\left(
    	v_\jj^{\itint-1}- \frac{1}{\eta}\left(
		\matL_\jj v^{\itint-1}-\lp\matZ_\jj^T\alpha^\itext + \lp1-\frac{\ptwo \taumu}{\step} \rp\matL_\jj\tilde{\beta}^{\itext}\rp
	    \right)
    \right)
    }

    \ENDFOR
    \ENDWHILE
    \STATE \textbf{set} $\bar{v}^\itext= v^\itint$
    	\beeq{update_b}{
    	 \beta^{\itext}=\lp1-\frac{\ptwo \taumu}{\step} \rp \tilde{\beta}^{\itext}- \bar{v}^{\itext}.
    	}   
    \ENDWHILE
    \RETURN $(\alpha^\itext, \beta^\itext)$
   \end{algorithmic}
  \end{algorithm}

The proposed optimization algorithm consists of two nested iterations, and involves
only matrix multiplications and thresholding operations. 
Before describing its derivation and discussing its convergence properties, 
  we add three  remarks. 
First, the proposed procedure requires the choice of an appropriate
 stopping rule, which will be discussed later, and of the step sizes
  $\step$ and $\eta$. 
  The simple a priori choice $\step = \nor{\matK}+\pone\taumu$, $\eta = \nor{\matL}$ 
  ensures convergence, as discussed in the Subsection \ref{sec:convergence},
   and is the one used in our experiments. 
  Second, the computation of the solution for different
  regularization parameters can be highly accelerated by a simple warm starting
  procedure, as the one  in \cite{hale08}.
%  while computing solutions corresponding to different
%  regularization parameters we use the continuation method suggested
%  in~\cite{hale08}. Starting from a large regularization parameter
%  value, the obtained solution is used to initialize the algorithm for
%  the next smaller regularization parameter value and the same strategy
%  is iterated for all the other  parameter values. We found this warm 
%  restart procedure to greatly improve the  computational requirement
%  needed to calculate  the whole regularization path. 
Finally,
in Subsection \ref{sec:alg_issues} 
we discuss a principled way to select variable using
the norm of the coefficients $(\bar{v}^\itext_\jj)_{\jj=1}^\dd$.
%In fact, while in standard $\ell^1$-based regularization, there is a one-to-one correspondence between relevant variables 
%and regression coefficients, using our approach 
%relevant variables cannot be deduce from $\beta^\itext$. 

\subsection{Kernels, Partial Derivatives and Regularization}
We start discussing how (partial) derivatives can be efficiently computed in RKHSs induced by smooth kernels and hence derive a new representer theorem.
Practical  computation of   the derivatives for a differentiable  functions is often performed 
via finite differences. For functions defined on a high dimensional space  such a procedure becomes 
cumbersome and ultimately not-efficient. RKHSs provide an alternative computational scheme.

Recall that the RKHS associated to a symmetric positive definite function $\kk:\X\times\X\to \rone$
is the unique Hilbert space $(\hh, \scalh{\cdot}{\cdot})$ such that $\kk_x=\kk(x,\cdot)\in \hh$, for all $x\in X$ and 
\begin{equation}\label{repro_prop}
f(x)=\scal{f}{\kk_x}_\hh, 
\end{equation}
for all $f\in \hh,x\in X$.
% In other words the evaluation functionals in a RKHS are continuous and by Ritz theorem REF
%can be represented as an inner product with a suitable representer.
 Property \eqref{repro_prop} 
is called  {\em reproducing property} and $\kk$ is called reproducing kernel  \cite{Aronszajn:1950}.
We recall a few basic facts.
The functions in $\hh$ can be written as pointwise limits of finite linear combinations of the type $\sum_{i=1}^p \alpha_i \kk_{x_i}$, where $\alpha_i\in\rone, x_i \in X$ for all $i$. 
One of the most important results for kernel methods, namely the representer theorem \cite{wahba1990}, 
shows that  a large class of regularized kernel  methods induce estimators that can be written 
as {\em finite} linear combinations of kernels centered at the training set points. 
%This result allows
%to consider  potentially infinite dimensional spaces.
In the following we will make use of the so called {\em sampling operator}, which  returns the values of a function $f \in \hh$ 
at a set of input points $\vx=(x_1, \dots, x_n)$
\beeq{samp_op}{
\Sx:\hh\to \rone^n, \quad \quad  (\Sx f)_i=\scal{f}{\kk_{x_i}}, \quad i=1, \dots, n.
}
 The above operator is linear and bounded if the kernel is bounded-- see Appendix \ref{tools_rkhs}, which is true thanks to Assumption (A1).

Next, we discuss how the theory of RKHS allows efficient derivative computations.
%Towards this end we assume that  $\kk$ is  at least a $\mathcal{C}^2(\X\times \X)$ function and let 
Let
\beeq{der_repres}{
(\derh_\jj k)_x:= \left.\frac{\partial k(s,\cdot)}{\partial s^\jj}\right|_{s=x}
}
be the partial derivative of the kernel with respect to the first variable.
%Then we have the two following facts \chec{as a straightforward corollary (E' un caso particolare piu' che un corollario....)} 
%from the results in  \cite{zhou08}. 
Then, from Theorem 1 in \cite{zhou08} we have that, if $\kk$ is  at least a $\mathcal{C}^2(\X\times \X)$,
 $(\derh_\jj k)_x$ belongs 
to $\hh$ for all $x\in X$ and most importantly 
$$
\frac{\partial f(x)}{\partial x^\jj}=
\scalh{f}{(\derh_\jj k)_x}, 
$$
for $\jj=1, \dots, \dd$, $x\in X$.
It is useful to define the analogous of the sampling operator for derivatives, which 
returns the values of the partial derivative of a function $f \in \hh$
at a set of input points $\vx=(x_1, \dots, x_n)$, 
\beeq{emp_der}{
\dern_\jj:\hh\to\rone^\n,\qquad  (\dern_\jj f )_\ii =  \scal{f}{(\derh_\jj k)_{x_\ii}},
}
where $\jj=1, \dots, \dd$, $i=1, \dots, n$. It is also useful to  define an empirical  gradient operator
$
\gradn:\hh\to (\rone^\n)^\dd$ defined by $\gradn f = (\dern_\jj f)_{\jj=1}^\dd.$
The above operators are linear and bounded, 
since assumption [A2] is satisfied. 
\noindent We refer to Appendix \ref{tools_rkhs} for further details and supplementary results.

Provided with the above results we can prove a suitable generalization of the representer theorem.
 \begin{proposition}\label{lemma:repr_theo}
The minimizer of \eqn{algo_computation} can be written as 
$$
\fz=  \sum_{\ii=1}^\n \frac 1 \n\alpha_\ii k_{x_\ii}+
  \sum_{\ii=1}^\n \sum_{\jj=1}^\dd \frac 1 \n \beta_{\jj,\ii}(\derh_\jj k)_{x_\ii}
 $$
  with $\alpha \in \rone$ and $\beta  \in \rone^{n\dd}$.
 \end{proposition}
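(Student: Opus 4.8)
The plan is to run the standard orthogonality argument behind representer theorems, augmented with the derivative-reproducing property so that it accommodates the nonlinear, nondifferentiable penalty $\Regn$. First I would observe that the objective $\emp^\pone$ depends on $f\in\hh$ only through finitely many bounded linear functionals together with the Hilbert norm. Indeed, the reproducing property gives $f(x_\ii)=\scalh{f}{k_{x_\ii}}$, and Theorem 1 of \cite{zhou08} (applicable since $k$ is $\mathcal{C}^2$ by [A2], which also guarantees $(\derh_\jj k)_{x_\ii}\in\hh$) gives $\partial f(x_\ii)/\partial x^\jj=\scalh{f}{(\derh_\jj k)_{x_\ii}}$. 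Hence the data term $\frac1\n\sum_{\ii=1}^\n(y_\ii-f(x_\ii))^2$ and the penalty $2\Regn(f)=2\sum_{\jj=1}^\dd\|\partial f/\partial x^\jj\|_\n$ are functions solely of the inner products of $f$ against the finite family $\{k_{x_\ii}\}_{\ii=1}^\n\cup\{(\derh_\jj k)_{x_\ii}\}_{\ii=1,\jj=1}^{\n,\dd}$, while the remaining term is $\pone\taumu\norh{f}^2$.

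Next I would set $V=\mathrm{span}\{k_{x_\ii},(\derh_\jj k)_{x_\ii}:\ii=1,\dots,\n,\ \jj=1,\dots,\dd\}$, a finite-dimensional (hence closed) subspace of $\hh$, and write the orthogonal decomposition $f=f_V+f_\perp$ with $f_V\in V$ and $f_\perp\in V^\perp$. Since every $k_{x_\ii}$ and every $(\derh_\jj k)_{x_\ii}$ lies in $V$, we have $\scalh{f}{k_{x_\ii}}=\scalh{f_V}{k_{x_\ii}}$ and likewise $\scalh{f}{(\derh_\jj k)_{x_\ii}}=\scalh{f_V}{(\derh_\jj k)_{x_\ii}}$, so both the data term and $\Regn$ are left unchanged when $f$ is replaced by $f_V$. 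On the other hand $\norh{f}^2=\norh{f_V}^2+\norh{f_\perp}^2\ge\norh{f_V}^2$, with strict inequality unless $f_\perp=0$. Combining these, for any $f$ with $f_\perp\neq0$ one gets $\emp^\pone(f_V)<\emp^\pone(f)$, so any minimizer must satisfy $f_\perp=0$, i.e.\ it belongs to $V$. Existence and uniqueness of the minimizer $\fz$ have already been established from coercivity and strong convexity, so $\fz\in V$ and thus equals a finite linear combination $\sum_{\ii=1}^\n a_\ii k_{x_\ii}+\sum_{\ii=1}^\n\sum_{\jj=1}^\dd b_{\jj,\ii}(\derh_\jj k)_{x_\ii}$; relabelling $\alpha_\ii=\n\,a_\ii$ and $\beta_{\jj,\ii}=\n\,b_{\jj,\ii}$ absorbs the factor $1/\n$ and yields exactly the claimed representation.

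The argument is essentially routine, and the only point needing genuine care is the derivative-reproducing identity: it is what lets $\Regn$, which is a priori a nonlinear functional of the partial derivatives, still depend on $f$ only through the \emph{fixed} finite family $\{(\derh_\jj k)_{x_\ii}\}$. Once [A2] and \cite{zhou08} supply $(\derh_\jj k)_{x_\ii}\in\hh$ together with $\scalh{f}{(\derh_\jj k)_{x_\ii}}=\partial f(x_\ii)/\partial x^\jj$, the nondifferentiability and $\ell^1/\ell^2$ grouping structure of $\Regn$ play no role whatsoever, since projection onto $V$ leaves every relevant inner product invariant. Strong convexity then removes any concern about minimizers lying partly outside $V$, closing the proof.
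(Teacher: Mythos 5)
Your argument is correct and is essentially the paper's own proof: the paper decomposes $f$ orthogonally with respect to the closed subspace $\mathrm{Range}(\Sx^*)+\mathrm{Range}(\gradn^*)$, which coincides with your span $V$ of the $k_{x_\ii}$ and $(\derh_\jj k)_{x_\ii}$, notes that the data term and $\Regn$ depend only on the parallel component while the squared norm splits, and concludes the orthogonal part must vanish. The only difference is cosmetic (you phrase the subspace via spanning vectors rather than ranges of the adjoint sampling and gradient operators, and absorb the $1/\n$ factor by rescaling the coefficients).
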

\noindent The above result is proved in Appendix \ref{tools_rkhs} and shows that the regularized solution
is  determined by the set of $\n+\n\dd$ coefficients $\alpha\in\rone^\n$ and $\beta \in\rone^{\n\dd}$.
We next discuss how such coefficients can be efficiently computed. 
\\

\noindent{\bf Notation}. In the following, given an operator $A$ we denote by $A^*$ the corresponding adjoint operator.
When $A$ is a matrix we use the standard notation for the transpose $A^T = A^*$.

%%%%%%%%%%%%%%%%%%%%%%%%%%%%%%%%%%%%%%%%%%%%
\subsection{Computing the Solution with Proximal Methods}\label{sec:derivation}
%%%%%%%%%%%%%%%%%%%%%%%%%%%%%%%%%%%%%%%%%%%%

The functional $\emp^{\pone}$  is not differentiable, hence its minimization cannot be 
done by simple gradient methods. Nonetheless it has a special structure that allows efficient 
computations using a forward-backward splitting algorithm \cite{combettes}, belonging to the class of the so called proximal methods.  

Second order methods, see for example  \cite{ChaGolMul99}, could also  be used  to solve similar problems.
These methods typically   converge  quadratically and allows accurate computations.
However, they usually  have a high  cost per iteration and hence are  not suitable   for large scale problems,
 as opposed to   first order methods having much lower cost per iteration. 
Furthermore,  in the seminal paper by Nesterov  \cite{nesterov83}  first-order methods with optimal 
convergence rate are proposed \cite{nemirovski1983}. 
First order methods have since become a popular tool to solve  non-smooth problems in machine learning as well as   
signal and image processing, see  for example {\bf FISTA }-- \cite{beck09} and references therein. 
These methods have  proved to be fast and accurate \cite{becker09}, both for  $\ell^1$-based regularization --
see~\cite{combettes},~\cite{daubechies07},~\cite{figueiredo2007},~\cite{Lor2009b} -- 
and more general regularized learning methods -- see for example~\cite{duchi2009},~\cite{mosci2010ecml},~\cite{jenatton10} --.  \\

\paragraph{Forward-backward splitting algorithms}
 The functional $\emp^{\pone}$  is  the sum of the two terms  $F(\cdot)=\emp(\cdot)+\pone\taumu \nor{\cdot}^2_\hh$ and $2\pone\Regn$.
 The first term is strongly convex of modulus $\pone\taumu$ and  differentiable, while the second term  is convex but not differentiable. 
 The minimization of this class  of functionals can be done iteratively using  the  forward-backward (FB) splitting algorithm,
\begin{align}\label{basic_prox}
f^\itext &=
\text{prox}_{\frac{\pone}{\sigma} \Regn}
\Big(
\tilde{f}^\itext-\frac{1}{2\sigma} \nabla F(\tilde{f}^\itext)
\Big)\\
\label{acc_prox}\tilde{f}^\itext&=c_{1,t}f^{\itext-1}+c_{2,t}f^{\itext-2}
\end{align}
where  $f^0=f^1\in \hh$ is an arbitrary initialization, $c_{1,t}, c_{2,t}$ are suitably chosen positive sequences,  and $\mathrm{prox}_{\frac{\pone}{\sigma}\Regn}:\hh\to\hh$ is the proximity operator \cite{Mor65} defined by, 
\[
\mathrm{prox}_{\frac{\pone}{\sigma}\Regn}(f)=\argmin_{g\in\hh}\lp \frac{\pone}{\sigma} \Regn(g)+\frac 1 2 \nor{f-g}^2 \rp.
\] 
The above approach decouples the contribution of the differentiable and not differentiable terms. 
Unlike other simpler penalties used in additive models, such as the $\ell^1$ norm in the lasso, 
in our setting the computation of the proximity operator of $\Regn$ is not trivial and will be discussed in the next paragraph.
Here we briefly recall the main properties  of the iteration \eqref{basic_prox}, \eqref{acc_prox}
depending on the choice of  $c_{1,t},c_{2,t}$ and $\sigma$.
The basic version   of the algorithm \cite{combettes}, sometimes called ISTA (iterative shrinkage thresholding algorithm \cite{beck09}),  is obtained  setting $c_{1,t}=1$ and $c_{2,t}=0$ for all $t>0$, so that  each step depends only on the previous iterate. 
The convergence of the  algorithm for both the objective function values and the minimizers 
%also in the presence of computational errors 
is  extensively studied in \cite{combettes}, but a convergence rate is not provided. 
%The study of convergence rates for algorithms having the form \eqref{basic_prox}, has been performed only when the  proximal point is computed exactly. In the rest of the section we review the available results for the exact implementation of the scheme.
In \cite{beck09} it is shown that  the convergence of the objective function values is of order $O(1/t)$ 
provided that the step size $\step$  satisfies $\step\geq L$, where $L$ is the Lipschitz constant of $\nabla F/2$. 
%For $F$ defined as above, it turns out that $L=\nor{\Sx^*\Sx} +\ptwo\taumu$, and following Section 3.1 in \cite{mosci2010ecml} it turns out that the best a priori chosen step size $\step$ depends only on the smallest and biggest eigenvalues of operator $\Sx^*\Sx$. 
%Precisely, denoting these values as  $\lambda_{\textrm{min}}(\Sx^*\Sx)$ and $\lambda_{\textrm{max}}(\Sx^*\Sx)$, 
%the optimal a priori chosen step size for $\step$ is
%$$
%\step = \frac 1 2 (\lambda_{\textrm{max}}(\Sx^*\Sx )+\lambda_{\textrm{min}}(\Sx^*\Sx))+\ptwo\taumu.
%$$
%In practice the computation of the smallest eigenvalue could be unfeasible. 
%In such cases, the sub-optimal choice $\step = \lambda_{\textrm{max}}(\Sx^*\Sx )+\ptwo\taumu$ represents a reasonable alternative.
%In general, $L$ is not easy to compute. %For large-scale problems, 
%%the largest eigenvalue is not always easy to compute.%, and the smallest eigenvalue even less so. 
%Nonetheless, Theorem 3.1 in \cite{beck09} ensures that the iterative procedure \eqref{itint}
%with an adaptive choice for the step size, known as {\em backtracking}, which does not require the computation of  $L$,
%%$\lambda_{\textrm{max}}(\Sx^*\Sx)$ and $\lambda_{\textrm{min}}(\Sx^*\Sx)$, 
%shares the same rate of convergence of the procedure with  fixed step-size.\\
An alternative choice of $c_{1,t}$ and $c_{2,t}$ leads to an accelerated version of  the algorithm \eqref{basic_prox},  sometimes called FISTA (fast iterative shrinkage thresholding algorithm \cite{Tse10, beck09}), which is  obtained by setting $s_0=1$,
\beeq{eq:ct}{
s_{\itext} =\frac{1}{2}\lp1+\sqrt{1+4s_{\itext-1}^2}\rp,
\qquad c_{1,t} = 1+\frac{s_{\itext-1}-1}{s_{\itext}}, \text{~~and~~~} c_{2,t} =\frac{1-s_{\itext-1}}{s_{\itext}}.}
The algorithm is analyzed in \cite{beck09} and in \cite{Tse10} where it is proved that the objective values generated by such a procedure have convergence of  order  $O(1/\itext^2)$, if the step size satisfies $\step\geq L$.
% The backtracking procedure can be used also in the accelerated scheme when $L$ is unknown.
\\

Computing the Lipscthitz constant $L$ can be  non trivial.
Theorems 3.1 and 4.4 in \cite{beck09} show  that the iterative procedure \eqref{basic_prox}
with an adaptive choice for the step size, called {\em backtracking}, which does not require the computation of  $L$,
%$\lambda_{\textrm{max}}(\Sx^*\Sx)$ and $\lambda_{\textrm{min}}(\Sx^*\Sx)$, 
shares the same rate of convergence of the corresponding procedure with  fixed step-size.
Finally, it is well known that, if the functional is strongly convex with a positive modulus,
the convergence rate of both the basic and accelerated scheme 
is indeed linear  for both the function values and the minimizers \cite{nesterov83,mosci2010ecml,nesterov07}.
%Furthermore, for the basic scheme, when $F$ is defined as above, it turns out that $L=\nor{\Sx^*\Sx} +\ptwo\taumu$. 
%and following Section 3.1 in \cite{mosci2010ecml} it turns out that the best a priori chosen step size $\step$ depends only on the smallest and biggest eigenvalues of operator $\Sx^*\Sx$. 
%Precisely, denoting these values as  $\lambda_{\textrm{min}}(\Sx^*\Sx)$ and $\lambda_{\textrm{max}}(\Sx^*\Sx)$, 
%the optimal a priori chosen step size for $\step$ is
%$$
%\step = \frac 1 2 (\lambda_{\textrm{max}}(\Sx^*\Sx )+\lambda_{\textrm{min}}(\Sx^*\Sx))+\ptwo\taumu.
%$$
%In practice the computation of the smallest eigenvalue could be unfeasible. 
%In such cases, the sub-optimal choice $\step = \lambda_{\textrm{max}}(\Sx^*\Sx )+\ptwo\taumu$ represents a reasonable alternative.
\\

In our setting we use FISTA to tackle the minimization of $\emp^{\pone}$ but, as we mentioned before, 
we have to deal with the  computation of the proximity operator associated to $\Regn$. 
\paragraph{Computing the proximity operator.} 
Since $\Regn$ is one-homogeneus, i.e. $\Regn(\lambda f)=\lambda\Regn(f)$ for $\lambda>0$, 
the Moreau identity, see \cite{combettes}, gives a useful alternative formulation for the proximity operator, that is
\beeq{eq:Moreau}{
\mathrm{prox}_{\frac{\pone}{\sigma}\Regn}=I-\pi_{\frac{\tau}{\sigma}\Cn},
}
where $\Cn=(\partial\Regn)(0)$ is the subdifferential
\footnote{
Recall that the subdifferential of a convex functional $\Omega:\hh\to \rone\cup\{+\infty\}$ is denoted with $\partial \Omega (f)$ and
is defined as the set
$$
\partial \Omega (f) := \{h \in \hh\,:\, \Omega(g)-\Omega(f)\geq \scalh{h}{g-f},~~ \forall g\in\hh\}.
$$
}
 of $\Regn$ at the origin, and $\pi_{\frac{\tau}{\sigma}\Cn}:\hh \to \hh$ is the projection on $\frac{\tau}{\sigma}\Cn$-- 
which is well defined since $\Cn$ is a closed convex subset of $\hh$. 
To describe how to practically compute such a projection, we start observing  that the DENOVAS penalty $\Regn$   is the sum of $\dd$ norms in $\rone^\n$.
Then following Section $3.2$ in \cite{mosci2010ecml} (see also \cite{ekeltem}) we have 
$$
  \Cn= \partial \Regn(0)=\lb  f\in \hh ~|~ f=\gradn^*v ~\text{with}~  v \in \Bnd  \rb,
 $$
where $\Bnd$ is the cartesian product of $\dd$ unitary balls in $\rone^\n$, 
$$
\Bnd=\underbrace{\Bn\times\dots\times\Bn}_{\dd~ \textrm{times}}=\{v=(v_1,\dots, v_\dd)~| v_\jj\in\rone^\n,~ \nor{v_\jj}_{\n}\le 1, ~~\jj=1,\dots,\dd \},
$$
with  $\Bn$ defined in \eqref{Bn}. 
%We denote by $i_{\Bnd}$ the indicator function of $\Bnd$.
%It is possible to show that $\Cn$ is a closed convex subset of $\hh$. 
Then, by definition, the projection  is  given by
$$
\pi_{\frac{\pone}{\sigma}\Cn} (f)=\gradn^*\bar{v},
$$
where 
\beeq{vbar}{
\bar{v}\in \argmin_{v\in\frac{\pone}{\sigma}\Bnd}   \nor{f-\gradn^*v}_\hh^2.
}
%\end{lemma}
Being a convex constrained problem, \eqref{vbar} can be seen as the sum of the smooth term  $\norh{f-\gradn^*v}^2$ and the indicator function of the convex set $\Bnd$.    We can therefore   use \eqref{basic_prox}, again.
In fact  we can fix an arbitrary initialization  $v^0\in\rone^{\n\dd}$ and consider,
\beeq{eq:vq}
{v^{\itint+1} = \pi_{{\frac{\pone}{\sigma}} \Bnd}\left(v^{\itint}-\frac{1}{\eta}\gradn(\gradn^* v^\itint - f)\right),}
for a suitable choice of $\eta$. In particular, we notice that $ \pi_{{\frac{\pone}{\sigma}} \Bnd}$ can be easily computed in closed form, and corresponds to the proximity operator associated to the indicator function of $\Bnd$.
Applying the results mentioned above, if $\eta\geq \nor{\gradn\gradn^*}$, convergence of the function values of problem \eqref{vbar} on the sequence generated via \eqref{eq:vq} is guaranteed. However, since we are interested in the computation of the proximity operator, this is not enough. Thanks to the special structure of the minimization problem in \eqref{vbar},  it is possible to prove (see \cite{ComDunVu10,mosci2010ecml}) that 
\beeq{lim-proj}{
%\lim_{\itint\to \infty}
 \nor{\gradn^*v^\itint-\gradn^*\bar{v}}_\hh\to 0, \quad \text{ or, equivalently } \quad \nor{\gradn^*v^\itint- \pi_{\frac{\pone}{\sigma}\Cn} (f)}_\hh\to 0.
}

A similar first-order method to compute convergent approximations of $\gradn^*\bar{v}$ has been proposed in \cite{BecBlaAubCha04}.

\subsection{Overall Procedure and Convergence analysis} \label{sec:convergence}

To compute the minimizer of  $\emp^{\pone}$ we consider the
combination of the accelerated  FB-splitting algorithm (outer iteration) 
and the basic FB-splitting algorithm for computing the proximity operator (inner iteration). 
The overall procedure is given by
\begin{eqnarray}\label{itext}
	s_{\itext} &=&\frac{1}{2}\lp1+\sqrt{1+4s_{\itext-1}^2}\rp\nonumber\\
	\tilde{f}^{\itext} &=& \lp1+\frac{s_{\itext-1}-1}{s_{\itext}}\rp f^{\itext-1} + \frac{1-s_{\itext-1}}{s_{\itext}}f^{\itext-2}\\
	f^{\itext}&=&  \lp1-\frac{\ptwo \taumu}{\step} \rp \tilde{f}^\itext-  \frac{1}{\sigma} \Sx^* \lp  \Sx \tilde{f}^\itext - \vy \rp - \gradn^* \bar{v}^\itext,\nonumber
\end{eqnarray}
for $\itext = 2, 3,\dots$, where $\bar{v}^\itext$ is computed through the iteration
\beeq{itint}{
v^{\itint} = \pi_{{\frac{\pone}{\step}} \Bnd}\left(v^{\itint-1}-\frac{1}{\eta}\gradn\lp\gradn^* v^{\itint-1} - \lp1-\frac{\ptwo \taumu}{\step} \rp \tilde{f}^\itext-  \frac{1}{\sigma} \Sx^* \lp  \Sx \tilde{f}^\itext - \vy \rp \rp\right),
}
for given  initializations.

%  \begin{algorithm}[!h]\caption{Inexact accelerated proximal algorithm for DENOVAS}
%   \label{algo:FISTA-ISTA}
%   \begin{algorithmic}
%    \STATE \textbf{Given: $\step,\eta>0$, $f^0\in\hh$,   $\bar{v}^0 \in\rone^{\n\dd}$} 
%    \STATE \textbf{Initialize:} $f^1 = f^0$, $s_1=1$,  $\itext=1$\\
%    %\WHILE{convergence not reached \quad}
%    \WHILE{\texttt{convergence not reached}}
%    \STATE
%    $ \itext = \itext+1$
%      \begin{eqnarray}
%%\mathrm{prox}_{\frac{\pone}{\sigma}\Regn}  \lp  \lp1-\frac{\ptwo \taumu}{\step} \rp \tilde{f}^\itext-  \frac{1}{\sigma} \Sx^* \lp  \Sx \tilde{f}^\itext - \vy \rp  \rp \ 
%s_{\itext} &=&\frac{1}{2}\lp1+\sqrt{1+4s_{\itext-1}^2}\rp\\
%\tilde{f}^{\itext} &=& \lp1+\frac{s_{\itext-1}-1}{s_{\itext}}\rp f^{\itext-1} + \frac{1-s_{\itext-1}}{s_{\itext}}f^{\itext-2}\nonumber
%\end{eqnarray}
%$v^0=0, \itint=0$
% \WHILE{\texttt{convergence not reached}}
%\STATE
%$\itint = \itint+1$
%$$
%v^{\itint} = \pi_{{\frac{\pone}{\step}} \Bnd}\left(v^{\itint-1}-\frac{1}{\eta}\gradn\lp\gradn^* v^{\itint-1} - \lp1-\frac{\ptwo \taumu}{\step} \rp \tilde{f}^\itext-  \frac{1}{\sigma} \Sx^* \lp  \Sx \tilde{f}^\itext - \vy \rp \rp\right)
%$$
%\ENDWHILE
%\STATE
%$\bar{v}^\itext = v^{\itint}$
%$$
%f^{\itext}= \gradn^* \bar{v}^\itext\nonumber
%$$%\\
%%g^\itext&=& \lp1-\frac{\ptwo \taumu}{\step} \rp \tilde{f}^\itext-  \frac{1}{\sigma} \Sx^* \lp  \Sx \tilde{f}^\itext - \vy \rp
%\ENDWHILE
%    
%\end{algorithmic}
%\end{algorithm}

The above  algorithm is an {\em inexact} accelerated FB-splitting algorithm,
in the sense that  the proximal or backward step is computed only approximately.
The above discussion on the convergence of FB-splitting algorithms was limited to the 
case where computation of the proximity operator is done exactly (we refer to this case as the {\em exact} case).
The  convergence of the  inexact FB-splitting algorithm 
does not follow from this analysis.
For the basic -- not accelerated -- FB-splitting algorithm,
convergence in the inexact case is still guaranteed (without a rate) \cite{combettes},
if the computation of the proximity operator is sufficiently accurate.
The convergence of the  inexact accelerated FB-splitting algorithm 
 is studied in  \cite{salzo2011prox} where it is shown that the same convergence rate of the exact case can be achieved, 
again provided that the accuracy in the computation of the proximity operator can be suitably controlled.
Such a result can be adapted to our setting to prove the following theorem,  as shown in Appendix \ref{app:computation}.

\begin{theorem}\label{theo:convergence}
Let $\varepsilon^t\sim t^{-l}$ with $l>3/2$, $\step\geq\nor{\Sx^*\Sx} +\ptwo\taumu$, $\eta\geq\nor{\gradn\gradn^*}$,
and  $f^\itext$ given by \eqref{itext} with $\bar{v}^\itext$ computed through \eqref{itint}. Define $g^\itext= \lp1-\frac{\ptwo \taumu}{\step} \rp \tilde{f}^\itext-  \frac{1}{\sigma} \Sx^* \lp  \Sx \tilde{f}^\itext - \vy \rp$.
If $\bar{v}^\itext=v^\itint$, for $\itint$ such that the following condition is satisfied
\beeq{eq:init}{
\frac{2\tau}{\sigma} \Regn(f^\itext)-2 \langle \gradn^*v^{\itint},f^\itext\rangle \leq\varepsilon^{2\itext}\,,}
Then there exists a constant $C>0$ such that
$$
\emp^{\pone}(f^\itext)-\emp^{\pone}(\fz) \leq \frac{C}{\itext^2},
$$
and thus, if $\taumu>0$,
\beeq{lim-f}{
\nor{f^\itext-\fz}_\hh\leq  \frac{2}{  \itext} \sqrt{\frac{C}{\taumu\pone}}.
}
\end{theorem}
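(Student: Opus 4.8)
The plan is to recognize the recursion \eqref{itext}--\eqref{itint} as an instance of the \emph{inexact accelerated} forward--backward scheme analyzed in \cite{salzo2011prox} and to verify its hypotheses in our setting. First I would record the splitting $\emp^{\pone}=F+2\pone\Regn$ with $F=\emp+\pone\taumu\norh{\cdot}^2$, noting that $F$ is differentiable with $\tfrac12\grad F(f)=\Sx^*(\Sx f-\vy)+\pone\taumu f$, so that $\tfrac12\grad F$ is Lipschitz with constant $\nor{\Sx^*\Sx}+\pone\taumu$. Hence the hypothesis $\step\geq\nor{\Sx^*\Sx}+\pone\taumu$ makes the forward step size admissible, and that step reads exactly $g^\itext=\tilde f^\itext-\tfrac{1}{2\step}\grad F(\tilde f^\itext)$, which is the quantity defined in the statement and appearing in \eqref{itext}. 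The backward step is the proximal point $\mathrm{prox}_{(\pone/\step)\Regn}(g^\itext)$, which the algorithm computes only approximately as $f^\itext=g^\itext-\gradn^*\bar v^\itext$ through the inner Moreau/projection iteration \eqref{itint}, whose convergence to the exact projection is the content of \eqref{lim-proj}.

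The crux is to certify that the computable inner stopping rule \eqref{eq:init} makes $f^\itext$ an \emph{admissible} approximation of the exact proximal point in the precise sense required by \cite{salzo2011prox}. To this end I would consider the proximity subproblem $\Phi(g)=\tfrac{\pone}{\step}\Regn(g)+\tfrac12\norh{g-g^\itext}^2$, whose unique minimizer $f^\ast$ is the exact prox, together with its Fenchel dual $D(v)=\scalh{\gradn^\ast v}{g^\itext}-\tfrac12\norh{\gradn^\ast v}^2$ over $v\in\tfrac{\pone}{\step}\Bnd$; this dual is exactly the projection problem \eqref{vbar}, using the support-function identity $\Regn(g)=\sup_{v\in\Bnd}\scalh{\gradn^\ast v}{g}$ and the Moreau decomposition \eqref{eq:Moreau}. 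Substituting $f^\itext=g^\itext-\gradn^\ast v^\itint$ (so $g^\itext=f^\itext+\gradn^\ast v^\itint$) into $\Phi(f^\itext)-D(v^\itint)$, the terms in $\norh{\gradn^\ast v^\itint}^2$ cancel and the duality gap collapses to $\tfrac{\pone}{\step}\Regn(f^\itext)-\scalh{\gradn^\ast v^\itint}{f^\itext}$, i.e.\ exactly half the left-hand side of \eqref{eq:init}. Thus \eqref{eq:init} forces the gap to be at most $\tfrac12\varepsilon^{2\itext}$, and since $\Phi$ is $1$-strongly convex this gives $\tfrac12\norh{f^\itext-f^\ast}^2\le\Phi(f^\itext)-\Phi(f^\ast)\le\tfrac12\varepsilon^{2\itext}$, hence $\norh{f^\itext-f^\ast}\le\varepsilon^\itext$: the prox is computed with norm accuracy $\varepsilon^\itext$ (equivalently objective accuracy $\tfrac12\varepsilon^{2\itext}$), which is the input the abstract theorem expects.

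With the proximal error controlled by $\varepsilon^\itext\sim\itext^{-l}$, the convergence theorem of \cite{salzo2011prox} for the accelerated inexact scheme applies as soon as the weighted summability of the error sequence required there holds; for errors decaying like $\itext^{-l}$ this reduces precisely to the threshold $l>3/2$, which is our hypothesis, and it yields the $O(1/\itext^2)$ bound $\emp^{\pone}(f^\itext)-\emp^{\pone}(\fz)\le C/\itext^2$. For the iterate estimate \eqref{lim-f} I would then invoke strong convexity of $\emp^{\pone}$: since $\fz$ is the minimizer and the modulus is $\pone\taumu/2$, one has $\emp^{\pone}(f^\itext)-\emp^{\pone}(\fz)\ge\tfrac{\pone\taumu}{4}\norh{f^\itext-\fz}^2$, so combining with the objective rate gives $\norh{f^\itext-\fz}\le\tfrac{2}{\itext}\sqrt{C/(\taumu\pone)}$, as claimed.

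The main obstacle is the second step: translating the computable criterion \eqref{eq:init} into the exact admissibility notion of \cite{salzo2011prox}, and then controlling how the per-step proximal errors propagate through the momentum recursion \eqref{acc_prox}. The duality-gap identity is what makes the first part clean and transparent; the error propagation is inherited wholesale from the abstract theorem, so the only genuinely new verification is the Lipschitz/step-size bookkeeping of the first paragraph together with the reduction of the summability condition to $l>3/2$.
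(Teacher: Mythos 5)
Your overall strategy is the paper's: split $\emp^{\pone}$ into the smooth strongly convex part $F=\emp+\pone\taumu\norh{\cdot}^2$ and the nonsmooth part $2\pone\Regn$, check the step-size/Lipschitz bookkeeping, certify that the stopping rule \eqref{eq:init} turns $f^\itext$ into an admissible approximate proximal point, invoke the inexact accelerated forward--backward result of \cite{salzo2011prox} (Theorem \ref{thm:err2} here), and conclude \eqref{lim-f} by strong convexity. Your duality-gap identity is in fact the most transparent way to see why \eqref{eq:init} is the right criterion: the gap $\frac{\pone}{\step}\Regn(f^\itext)-\langle \gradn^*v^{\itint},f^\itext\rangle$ is exactly half the left-hand side of \eqref{eq:init}, and this computation is essentially a proof of the paper's Proposition \ref{pr:svb}.

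The genuine gap is in how you hand this certificate to the abstract theorem. You downgrade the gap bound to norm accuracy $\norh{f^\itext-f^\ast}\le\eps^\itext$ (``equivalently objective accuracy $\tfrac12(\eps^\itext)^2$'') and assert that this is the input the abstract theorem expects. It is not: Theorem \ref{thm:err2} and the $l>3/2$ threshold of \cite{salzo2011prox} are stated for the stronger notion of Definition \ref{def:inexactprox}, namely the $\eps$-subdifferential inclusion $\frac{\step}{\pone}\gradn^*v^{\itint}\in\partial_{\step(\eps^\itext)^2/(2\pone)}\Regn(f^\itext)$. Norm or objective accuracy of the proximal subproblem is strictly weaker (the implication goes only one way), and under that weaker hypothesis the standard analyses of inexact accelerated schemes require roughly $\sum_\itext \itext\,\eps^\itext<\infty$, i.e. $l>2$; so as written your last step does not deliver the $O(1/\itext^2)$ rate at the claimed threshold $l>3/2$. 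The repair is already contained in your own computation and requires no detour through $f^\ast$: since $\frac{\step}{\pone}v^{\itint}\in\Bnd$ and $\Regn$ is the support-type function $\sup_{u\in\Bnd}\langle\gradn^*u,\cdot\rangle$, one has $\Regn(g)\ge\frac{\step}{\pone}\langle\gradn^*v^{\itint},g\rangle$ for every $g\in\hh$, and the defining inequality of the $\eps$-subdifferential at $f^\itext$ holds for all $g$ precisely when $\Regn(f^\itext)-\frac{\step}{\pone}\langle\gradn^*v^{\itint},f^\itext\rangle\le\frac{\step}{2\pone}(\eps^\itext)^2$, which after multiplying by $2\pone/\step$ is exactly \eqref{eq:init}. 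Feeding the gap bound directly into Definition \ref{def:inexactprox} (rather than into a norm estimate) is what licenses Theorem \ref{thm:err2} at $l>3/2$; with that one substitution your argument coincides with the paper's proof.
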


As for the exact accelerated FB-splitting algorithm, 
the step size of  the outer iteration  has to be greater than or equal to $L=\nor{\Sx^*\Sx} +\ptwo\taumu$.
In particular, we choose  $\step=\nor{\Sx^*\Sx} +\ptwo\taumu$ and, similarly, $\eta = \nor{\gradn\gradn^*}$.
%For  the inner iteration, as  discussed in Subsection \ref{sec:derivation},
% the best a priori choice for the step-size $\eta$ depends
%only on the smallest and biggest eigenvalues of operator $\gradn\gradn^*$. 
%When  the computation of the smallest eigenvalue is unfeasible,
%the sub-optimal choice $\eta = \lambda_{\textrm{max}}(\gradn\gradn^*)$ represents a reasonable alternative.

We add few remarks.
First, as it is evident from \eqref{lim-f}, the choice of $\taumu>0$ allows to obtain convergence 
of $f^\itext$ to $\fz$ with respect to the norm  in $\hh$, 
and positively influences the rate of convergence.
This is a crucial property in variable selection, where it is necessary to accurately estimate the minimizer of the expected risk $\fdag$
and not only its minimum $\err(\fdag)$. 
Second, condition \eqref{eq:init} represents an implementable stopping criterion for the inner iteration, once that the representer theorem is proved.
% the stopping criterion is not explicit, as it requires the knowledge 
%of $\pi_{\frac{\tau}{\sigma}C_n}(f^\itext)$ and $\text{dist}\big(f^t,\frac{\tau}{\sigma} \Cn\big)$ which are unknown. 
%Nevertheless, the result stated in the theorem ensures that if the accuracy is high enough, 
%the inexact procedure has the same convergence rate of the exact one. 
%Of course, 
%condition \eqref{eq:init} requires the knowledge of the asymptotic solution $\pi_{\frac{\tau}{\sigma}C_n}(f^\itext)$
%and cannot be implemented in practice. 
Further comments on the stopping rule are given in  Section \ref{sec:alg_issues}.
Third, we remark that for proving convergence of the inexact procedure, 
it is essential that the specific algorithm proposed to compute the proximal step
generates a sequence belonging to $\Cn$ and satisfying \eqref{lim-proj}.
%in general the condition that the projection be evaluated with a certain tolerance
% is not enough to 
%guarantee global convergence of the approximated FISTA.
%It is indeed the specific algorithm proposed to compute the proximal step
%that guarantees convergence since it generates a sequence belonging to $\Cn$ and  satisfying \eqref{lim-proj}.

%%%%%%%%%%%%%%%%%%%%%%%%%%%%%%%%%%%%%%%%%%%%
 \subsection{Further Algorithmic Considerations} \label{sec:alg_issues}
 We conclude discussing several  practical aspects of the proposed method. 
 
 \paragraph{The finite dimensional implementation.}
 
 We start by showing how the representer theorem can
be used, together with the iterations   described by  \eqref{itext} and \eqref{itint}, to derive  Algorithm \ref{algo:DENOVAS}.
 This is summarized in the following  proposition. 
\begin{proposition}\label{prop:itext}
For $\taumu> 0$ and
$f^0 = \frac 1 \n\sum_{\ii} \alpha^0_\ii k_{x_{\ii}}+
\frac 1 \n  \sum_{\ii} \sum_{\jj} \beta^0_{\jj,\ii}(\derh_\jj k)_{x_{\ii}}$
for any $\alpha^0\in\rone^\n, \beta^0\in\rone^{\n\dd}$,
the solution at step $\itext$ for the updating rule \eqref{itext} is given by
\beeq{repr_itext}{
f^{\itext} =  
\frac 1 \n\sum_{\ii=1}^\n \alpha^\itext_\ii k_{x_{\ii}}+
\frac 1 \n  \sum_{\ii=1}^\n \sum_{\jj=1}^\dd \beta^\itext_{\jj,\ii}(\derh_\jj k)_{x_{\ii}}
}
 with $\alpha^\itext$ and $\beta^\itext$ defined by the updating rules 
   (\ref{update_a}-\ref{update_tilde}),
where $\bar{v}^{\itext}$ in \eqref{update_b} 
 can be estimated, starting from  any $v^0\in\rone^{\n\dd}$, and  using the  iterative rule \eqref{update_v}. 
  \end{proposition}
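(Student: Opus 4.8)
The plan is to prove the proposition by induction on the outer index $\itext$, showing simultaneously that every iterate $f^\itext$ generated by \eqref{itext} lies in the span of the finite dictionary $\{k_{x_\ii}\}_{\ii=1}^\n\cup\{(\derh_\jj k)_{x_\ii}\}_{\ii=1,\jj=1}^{\n,\dd}$ and that reading off its coefficients recovers the algebraic updates of Algorithm~\ref{algo:DENOVAS}. The base case is given by hypothesis: $f^0=f^1$ is of the form \eqref{repr_itext} with the prescribed $\alpha^0,\beta^0$. For the inductive step I would assume $f^{\itext-1},f^{\itext-2}$ are of the form \eqref{repr_itext} and follow how the three lines of \eqref{itext} act on the coefficients. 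Since $\tilde f^\itext$ is a linear combination of $f^{\itext-1}$ and $f^{\itext-2}$ with the scalar weights fixed by \eqref{update_s}, and since all iterates are expanded over the \emph{same} dictionary, linearity alone gives that $\tilde f^\itext$ is again of the form \eqref{repr_itext}, with coefficients $\tilde\alpha^\itext,\tilde\beta^\itext$ obtained by \eqref{update_tilde}.

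The core of the argument is to translate the operators $\Sx$ and $\gradn$ and their adjoints into the matrices $\matK,\matZ,\matL$. Using the reproducing property \eqref{repro_prop} and its derivative analogue underlying \eqref{der_repres}, for any $g$ of the form \eqref{repr_itext} with coefficients $(\alpha,\beta)$ one computes the sampled values $\Sx g=\matK\alpha+\matZ\beta$ and the sampled partial derivatives $\gradn g$, whose $\jj$-th block equals $\matZ_\jj^{T}\alpha+\matL_\jj\beta$; these are exactly the matrices \eqref{defK}, \eqref{defZ} and \eqref{defL}. Dually, taking the adjoints with respect to the empirical inner product on $\rone^\n$ (the one carrying the factor $1/\n$, consistent with $\Regn$ and with the empirical risk), one gets $\Sx^*u=\frac1\n\sum_\ii u_\ii k_{x_\ii}$ and $\gradn^*v=\frac1\n\sum_{\jj,\ii}v_{\jj,\ii}(\derh_\jj k)_{x_\ii}$, so that $\Sx^*$ perturbs only the $\alpha$-coefficients (adding $u$) and $\gradn^*$ only the $\beta$-coefficients (adding $v$). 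Applying this to the middle line of \eqref{itext}, the auxiliary function $g^\itext=(1-\ptwo\taumu/\step)\tilde f^\itext-\frac1\step\Sx^*(\Sx\tilde f^\itext-\vy)$ has $\alpha$-coefficient equal to $\alpha^\itext$ of \eqref{update_a} and $\beta$-coefficient equal to $(1-\ptwo\taumu/\step)\tilde\beta^\itext$.

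It remains to identify the backward step $\gradn^*\bar v^\itext$. Inserting the dictionary into the inner iteration \eqref{itint}, the projection onto $\Bnd$ decouples into the $\dd$ independent projections onto $\Bn$, since $\Bnd$ is a Cartesian product; and the vector $\gradn(\gradn^*v^{\itint-1}-g^\itext)$ reduces, block by block, to $\matL_\jj v^{\itint-1}-(\matZ_\jj^T\alpha^\itext+(1-\ptwo\taumu/\step)\matL_\jj\tilde\beta^\itext)$ by the same matrix identities, the $\alpha$-part of $g^\itext$ contributing $\matZ_\jj^T\alpha^\itext$ and its $\beta$-part $(1-\ptwo\taumu/\step)\matL_\jj\tilde\beta^\itext$. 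This is precisely \eqref{update_v}, and its limit equals $\bar v^\itext$ by the convergence \eqref{lim-proj} of the inner scheme. Finally, from $f^\itext=g^\itext-\gradn^*\bar v^\itext$ and the fact that $\gradn^*$ shifts only the $\beta$-coefficients by $\bar v^\itext$, the $\beta$-coefficient of $f^\itext$ is $(1-\ptwo\taumu/\step)\tilde\beta^\itext-\bar v^\itext$, i.e. \eqref{update_b}, while its $\alpha$-coefficient stays $\alpha^\itext$. Hence $f^\itext$ is of the form \eqref{repr_itext}, which closes the induction and matches \eqref{itext}--\eqref{itint} with Algorithm~\ref{algo:DENOVAS}.

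I expect the one genuinely delicate point to be the operator-to-matrix dictionary rather than any analytic difficulty: one must keep the transposes on the $\matZ_\jj$ blocks straight -- the sampling operator produces $\matZ$ whereas the empirical gradient produces $\matZ^{T}$, a consequence of the kernel being differentiated in its first argument in \eqref{defZ} -- and must carry the $1/\n$ normalization coherently across the representer coefficients, the matrix entries \eqref{defK}--\eqref{defL}, and the empirical inner product that defines the adjoints $\Sx^*$ and $\gradn^*$. Each step is elementary linear algebra, but a single stray factor of $\n$ or a misplaced transpose would destroy the exact correspondence, so essentially all the work is in this bookkeeping.
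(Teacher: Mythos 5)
Your proposal is correct and follows essentially the same route as the paper's proof: an induction on the outer index combined with the identification of $\Sx\Sx^*$, $\Sx\gradn^*$, $\dern_\jj\Sx^*$ and $\dern_\jj\gradn^*$ with the matrices $\matK$, $\matZ$, $\matZ_\jj^T$ and $\matL_\jj$, which is exactly how the paper reduces \eqref{itext}--\eqref{itint} to the coefficient updates of Algorithm~\ref{algo:DENOVAS}. The only cosmetic difference is that the paper phrases the backward step through the Moreau identity $(I-\pi_{\frac{\tau}{\sigma}\Cn})$ while you work directly with $f^\itext=g^\itext-\gradn^*\bar v^\itext$; the content is the same.
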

The proof of the above proposition can be found in  Appendix \ref{app:computation}, and is based on the  observation that 
 $\matK, \matZ_\jj, \matZ, \matL_\jj$ defined at the beginning of this  Section
are the matrices associated to  the  operators
$\Sx \Sx^*: \rone^\n \to \rone^\n$, $\Sx \dern_\jj^*: \rone^\n \to \rone^\n$,
$\Sx \gradn^*: \rone^{\n\dd} \to \rone^{\n}$
and $\dern_\jj\gradn^*:\rone^{\n\dd}\to \rone^\n$, respectively. 
Using the same reasoning we can make the following two further  observations.
First, one can compute the step sizes $\step$ and $\eta$
as $\step =  \nor{\matK}+\pone\taumu $, 
 and $\eta = \nor{\matL}$.
Second, since in practice we have to define  suitable stopping rules,
  Equations \eqref{lim-f} and \eqref{lim-proj}    suggest the following choices
  \footnote{In practice we often  use a stopping rule where the tolerance is scaled with the current iterate,
 $
  \nor{f^\itext-f^{\itext-1}}_{\hh}\le \varepsilon^{\text{(ext)}} \nor{f^\itext}_\hh$ and 
  $\frac{2\tau}{\sigma} \Regn(f^\itext)-2 \langle \gradn^*v^{\itint},f^\itext\rangle \leq\varepsilon^{\text{(int)}}\nor{\gradn^* v^\itint}_\hh.
  $}
  $$
  \nor{f^\itext-f^{\itext-1}}_{\hh}\le \varepsilon^{\text{(ext)}} ~~~ \text{
  and} ~~~
 \frac{2\tau}{\sigma} \Regn(f^\itext)-2 \langle \gradn^*v^{\itint},f^\itext\rangle \leq\varepsilon^{(\text{int})}\,.
  $$
As a direct consequence of  \eqref{repr_itext} and using the definition of matrices $\matK,\matZ,\matL$, 
 these quantities can be easily computed  as
\begin{eqnarray*}
   \nor{f^\itext-f^{\itext-1}}_\hh^2  
&    =&  \scal{\delta\alpha}{\matK\delta\alpha}_\n +2\scal{\delta\alpha}{\matZ\delta\beta}_\n +\scal{\delta\beta}{\matL\delta\beta}_\n\,,\\
 \frac{2\tau}{\sigma} \Regn(f^\itext)-2 \langle \gradn^*v^{\itint},f^\itext\rangle &=&
\sum_{\jj=1}^\dd \nor{Z_\jj\alpha^\itext+L_\jj \beta^\itext}  -2 \scal{\alpha^\itext}{\matZ v^\itint}_\n\,.
  \end{eqnarray*}
 where we defined
  $\delta\alpha= \alpha^\itext-\alpha^{\itext-1}$ and  $\delta\beta=\beta^\itext-\beta^{\itext-1}$.  Also note  that, according to Theorem \ref{theo:convergence}, $\varepsilon^{\text{(int)}}$ must depend on the outer iteration as $\varepsilon^{\text{(int)}} = \varepsilon^{\itext}\sim\itext^{-2l}$,  $l>3/2$.

Finally we discuss a criterion for identifying the variables selected by the algorithm. 
\paragraph{Selection.}
Note that in the linear case $f(x)=\beta\cdot x $ the coefficients $\beta^1,\dots,  \beta^\dd$ coincide with 
the partial derivatives, and  the coefficient vector $\beta$  given by  $\ell^1$ regularization is sparse (in the sense that it has zero entries), 
so that it is easy to detect  which variables are to be considered relevant. 
For a general non-linear function, we then expect the vector $(\|\dern_\jj f\|_\n^2)_{\jj=1}^\dd$ of the 
norms of the partial derivatives evaluated on the training set points, to be sparse as well. 
In practice since the projection $\pi_{\pone/\sigma \Bnd}$ is computed only 
approximately,  the norms of the partial derivatives  will be small but typically not zero. The following proposition elaborates on this point.
\begin{proposition}\label{prop:eulero}
Let $v = (v_\jj)_{\jj=1}^\dd\in \Bnd$ such that, for any $\step>0$ 
$$
\gradn^* v = 
- \frac{1}{\step}\nabla (\emp(\fz)  + \pone\taumu\norh{\fz}^2),
$$
then
\beeq{eq:selection}{
\nor{v_\jj}_\n <\frac{\pone}{\sigma}\Rightarrow \nor{ \dern_\jj \fz}_\n=0.
}
Moreover, if $\bar{v}^\itext$ is given by Algorithm \ref{algo:DENOVAS} with the inner iteration stopped
when the assumptions of Theorem \ref{theo:convergence} are met, 
 then there exists $\tilde{\eps}^\itext>0$  (precisely defined in \eqref{eq:epstilde}) depending on the tolerance $\varepsilon^{\itext}$ used in the inner iteration
 and satisfying $\lim_{\itext\to 0} \tilde{\eps}^\itext = 0$, such that if $m:=\min\{\nor{ \dern_\jj \fz}_\n\,:\, \jj\in\{1,\ldots,d\} \mathrm{s. t. } \nor{ \dern_\jj \fz}_\n>0\}.$
\beeq{eq:vbarok}
{\nor{\bar{v}_\jj^\itext}_\n\geq \frac{\pone}{\step}-\frac{(\tilde{\eps}^\itext)^2}{2m} \Rightarrow \nor{ \dern_\jj \fz}_\n=0 .}
\end{proposition}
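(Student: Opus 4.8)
The plan is to prove the two implications separately: the first is a consequence of the exact optimality condition for $\fz$ combined with the projection characterization \eqref{vbar}, and the second propagates the inner--loop stopping tolerance \eqref{eq:init} through that same condition.

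\emph{The exact case \eqref{eq:selection}.} I would start from the fixed point of the forward--backward map. Since $\fz$ minimizes $\emp^\pone$, the Moreau identity \eqref{eq:Moreau} gives $\fz = g-\gradn^* v$, where $g$ is the forward (gradient) step evaluated at $\fz$ and $\gradn^* v = \pi_{\frac{\pone}{\step}\Cn}(g)$ is the projection realised by the dual vector $v\in\frac{\pone}{\step}\Bnd$; this is exactly the relation $\gradn^* v=-\frac1\step\nabla(\emp(\fz)+\pone\taumu\norh{\fz}^2)$ in the statement. The decisive point is that $v$ \emph{solves} the constrained least squares problem \eqref{vbar}, hence obeys the variational inequality $\scal{\gradn(\gradn^* v-g)}{w-v}\ge 0$ for all $w\in\frac{\pone}{\step}\Bnd$. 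Substituting $\gradn^* v-g=-\fz$ and using $\scal{\gradn h}{w}=\sum_\jj\scal{\dern_\jj h}{w_\jj}_\n$ turns this into $\sum_\jj\scal{\dern_\jj\fz}{w_\jj-v_\jj}_\n\le 0$; because the feasible set is a product of balls the inequality decouples in $\jj$, and maximising $\scal{\dern_\jj\fz}{w_\jj}_\n$ over $\nor{w_\jj}_\n\le\frac{\pone}{\step}$ yields
\[
\tfrac{\pone}{\step}\,\nor{\dern_\jj\fz}_\n \;\le\; \scal{\dern_\jj\fz}{v_\jj}_\n \;\le\; \nor{\dern_\jj\fz}_\n\,\nor{v_\jj}_\n .
\]
If $\nor{\dern_\jj\fz}_\n\neq 0$ I cancel it to get $\nor{v_\jj}_\n\ge\frac{\pone}{\step}$, whose contrapositive is precisely \eqref{eq:selection}.

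\emph{The inexact case \eqref{eq:vbarok}.} Here I reuse the \emph{computable} quantity appearing in the stopping rule. Writing $\bar v^\itext=v^\itint$ and expanding $\scal{\gradn^* v^\itint}{f^\itext}=\sum_\jj\scal{\bar v_\jj^\itext}{\dern_\jj f^\itext}_\n$, the left side of \eqref{eq:init} equals $2\sum_\jj\big(\frac{\pone}{\step}\nor{\dern_\jj f^\itext}_\n-\scal{\bar v_\jj^\itext}{\dern_\jj f^\itext}_\n\big)$; each summand is nonnegative because $\bar v^\itext\in\frac{\pone}{\step}\Bnd$, so every summand is bounded by $\tfrac12(\varepsilon^\itext)^2$. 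Using Cauchy--Schwarz on the inner product term I obtain, for every $\jj$,
\[
\Big(\tfrac{\pone}{\step}-\nor{\bar v_\jj^\itext}_\n\Big)\nor{\dern_\jj f^\itext}_\n \;\le\; \tfrac12(\varepsilon^\itext)^2 .
\]
It then remains to pass from $f^\itext$ to $\fz$. By the boundedness of $\dern_\jj$ from assumption [A2] one has $\big|\nor{\dern_\jj f^\itext}_\n-\nor{\dern_\jj\fz}_\n\big|\le\kader\nor{f^\itext-\fz}_\hh$, and the latter is controlled by the rate \eqref{lim-f}. Hence for a relevant variable ($\nor{\dern_\jj\fz}_\n\ge m$) we get $\nor{\dern_\jj f^\itext}_\n\ge m-\kader\nor{f^\itext-\fz}_\hh$, and choosing $\tilde\eps^\itext$ so that $\frac{(\varepsilon^\itext)^2}{m-\kader\nor{f^\itext-\fz}_\hh}\le\frac{(\tilde\eps^\itext)^2}{m}$ --- which forces $\tilde\eps^\itext\to 0$ since $\nor{f^\itext-\fz}_\hh\to 0$ --- gives $\nor{\dern_\jj\fz}_\n\neq 0\Rightarrow\nor{\bar v_\jj^\itext}_\n\ge\frac{\pone}{\step}-\frac{(\tilde\eps^\itext)^2}{2m}$. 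Read contrapositively, a computed dual norm falling below this threshold certifies $\nor{\dern_\jj\fz}_\n=0$, i.e.\ \eqref{eq:vbarok}.

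The main obstacle is the transition from the exact certificate to the inexact one. The clean componentwise conclusion in the exact case hinges on $v$ being the \emph{genuine} projection, so that the full variational inequality --- not merely the identity $\gradn^* v=g-\fz$ --- is available; an arbitrary representative differing by an element of $\ker\gradn^*$ need not satisfy \eqref{eq:selection}. In the inexact case the delicate bookkeeping is to define $\tilde\eps^\itext$ (as in \eqref{eq:epstilde}) so that it simultaneously dominates the inner tolerance $(\varepsilon^\itext)^2$ and the derivative gap $\kader\nor{f^\itext-\fz}_\hh$ between $\dern_\jj f^\itext$ and $\dern_\jj\fz$, while the resulting margin $\frac{(\tilde\eps^\itext)^2}{2m}$ still vanishes; this is exactly where the $\hh$--norm convergence guaranteed by $\taumu>0$ in Theorem \ref{theo:convergence} is indispensable.
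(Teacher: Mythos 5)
Your proof is correct and reaches both conclusions, but it takes a genuinely different route from the paper's, most visibly in the inexact part. For \eqref{eq:selection} the paper writes the Euler equation $0\in\partial\emp^{\pone}(\fz)$ and invokes the explicit characterization of $\partial\Regn$ at $\fz$: the dual vector must equal $\frac{\pone}{\step}\,\dern_\jj\fz/\nor{\dern_\jj\fz}_\n$ on every active component, hence has norm exactly $\pone/\step$ there, and the claim follows by contradiction. Your projection variational inequality, giving $\frac{\pone}{\step}\nor{\dern_\jj\fz}_\n\le\scal{\dern_\jj\fz}{v_\jj}_\n$, is an equivalent certificate and yields the same contrapositive; you also correctly flag that both arguments need $v$ to be the distinguished dual representative rather than an arbitrary preimage under $\gradn^*$ — a caveat that is equally implicit in the paper's own proof. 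For \eqref{eq:vbarok} the paper goes through the $\eps$-subdifferential machinery: it shows $\gradn^*\bar v^\itext$ is an approximate subgradient of $\Regn$ at $f^\itext$, applies a transportation formula to move it to $\fz$ (this is where the explicit \eqref{eq:epstilde} arises, its correction term controlled via \eqref{lim-f}), and then uses the inclusion $\partial_\eps\Regn(f)\subseteq\lb\gradn^*v\;:\;\nor{v_\jj}_\n\ge 1-\eps/\nor{\dern_\jj f}_\n \text{ if }\nor{\dern_\jj f}_\n>0\rb$. You instead decouple the computable stopping quantity \eqref{eq:init} componentwise — each nonnegative summand $\frac{\pone}{\step}\nor{\dern_\jj f^\itext}_\n-\scal{\bar v^\itext_\jj}{\dern_\jj f^\itext}_\n$ is at most $\frac12(\varepsilon^\itext)^2$ — and transfer from $f^\itext$ to $\fz$ with the elementary bound $|\nor{\dern_\jj f^\itext}_\n-\nor{\dern_\jj\fz}_\n|\le\kader\norh{f^\itext-\fz}$ together with \eqref{lim-f}. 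This is more elementary and self-contained; the price is that your $\tilde\eps^\itext$ is a different (though equally valid) quantity from the one in \eqref{eq:epstilde}, and you should state explicitly that the division by $m-\kader\norh{f^\itext-\fz}$ is only legitimate once $\itext$ is large enough that this quantity is positive, which \eqref{lim-f} guarantees. Both routes rely in the same essential way on $\taumu>0$ through the $\hh$-norm convergence of $f^\itext$ to $\fz$.
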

The above result, whose proof can be found in Appendix \ref{app:computation},  
is a direct consequence of the Euler equation for $\emp^{\pone}$
%\beeq{Euler}{
%\partial(\emp(f) + 2\pone\Regn(f) + \pone\taumu\norh{f}^2)=0,
%}
and of the characterization of the subdifferential of $\Regn$.
%from which one can see that $v$
%is such that 
%$\gradn^* v\in\frac{\pone}{\sigma}\partial \Regn(\fz)$.
The second part of the proof follows by observing that, as 
$\gradn^* v$
 belongs to the 
subdifferential of $\Regn$ at $\fz$,%  $\gradn^* v\in\frac{\pone}{\sigma}\partial \Regn(\fz)$,
$\gradn^*\bar{v}^\itext$ belongs to the 
{\em approximate} subdifferential of $\Regn$ at $\fz$, where
the approximation of the subdifferential is controlled by the precision used in evaluating the projection.
{Given the  pair $(f^\itext,\bar{v}^\itext)$ evaluated via Algorithm \ref{algo:DENOVAS}, 
we can thus consider to be irrelevant the variables such that $\nor{\bar{v}^\itext_\jj}_\n <\pone/\sigma -(\tilde{\eps}^\itext)^2/(2m)$.
Note that the explicit form of $\tilde{\eps}^\itext$ is given in  \eqref{eq:epstilde}). 

 \section{Consistency for Learning and Variable Selection}\label{sec:cons}
 %%%%%%%%%%%%%%%%%%%%%%%%%%%%%%%%%%%%%%

In this section we study the consistency properties of our method.
%Throughout this section we make the following assumption which is 
%quite standard in machine learning and trivially verified in classification. 

\subsection{Consistency} 
As we discussed in Section \ref{sec:main_explain}, though in practice we consider the regularizer $\Regn $ defined in \eqref{penalty}, 
ideally we would be interested into  $\Reg (f)= \sum_{\jj=1}^\dd  \nor{\der_\jj f}_{\marg}$, $~f\in \hh$.
The following preliminary result shows that indeed $\Regn$ is a consistent estimator of $\Reg$ when considering  functions in $\hh$
having uniformly bounded norm.

\begin{theorem} \label{teo:reg}
Let $r<\infty$, then under assumption (A2)
$$
 \lim_{\n\to\infty}\Prob{\sup_{\norh{f}\leq r}|\Regn(f)- \Reg(f)|> \epsilon}=0 \qquad \forall \epsilon>0.
 $$
\end{theorem}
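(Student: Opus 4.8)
The plan is to bound the supremum, for each realization of the data, by a quantity that does not depend on $f$ and that converges to zero in probability. First I would reduce the problem to controlling each variable separately: since $\Regn(f)=\sum_{\jj=1}^\dd\nor{\der_\jj f}_\n$ and $\Reg(f)=\sum_{\jj=1}^\dd\nor{\der_\jj f}_\marg$, the triangle inequality gives
$$
\sup_{\norh f\le r}|\Regn(f)-\Reg(f)|\le \sum_{\jj=1}^\dd \sup_{\norh f\le r}\big|\,\nor{\der_\jj f}_\n-\nor{\der_\jj f}_\marg\,\big|.
$$
Using the elementary inequality $|\sqrt a-\sqrt b|\le\sqrt{|a-b|}$ for $a,b\ge 0$, each summand is at most $\big(\sup_{\norh f\le r}\big|\nor{\der_\jj f}_\n^2-\nor{\der_\jj f}_\marg^2\big|\big)^{1/2}$, so it suffices to control the difference of the \emph{squared} empirical and population seminorms.

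Second, I would recast these squared seminorms as quadratic forms of self-adjoint operators on $\hh$. By the derivative reproducing property $\frac{\partial f(x)}{\partial x^\jj}=\scalh{f}{(\derh_\jj k)_x}$, setting
$$
T_{\n,\jj}=\frac{1}{\n}\sum_{\ii=1}^\n (\derh_\jj k)_{x_\ii}\otimes(\derh_\jj k)_{x_\ii},\qquad
T_\jj=\int_\X (\derh_\jj k)_x\otimes(\derh_\jj k)_x\,d\marg(x),
$$
one gets $\nor{\der_\jj f}_\n^2=\scalh{f}{T_{\n,\jj}f}$ and $\nor{\der_\jj f}_\marg^2=\scalh{f}{T_\jj f}$, where $T_\jj$ is a well-defined Bochner integral because assumption (A2) gives $\norh{(\derh_\jj k)_x}\le\kader$ uniformly (and continuity in $x$, as $k\in\mathcal{C}^2$), hence the integrand is bounded with $\nor{(\derh_\jj k)_x\otimes(\derh_\jj k)_x}_{\mathrm{HS}}\le\kader^2$. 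Consequently
$$
\sup_{\norh f\le r}\big|\nor{\der_\jj f}_\n^2-\nor{\der_\jj f}_\marg^2\big|
=\sup_{\norh f\le r}\big|\scalh{f}{(T_{\n,\jj}-T_\jj)f}\big|\le r^2\,\nor{T_{\n,\jj}-T_\jj}_{\mathrm{HS}},
$$
so that $\sup_{\norh f\le r}|\Regn(f)-\Reg(f)|\le r\sum_{\jj=1}^\dd \nor{T_{\n,\jj}-T_\jj}_{\mathrm{HS}}^{1/2}$, a deterministic bound free of $f$.

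Finally I would invoke a law of large numbers in the Hilbert space of Hilbert–Schmidt operators. For fixed $\jj$, the summands $(\derh_\jj k)_{x_\ii}\otimes(\derh_\jj k)_{x_\ii}$ are i.i.d., bounded in Hilbert–Schmidt norm by $\kader^2$, with mean $T_\jj$, so $T_{\n,\jj}$ is their empirical average. A direct variance computation gives $\E\,\nor{T_{\n,\jj}-T_\jj}_{\mathrm{HS}}^2\le\kader^4/\n$, whence Markov's inequality yields $\Prob{\nor{T_{\n,\jj}-T_\jj}_{\mathrm{HS}}>\delta}\le \kader^4/(\n\delta^2)\to 0$ for every $\delta>0$; that is, $\nor{T_{\n,\jj}-T_\jj}_{\mathrm{HS}}\to 0$ in probability. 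Since there are only finitely many ($\dd$) indices, a union bound shows $r\sum_{\jj=1}^\dd\nor{T_{\n,\jj}-T_\jj}_{\mathrm{HS}}^{1/2}\to 0$ in probability, which, combined with the deterministic bound above, proves the claim. The only genuinely nontrivial ingredient is this concentration of the empirical second-moment operator $T_{\n,\jj}$ to $T_\jj$; everything else is deterministic functional-analytic manipulation. One could replace the mean-square/Markov argument by a Hoeffding-type inequality in Hilbert space to obtain an explicit exponential rate, but for convergence in probability the simple variance bound suffices.
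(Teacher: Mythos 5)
Your proof is correct and follows essentially the same route as the paper: the same term-by-term decomposition, the inequality $|\sqrt{a}-\sqrt{b}|\leq\sqrt{|a-b|}$, the reformulation of the squared seminorms as quadratic forms of the empirical and population second-moment operators $\dern_\jj^*\dern_\jj$ and $\der_\jj^*\der_\jj$, and concentration of the former around the latter in Hilbert--Schmidt norm. The only difference is the final probabilistic ingredient --- you use a variance bound plus Markov, whereas the paper invokes an exponential Hilbert-space concentration inequality to get the explicit $n^{-1/4}$ rate it reuses in later theorems --- but for the convergence-in-probability statement as given, your argument suffices.
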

The restriction to functions such that $\norh{f}\leq r$  is natural and is required since 
the penalty $\Regn$ forces the partial derivatives to be zero only on the training set points.
To guarantee that  a partial derivative,  which is zero on the training set,  
is also  close to zero on the rest of the input space, we must control the smoothness of the function class where the derivatives
are computed. This motivates constraining the function class by adding    the (squared) norm  in $\hh$ into \eqref{algo_init}. 
This is in the same spirit of the manifold regularization proposed in \cite{BelNiy08}. 

The above result on the consistency of the derivative based regularizer  is at the basis of the following consistency result.
\begin{theorem}\label{teo:consistency}
Under assumptions A1, A2 and A3, recalling that $\err(f)=\int(y-f(x))^2\,d\rho(x,y)$,
$$
 \lim_{\n\to\infty}\Prob{\err(\fzn)-\inf_{f \in \hh} \err(f) \geq\epsilon}=0\qquad \forall \epsilon>0,
 $$
 for any $\pone_\n$  satisfying
 $$
 \pone_\n\to0\qquad (\sqrt{\n}\pone_\n)^{-1}\to 0.
 $$
\end{theorem}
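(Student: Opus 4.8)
\emph{Plan.} I would run the classical comparison (oracle) argument, inserting the minimizer of the \emph{deterministic} regularized risk to split the excess risk into an approximation term and two stochastic terms. Set $\err^{\pone_\n}(f):=\err(f)+\pone_\n\lp 2\Reg(f)+\taumu\norh{f}^2\rp$, the population analogue of $\emp^{\pone_\n}$, and let $f_{\pone_\n}:=\argmin_{f\in\hh}\err^{\pone_\n}(f)$, which exists and is unique since $\err^{\pone_\n}$ is coercive and strongly convex. As the penalty $2\Reg+\taumu\norh{\cdot}^2$ is nonnegative, $\err(\fzn)\le\err^{\pone_\n}(\fzn)$; combining this with the minimality $\emp^{\pone_\n}(\fzn)\le\emp^{\pone_\n}(f_{\pone_\n})$ gives
\[
\err(\fzn)-\inf_{f\in\hh}\err(f)\le \underbrace{\lp\err^{\pone_\n}(f_{\pone_\n})-\inf_{f\in\hh}\err(f)\rp}_{\mathcal A(\pone_\n)} + \lp\emp^{\pone_\n}(f_{\pone_\n})-\err^{\pone_\n}(f_{\pone_\n})\rp+\lp\err^{\pone_\n}(\fzn)-\emp^{\pone_\n}(\fzn)\rp.
\]
Comparing each regularized functional with its value at $f=0$ and using $\Y\subseteq[-M,M]$ (A3) yields the a priori bound $\norh{\fzn},\norh{f_{\pone_\n}}\le M/\sqrt{\pone_\n\taumu}=:r_\n$, so both functions lie in the ball of radius $r_\n$.

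Next I would show the approximation term $\mathcal A(\pone_\n)$ vanishes. Under (A2) the estimate $|\derh_\jj f(x)|=|\scalh{f}{(\derh_\jj k)_x}|\le\kader\norh{f}$ gives $\Reg(f)\le\dd\kader\norh{f}<\infty$ for every $f\in\hh$, so for any $\delta>0$ one may fix $f_\delta\in\hh$ with $\err(f_\delta)<\inf_{f\in\hh}\err(f)+\delta$ and bound $\mathcal A(\pone_\n)\le\delta+\pone_\n\lp2\dd\kader\norh{f_\delta}+\taumu\norh{f_\delta}^2\rp$, which is below $2\delta$ once $\pone_\n$ is small; hence $\mathcal A(\pone_\n)\to0$ as $\pone_\n\to0$. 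Each stochastic term splits, after the identical $\taumu\norh{\cdot}^2$ contributions cancel, into a risk part $\emp(f)-\err(f)$ and a penalty part $2\pone_\n(\Regn(f)-\Reg(f))$, evaluated at $f_{\pone_\n}$ and $\fzn$. For the penalty part I would exploit that $\Regn$ and $\Reg$ are positively $1$-homogeneous, so $\sup_{\norh{f}\le r_\n}|\Regn(f)-\Reg(f)|=r_\n\sup_{\norh{f}\le1}|\Regn(f)-\Reg(f)|$; Theorem~\ref{teo:reg} with $r=1$ sends the unit-ball supremum to $0$ in probability, while the prefactor $\pone_\n r_\n=M\sqrt{\pone_\n/\taumu}\to0$, so the penalty contribution is negligible regardless of the rate condition.

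The crux is the risk part, namely controlling $\sup_{\norh{f}\le r_\n}|\emp(f)-\err(f)|$ over a ball whose radius $r_\n\asymp\pone_\n^{-1/2}$ \emph{grows} with $\n$. By (A1) and (A3) one has $|f(x)|\le\ka r_\n$ and $|y|\le M$, so the squared-loss class $\{(x,y)\mapsto(y-f(x))^2:\norh{f}\le r_\n\}$ is uniformly bounded by $(M+\ka r_\n)^2$ and Lipschitz in $f$; combining the $\ka r_\n/\sqrt{\n}$ bound on the Rademacher complexity of the RKHS ball (from $k(x,x)\le\ka^2$ under A1) with a contraction and concentration argument gives, with high probability,
\[
\sup_{\norh{f}\le r_\n}|\emp(f)-\err(f)|\lesssim\frac{(M+\ka r_\n)\,\ka r_\n}{\sqrt{\n}}\asymp\frac{\ka^2M^2}{\taumu}\cdot\frac{1}{\sqrt{\n}\,\pone_\n}.
\]
Here the hypothesis $(\sqrt{\n}\,\pone_\n)^{-1}\to0$ enters exactly: it forces this growing-ball deviation to $0$ in probability, and this is the main obstacle, the reason the rate condition is stated precisely in this form. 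Collecting the three pieces, each tending to $0$ (deterministically for $\mathcal A(\pone_\n)$, in probability for the two stochastic terms), yields $\err(\fzn)-\inf_{f\in\hh}\err(f)\to0$ in probability, which is the claim.
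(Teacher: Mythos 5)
Your proposal is correct and follows essentially the same route as the paper: the same sample/approximation decomposition through the population regularized minimizer, the same a priori bound $\norh{\fzn},\norh{f^{\pone_\n}}\le M/\sqrt{\pone_\n\taumu}$ obtained by comparison with $f=0$, uniform control of $|\emp-\err|$ over the growing ball of radius $r_\n\asymp\pone_\n^{-1/2}$ (which is exactly where $(\sqrt{\n}\pone_\n)^{-1}\to0$ is used), and Theorem~\ref{teo:reg} for the regularizer deviation. The only differences are cosmetic: you prove the approximation-error convergence directly (the paper cites a standard result from regularization theory), and you control $\sup_{\norh{f}\le r_\n}|\emp(f)-\err(f)|$ via Rademacher complexity and contraction, whereas the paper exploits the quadratic form of the square loss and concentration for Hilbert-space-valued random variables; both yield the same $O(r_\n^2/\sqrt{\n})$ bound.
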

The proof is given in the appendix and is based 
on a  sample/approximation  error decomposition 
$$
\err(\fz)-\inf_{f \in \hh} \err(f)\le \underbrace{|\err(\fz) - \err^{\pone}(f^\tau)|}_{\text{sample error}}+
\underbrace{|\err^{\pone}(f^\tau) - \inf_{f \in \hh} \err(f)|}_{\text{approximation error}},
$$
where 
$$
\err^{\pone}(f):=\err(f)+2\pone\Reg(f)+\ptwo \taumu\nor{f}_\hh^2,\qquad f^\pone:=\argmin_{\hh}\err^{\pone}.
$$
 The control  of both terms allows to find a suitable 
parameter choice which gives consistency. 
When estimating the sample error one has typically to control only the deviation of the empirical risk from its continuos counterpart.
Here we need Theorem \ref{teo:reg}  to also control the deviation of  $\Regn$ from $\Reg$. 
Note that, if the kernel is universal \cite{stechr08}, 
then $\inf_{f\in \hh }\err(f) = \err(\re)$  and Theorem \ref{teo:consistency} gives the  universal consistency of the estimator $\fzn$.\\
%$\err^{\pone,\taumu}$ differs from  $\emp^{\pone,\taumu}$  only for the error term.
%%the penalty used to define $\err^{\pone,\taumu}$ is the same penalty
%%used in the definition of $\emp$. 
%Here, when defining $\err^{\pone,\taumu}$,
% the empirical penalty $\Regn$ used in the definition 
%of $\emp^{\pone,\taumu}$ is replaced with its 
% continuous counterpart $\Reg$.
% For this reason the ability of controlling the deviation of the two penalties, 
% namely Theorem \ref{teo:reg}, is a key ingredient for proving consistency. \\

 To study  the selection properties of the estimator $\fzn$-- see next section-- 
it useful to study the distance of  $\fzn$ to $\re$ in the $\hh$-norm.
Since in general $f_\rho$ might not belong to $\hh$, for the sake of generality here we 
compare $\fzn$ to a  minimizer of $\inf_{f\in \hh}\err(f)$ which we always assume to exist. 
Since the minimizers might be more then one we further consider a suitable  minimal norm minimizer $\fdag$-- see below.
More precisely  given the set $$\mathcal{F}_\hh :=\{f\in\hh ~|~ \err(f)= \inf_{f \in \hh} \err(f)\}$$ (which we assume to be not empty), 
we define 
$$
\fdag:= \argmin_{f \in\mathcal{F}_\hh} \{ \Reg (f) + \taumu\norh{f}^2\}.
$$
Note that  $\fdag$   is well defined and unique, since $\Reg(\cdot)  + \taumu\norh{\cdot}^2$ is  
strongly convex and  $\err$ is convex and lower semi-continuous on $\hh$,  which implies that $\mathcal{F}_\hh$ is closed and convex in $\hh$.
Then, we have the following result.
\begin{theorem}\label{teo:strong_consistency}
Under assumptions A1, A2 and A3, we have
$$
 \lim_{\n\to\infty} \Prob{ \nor{\fzn-\fdag}_\hh\geq \epsilon}=0,\qquad \forall \epsilon>0,
$$
for any $\pone_\n$ such that $ \pone_\n\to 0$ and $(\sqrt{\n}\pone^2_\n)^{-1}\to 0$.
\end{theorem}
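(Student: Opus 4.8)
The plan is to reduce $\hh$-norm consistency to two pieces via the triangle inequality,
\[
\norh{\fzn-\fdag}\le\norh{\fzn-f^{\pone}}+\norh{f^{\pone}-\fdag},
\]
where $f^{\pone}=\argmin_{\hh}\err^{\pone}$ is the minimizer of the ideal regularized functional $\err^{\pone}(f)=\err(f)+2\pone\Reg(f)+\pone\taumu\norh{f}^2$. I would treat the first term (the \emph{sample error}) stochastically and the second (the \emph{approximation error}) as a deterministic property of the regularization path as $\pone\to0$.

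\textbf{Sample error.} Since the squared-norm term makes $\err^{\pone}$ strongly convex with modulus $2\pone\taumu$ and $f^{\pone}$ is its minimizer, I would start from
\[
\pone\taumu\norh{\fzn-f^{\pone}}^2\le\err^{\pone}(\fzn)-\err^{\pone}(f^{\pone}).
\]
Because $\fzn$ minimizes the empirical functional $\emp^{\pone}$, one has $\emp^{\pone}(\fzn)\le\emp^{\pone}(f^{\pone})$, so inserting and cancelling the middle term gives
\[
\err^{\pone}(\fzn)-\err^{\pone}(f^{\pone})\le\big(\err^{\pone}-\emp^{\pone}\big)(\fzn)+\big(\emp^{\pone}-\err^{\pone}\big)(f^{\pone}).
\]
Since $\err^{\pone}-\emp^{\pone}=(\err-\emp)+2\pone(\Reg-\Regn)$, this reduces the problem to controlling, at $\fzn$ and $f^{\pone}$, the risk deviation $|\err-\emp|$ and the regularizer deviation $\pone|\Reg-\Regn|$. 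Testing the minimizers against $f=0$ (using A3, so that $\emp(0),\err(0)\le M^2$) bounds both norms by $\norh{\fzn},\norh{f^{\pone}}\le r_\n:=M/\sqrt{\pone_\n\taumu}$. I would then apply a quantitative, finite-ball version of Theorem \ref{teo:reg}, giving $\sup_{\norh{f}\le r}|\Regn(f)-\Reg(f)|\lesssim(1+r)/\sqrt\n$, together with a standard Rademacher/concentration bound $\sup_{\norh{f}\le r}|\err(f)-\emp(f)|\lesssim(1+r)^2/\sqrt\n$ (valid since A1 gives $|f(x)|\le\ka\norh{f}$ and A3 bounds $y$). Substituting $r=r_\n\sim\pone_\n^{-1/2}$ yields, with high probability,
\[
\err^{\pone}(\fzn)-\err^{\pone}(f^{\pone})\lesssim\frac{r_\n^2}{\sqrt\n}+\pone_\n\frac{r_\n}{\sqrt\n}\lesssim\frac{1}{\pone_\n\sqrt\n},
\]
and hence $\norh{\fzn-f^{\pone}}^2\lesssim1/(\pone_\n^2\taumu\sqrt\n)$, which tends to zero precisely under the hypothesis $(\sqrt\n\pone_\n^2)^{-1}\to0$.

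\textbf{Approximation error.} Here I would run the classical Tikhonov-path argument. Comparing $f^{\pone}$ with any fixed minimizer of $\err$ shows that the penalty $\Reg(f^{\pone})+\taumu\norh{f^{\pone}}^2$ stays bounded and that $\err(f^{\pone})\to\inf_\hh\err$; hence $\{f^{\pone}\}$ is bounded in $\hh$ and admits weak cluster points as $\pone\to0$. Lower semicontinuity of $\err$ and of the penalty forces every such cluster point to lie in $\mathcal{F}_\hh$ and to have penalty no larger than that of $\fdag$, so by strong convexity (uniqueness of the minimal-penalty element) the only cluster point is $\fdag$. Finally, convergence of the norms $\norh{f^{\pone}}\to\norh{\fdag}$ (extracted from the penalty comparison) upgrades weak convergence to strong convergence in $\hh$, so $\norh{f^{\pone}-\fdag}\to0$. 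Combining the two bounds then yields the claim. \textbf{The main obstacle} is the sample-error step: because $\pone_\n\to0$, the ball containing $\fzn$ and $f^{\pone}$ has radius $r_\n\sim\pone_\n^{-1/2}\to\infty$, so Theorem \ref{teo:reg} cannot be invoked as stated and one must have explicit concentration rates in $r$ for \emph{both} the derivative-based regularizer and the squared loss, and then verify that their growth in $r_\n$, after division by the strong-convexity factor $\pone_\n\taumu$, still matches the rate $(\sqrt\n\pone_\n^2)^{-1}\to0$ assumed in the statement.
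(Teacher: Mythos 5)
Your proposal is correct and reaches the same decomposition into sample and approximation error, but the sample-error step follows a genuinely different route from the paper. Where you pass directly through strong convexity of $\err^{\pone}$ (modulus $2\pone\taumu$ from the term $\pone\taumu\norh{\cdot}^2$), writing $\pone\taumu\norh{\fzn-f^{\pone}}^2\le \err^{\pone}(\fzn)-\err^{\pone}(f^{\pone})\le(\err^{\pone}-\emp^{\pone})(\fzn)+(\emp^{\pone}-\err^{\pone})(f^{\pone})$, the paper instead invokes the Attouch--Wets machinery of \cite{villa2010epi}: Theorem 2.6 there bounds $\psi_{\pone\taumu}^{\diamond}(\norh{\fz-f^{\pone}})$ by the localized epi-distance $d_{M/\sqrt{\pone\taumu}}$ between the translated functionals, Theorem 2.7 bounds that epi-distance by the uniform deviation $\sup_{\norh{f}\le 2M/\sqrt{\pone\taumu}}|\err^{\pone}(f)-\emp^{\pone}(f)|$, and one then inverts $\psi^{\diamond}$. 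Both routes end up feeding the same two uniform deviation bounds (Lemma \ref{lemma:e} for the risk and the quantitative form \eqref{eq:reg} of Theorem \ref{teo:reg} for the regularizer) on the same ball of radius $O(\pone_\n^{-1/2})$, and both produce a bound of order $(\sqrt{\n}\pone_\n^2)^{-1}$ for $\norh{\fzn-f^{\pone}}^2$. Your argument is the more elementary and self-contained of the two; the paper's epi-convergence route is more general (it does not require identifying the strong-convexity constant and extends to settings where the penalty is only coercive). Your approximation-error sketch is exactly the standard Tikhonov-path argument that the paper outsources to Proposition \ref{prop:zolezziB} (\cite{donzol93}).

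One quantitative slip, which you partly anticipate in your closing caveat: the uniform deviation of the derivative regularizer is \emph{not} $O(r/\sqrt{\n})$. The paper's proof of Theorem \ref{teo:reg} controls $\bigl|\nor{\dern_\jj f}_\n^2-\nor{\der_\jj f}_{\marg}^2\bigr|$ at rate $r^2/\sqrt{\n}$ via operator concentration and then must take a square root (since $\nor{\der_\jj f}_{\marg}$ may vanish, one only has $|\sqrt{x}-\sqrt{y}|\le\sqrt{|x-y|}$), yielding $\sup_{\norh{f}\le r}|\Regn(f)-\Reg(f)|=O(r\dd/\n^{1/4})$. With $r_\n\sim\pone_\n^{-1/2}$ this contributes an extra term of order $\pone_\n^{1/2}/\n^{1/4}$ to $\err^{\pone}(\fzn)-\err^{\pone}(f^{\pone})$, hence $\bigl(\pone_\n^{1/2}\taumu\, \n^{1/4}\bigr)^{-1}=\bigl((\pone_\n\sqrt{\n})^{-1}\bigr)^{1/2}\taumu^{-1}$ to $\norh{\fzn-f^{\pone}}^2$, which still tends to zero under $(\sqrt{\n}\pone_\n^2)^{-1}\to 0$ and $\pone_\n\to 0$. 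So the weaker, correct rate does not break your proof; it only changes the constants and the secondary term in the final bound, matching \eqref{sample_errB} in the paper.
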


The proof, given in Appendix \ref{app:cons},  is based
on the decomposition in sample error, $\| \fz - f^\tau\|_\hh$, 
and approximation error, $\|f^{\pone} - \fdag\|_\hh$.
To bound the sample error we use recent   results \cite{villa2010epi} that exploit Attouch-Wets convergence 
\cite{attwet191,attwet293,attwet393} 
and coercivity  of the penalty (ensured by the RKHS norm)
to control  the distance between the minimizers $\fz,f^{\pone} $ by the distance the minima 
 $\emp^{\pone}(\fz)$ and $\err^{\pone}(f^{\pone})$. Convergence of the approximation error 
is again guaranteed by standard results in regularization theory \cite{donzol93}. We underline that our result is an asymptotic one, 
although it would be interesting to get an explicit learning rate, as we discuss in Section \ref{sec:rates}.

\subsection{Selection properties}\label{sec:selection}
We next  consider the selection properties of our method.  
Following Equation \eqref{nonparzero}, we start by giving the definition of relevant/irrelevant variables and sparsity in our context. 

\begin{definition}\label{def:irrel}
We say that a variable $a=1, \dots, \dd$ is  {\em irrelevant} with respect to $\rho$ for a differentiable function $f$,
if the corresponding partial derivative 
$\der_\jj f$ is zero $\marg$-almost everywhere, and relevant otherwise. In other words the set of relevant variables is 
$$
R_f :=\{\jj\in\{1,\dots,\dd\} ~|~ \nor{\der_\jj f}_{\marg}>0\}.
$$
We say that a differentiable function  $f$ is sparse if $\Regze(f):=|R_f|< d$. 
\end{definition}
\noindent The goal of variable selection is to correctly estimate the set of relevant variables, $\I:=R_{\fdag}$.
In the following we study how this can be achieved by the empirical set of  relevant variables, $\Izn$, defined as
$$\Izn :=\{\jj\in\{1,\dots,\dd\} | \nor{\dern_\jj \fzn}_{\n}>0\}.$$

\begin{theorem}\label{teo:subset}
%Let $\Izn :=\{\jj\in\{1,\dots,\dd\} | \nor{\dern_\jj \fzn}_{\n}>0\}$,
%then
Under assumptions A1, A2 and A3
$$
\lim_{\n\to\infty} \Prob{\I\subseteq \Izn} = 1
$$
for any $\pone_\n$ satisfying 
$$
\pone_\n\to 0\qquad (\sqrt{\n}\pone_\n^2)^{-1}\to 0.
 $$
\end{theorem}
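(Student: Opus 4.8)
The plan is to prove a ``no false negatives'' statement: every relevant variable is detected with probability tending to one. Fix $\jj \in \I = R_{\fdag}$, so that $c_\jj := \nor{\der_\jj \fdag}_{\marg} > 0$ by Definition \ref{def:irrel}. Since $\Izn = \{\jj : \nor{\dern_\jj \fzn}_\n > 0\}$, the failure event satisfies $\{\I \not\subseteq \Izn\} = \bigcup_{\jj \in \I}\{\nor{\dern_\jj \fzn}_\n = 0\} \subseteq \bigcup_{\jj \in \I}\{\nor{\dern_\jj \fzn}_\n \le c_\jj/2\}$. As $\I$ contains at most $\dd$ indices, a union bound reduces the claim to showing $\Prob{\nor{\dern_\jj \fzn}_\n \le c_\jj/2} \to 0$ for each fixed $\jj \in \I$.

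The core estimate I would establish is that $\nor{\dern_\jj \fzn}_\n \to c_\jj$ in probability. Inserting the intermediate quantity $\nor{\dern_\jj \fdag}_\n$ and using the reverse triangle inequality,
$$
\left| \nor{\dern_\jj \fzn}_\n - \nor{\der_\jj \fdag}_\marg \right| \le \nor{\dern_\jj(\fzn - \fdag)}_\n + \left| \nor{\dern_\jj \fdag}_\n - \nor{\der_\jj \fdag}_\marg \right|,
$$
and I would show each term on the right vanishes in probability. For the first term, assumption [A2] makes the empirical derivative operator bounded uniformly in the sample: $\nor{\dern_\jj g}_\n \le \kader \norh{g}$ for all $g \in \hh$ and every realization of $\vz$ (indeed $(\dern_\jj g)_\ii = \scalh{g}{(\derh_\jj k)_{x_\ii}}$, so $|(\dern_\jj g)_\ii| \le \kader \norh{g}$). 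Taking $g = \fzn - \fdag$ bounds this term by $\kader \norh{\fzn - \fdag}$, which tends to $0$ in probability by Theorem \ref{teo:strong_consistency}, whose hypotheses $\pone_\n \to 0$ and $(\sqrt{\n}\pone_\n^2)^{-1} \to 0$ coincide with those assumed here.

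For the second term, $\fdag$ is a fixed function, so $\nor{\dern_\jj \fdag}_\n^2 = \frac 1 \n \sum_{\ii=1}^\n (\der_\jj \fdag(x_\ii))^2$ is an average of i.i.d.\ random variables bounded by $\kader^2 \norh{\fdag}^2$ (again by [A2]) with expectation $\nor{\der_\jj \fdag}_\marg^2$; a law of large numbers of the same nature as the one underlying Theorem \ref{teo:reg} gives convergence in probability, and continuity of the square root transfers it to the norms. Combining the two bounds yields $\nor{\dern_\jj \fzn}_\n \to c_\jj$ in probability, hence $\Prob{\nor{\dern_\jj \fzn}_\n \le c_\jj/2} \le \Prob{|\nor{\dern_\jj \fzn}_\n - c_\jj| \ge c_\jj/2} \to 0$, which closes the union bound.

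I expect the substantive input to be the $\hh$-norm consistency of Theorem \ref{teo:strong_consistency}; granting it, the remaining steps are routine. The one delicate point is that the operator $\dern_\jj$ and the estimator $\fzn$ are built from the same sample, so I deliberately route the estimate through the fixed target $\fdag$ --- invoking the deterministic, sample-wise operator bound of [A2] for the data-dependent term and a plain law of large numbers for the fixed-function term --- rather than applying the uniform deviation bound of Theorem \ref{teo:reg} directly to the random argument $\fzn$. Note also that this is a one-sided (containment) result only: the reverse inclusion would additionally require controlling the behavior of the estimated derivatives for the irrelevant variables, which the present argument does not address.
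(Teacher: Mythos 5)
Your proposal is correct and follows the same overall architecture as the paper's proof: a union bound over $\jj\in\I$, reducing the claim to showing that $\nor{\dern_\jj \fzn}_\n$ concentrates near $\nor{\der_\jj \fdag}_{\marg}>0$, with the $\hh$-norm consistency of Theorem \ref{teo:strong_consistency} as the substantive input. Where you genuinely diverge is in how that concentration is established. The paper's Lemma \ref{lemma:der_rate} works with \emph{squared} norms and decomposes $\nor{\dern_\jj \fzn}^2_\n-\nor{\der_\jj \fdag}^2_{\marg}$ into a term $\scalh{\fzn}{(\dern^*_\jj\dern_\jj-\der^*_\jj\der_\jj)\fzn}$, controlled via the operator concentration inequality of Lemma \ref{lemma:prob}(4) together with the a priori bound $\norh{\fzn}\leq M/\sqrt{\pone\taumu}$, plus a term controlled by $\kader^2\norh{\fzn-\fdag}^2$; this yields an explicit (if cumbersome) quantitative rate involving $a(\n,\pone)$ and $b(\pone)$. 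You instead split at the level of the norms themselves, routing through $\nor{\dern_\jj\fdag}_\n$: the data-dependent piece is handled by the deterministic, sample-wise bound $\nor{\dern_\jj g}_\n\leq\kader\norh{g}$ from [A2], and the randomness is isolated in a plain law of large numbers for the \emph{fixed} function $\fdag$. This is arguably cleaner --- it avoids paying the $1/(\pone\taumu)$ factor that the paper incurs from applying a uniform operator bound to the random argument $\fzn$, and it sidesteps the somewhat delicate algebraic identity in the paper's Lemma \ref{lemma:der_rate} (whose displayed decomposition appears to drop a cross term $2\scalh{\fdag}{\der^*_\jj\der_\jj(\fzn-\fdag)}$, though that term is also bounded by a multiple of $\norh{\fzn-\fdag}$ and the conclusion survives). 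What you give up is the explicit exponential tail bound that the paper's quantitative lemma provides; for the purely asymptotic statement of the theorem, your qualitative convergence-in-probability argument suffices.
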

\noindent The above result shows that the proposed regularization scheme is a safe
filter for variable selection,  since it does not discard relevant variables, in fact,
for a sufficiently large number of training samples, 
the set of truly relevant variables, $\I$,  is  contained with high probability in the 
set of relevant variables identified by the algorithm, $\Izn$. 
The proof of the converse inclusion, giving consistency for variable selection (sometimes 
called sparsistency),  requires further analysis that we postpone to a future work 
(see the discussion in the next subsection).

%%%%%%%%%%%%%%%%%%%%%%%%%%%%%%%%%%%%%%
\subsection{Learning Rates and Sparsity}\label{sec:rates}
%%%%%%%%%%%%%%%%%%%%%%%%%%%%%%%%%%%%%%

The analysis in the previous  sections is asymptotic, so it is  natural to ask about  the finite sample 
behavior of the  proposed method, and in particular  about the implication  
of the sparsity  assumption. Indeed, for a variety of additive models it is possible to prove that 
the sample complexity (the number of samples needed to achieve a given error with a specified probability) 
depends linearly on the sparsity level and in a much milder way to the total number of variables, 
e.g. logarithmically \cite{buhvan11}. Proving similar results in our setting is considerably more complex and in this section 
we discuss the main sources of possible challenges. 

Towards this end, it is interesting to contrast the form of our regularizer  to that of structured sparsity penalties for which 
sparsity results can be derived.  
Inspecting the proof in Appendix \ref{app:cons}, one can see that it  possible to define a suitable family of operators
$V_j,\hat V_j:\hh\to \hh$, with $j=1, \dots,d$,  such that 
\begin{equation}\label{our}
\Reg(f) = \sum_{j=1}^d \norh{V_j f }, \quad \quad \Regn(f) = \sum_{j=1}^d \norh{\hat V _j f }.
\end{equation}
The operators $(V_j)_j$ are positive and self-adjoint and so are the operators $(\hat V_j)_j$.  
The latter  can be shown to be stochastic approximation   of the operators  $(V_j)_j$, in the sense that 
the equalities  $\E[\hat{V}_j^2]=V^2_j$ hold true for all $j=1, \dots,d$. 

It is interesting to compare the above expression to the one for the group lasso penalty, where  for a given linear model, 
the coefficients are assumed to be divided in groups, only few of which are predictive.
 More precisely, given  a collection  of groups of indices $\mathcal{G}=\{G_1,\ldots,G_r\}$,
which forms a partition of the set $\{1,\ldots,p\}$, and  a linear model $f(x)=\scal{\beta}{x}_{\rone^p}$, 
the group lasso penalty is obtained by considering 
$$\reg^{GL}(\beta)=\sum_{\gamma=1}^r \nor{ \beta_{|G_\gamma}}_{\mathbb{R}^{|G_\gamma|}},$$
where, for each $\gamma$,   $ \beta_{|G_\gamma}$ is the $|G_\gamma|$ dimensional vector
obtained restricting a vector $\beta$ to the indices in $G_\gamma$.
If we let $P_\gamma$ be the orthogonal projection on the subspace of $\rone^d$ 
corresponding  $G_\gamma$-th group of indices, we have that  $\scal{P_\gamma \beta}{P_\gamma' \beta'}=0$ for all $\gamma,\gamma' \in \Gamma$ and $\beta,\beta'\in\rone^p$, 
since the groups form a partition of $\{1,\ldots,p\}$. 
Then it is possible to rewrite the group lasso penalty as 
$$\reg^{GL}(\beta)=\sum_{\gamma=1}^r \nor{P_\gamma \beta}_{\mathbb{R}^{|G_\gamma|}}.$$
The above idea can be extended to an infinite dimensional setting to obtain multiple kernel learning (MKL).
Let $\hh$ be a (reproducing kernel) Hilbert space which is the sum of $\Gamma$  disjoint (reproducing kernel) 
Hilbert spaces $(\hh_\gamma, \nor{\cdot}_\gamma)_{\gamma \in \Gamma}$, and 
$P_\gamma:\hh\to \hh_\gamma$ the projections of $\hh$ onto $\hh_\gamma$, then  MKL is induced by the penalty 
$$
\reg^{MKL}(f)=\sum_{\gamma\in \Gamma}\nor{P_\gamma f}_\gamma. 
$$ 

When compared to our derivative based penalty, see~\eqref{our}, one can notice at least two source of difficulties:
\begin{enumerate}
\item  the operators $(V_j)_j$ are not projections and  no simple relation exists among their ranges,
\item in practice we have only access to the  empirical estimates $(\hat V_j)_j$.
\end{enumerate}

Indeed, structured sparsity model induced by  more complex index sets have been considered, 
see for example \cite{jenatton10}, but the penalties are still induced by operators which are orthogonal projections.
Interestingly, a  class of penalties induced by a (possibly countable) family of bounded 
operators ${\cal V}=\{V_\gamma\}_{\gamma \in \Gamma}$-- not necessarily projections-- 
has been considered in \cite{MauPon12}. This class of penalties
can be written as 
$$
\reg^{({\cal V})}(f)= \inf\{ \sum_{\gamma \in \Gamma} \nor{f_\gamma}~|~f_\gamma\in \hh ,  \sum_{\gamma \in \Gamma} V_\gamma f_\gamma=f \}.$$
It is easy to see that the above  penalty does not include the regularizer~\eqref{our} as a special case. 

In conclusion, rewriting our derivative based regularizer as in~\eqref{our} highlights similarity and differences 
with respect to previously studied sparsity methods: indeed many of  these methods are induced by families of 
operators. On the other hand, typically, the operators are assumed to satisfy stringent assumptions which do not 
hold true in our case. Moreover in our case one would have to overcome the difficulties arising from the 
random estimation of the operators. These interesting questions  are outside of the scope of this paper, will be the subject of future work. 

\section{Empirical Analysis} \label{sec:emp_val}
%%%%%%%%%%%%%%%%%%%%%%%%%%%%%%%%%%%%%%%%%%%%

The content of this section is divided into three parts.
First, we describe the choice of tuning parameters. 
Second, we study the properties of the proposed method on simulated data under different parameter settings, and
third, we  compare our method to related regularization methods for learning and variable selection. 

When we refer to our method we always consider a two-step procedure 
based on variable selection via Algorithm \ref{algo:DENOVAS} and 
regression on the selected variable via (kernel) Regularized Least Squares (RLS).
The kernel used in both steps is the same.
Where possible, we applied the same reweighting procedure to the methods we compared with.

\subsection{Choice of tuning parameters}
When using Algorithm \ref{algo:DENOVAS}, once the parameters $n$ and $\nu$ are fixed, 
we evaluate the optimal value of the regularization parameter $\ptwo$ via hold out validation on 
an independent validation set of $nval = n$ samples. The choice of the parameter $\taumu$, and its
influence on the estimator is discussed in the next section.  

Since we use an iterative procedure to compute the solution, the output of our algorithm will not be sparse
in general and a selection criterion is needed. In Subsection \ref{sec:alg_issues} 
we discussed a principled way to select variable using the norm of the coefficients $(\bar{v}^\itext_\jj)_{\jj=1}^\dd$.

When using MKL, $\ell^1$ regularization, and RLS we used hold out validation to set the regularization parameters, while
for COSSO and  HKL we used the choices suggested by the authors. 

%%%%%%%%%%%%%%%%%%%%%%%%%%%%%%%%%%%%%%%%%%%%
\subsection{Analysis of Our Method}
%%%%%%%%%%%%%%%%%%%%%%%%%%%%%%%%%%%%%%%%%%%%

\subsubsection{Role of the smoothness enforcing penalty $\taumu\nor{\cdot}^2_{\hh}$}\label{sec:vario_nu}
%In these two experiments we aim at empirically assessing the role
%of the smoothness enforcing penalty $\ptwo\taumu\nor{\cdot}^2_{\hh}$.
From a theoretical stand point we have shown that $\taumu$ has to be nonzero, 
in order for the proposed regularization problem \eqref{algo_init} to be well-posed.
We also mentioned that 
the combination of the two penalties $\Regn$ and $\norh{\cdot}^2$
ensures that the regularized solution will not depend on variables that are irrelevant for two different reasons.
The first is irrelevance with respect to the output.
% and is captured by \eqref{nonparzero}.
The second type of irrelevance is meant in an unsupervised sense.
This happens when one or more variables are (approximately) constant with respect to the marginal distribution $\marg$,
so that the support of the marginal distribution is (approximately) contained in a coordinate subspace. 
Here we present  two experiments  aimed at empirically assessing the role
of the smoothness enforcing penalty $\nor{\cdot}^2_{\hh}$ and of the parameter $\taumu$.
We first present an experiment where the support of the marginal distribution
approximately coincides with a coordinate subspace $x^2=0$. 
Then we systematically investigate the stabilizing effect of the smoothness enforcing penalty
also when the marginal distribution is not degenerate.

\paragraph{Adaption to the Marginal Distribution}
%We already mentioned that 
%the combination of the two penalties $\Regn$ and $\norh{\cdot}^2$
%ensures that the regularized solution will not depend on variables that are irrelevant for two different reasons.
%The first is irrelevance with respect to the output.
%% and is captured by \eqref{nonparzero}.
%The second type of irrelevance is meant in an unsupervised sense.
%This happens when one or more variables are (approximately) constant with respect to the marginal distribution $\marg$,
%so that the support of the marginal distribution is (approximately) contained in a coordinate subspace. 
We  consider a toy problem in 2 dimensions, where the support of the marginal distribution $\marg(x^1,x^2)$
approximately coincides with the coordinate subspace $x^2=0$. 
Precisely $x^1$ is uniformly sampled from $[-1,1]$, whereas $x^2$ is drawn from a normal distribution $x^2\sim\mathcal{N}(0,0.05)$.
The output labels are drawn from
 $y = (x^1)^2 + w$, where $w$ is  a white noise, sampled from a normal distribution with zero mean and 
variance $0.1$.
Given a training set of $\n=20$ samples i.i.d. drawn from the above distribution (Figure \ref{fig:relevance_toy} top-left), 
we evaluate the optimal value of the regularization parameter $\pone$ 
via hold out validation on an independent validation set of $\n_{\textrm{val}}= \n = 20$ samples. 
We repeat the process for $\taumu=0$ and $\taumu=10$.
In both cases the reconstruction accuracy on the support of $\marg$ is high, see Figure \ref{fig:relevance_toy} bottom-right . 
However, while  $\taumu=10$ our method correctly selects the only relevant variable $x^1$ (Figure \ref{fig:relevance_toy} bottom-left),
when $\taumu=0$ both variables are selected (Figure \ref{fig:relevance_toy} bottom-center),
since functional $\emp^{\pone,0}$ is insensible to errors out of $\text{supp}(\marg)$, 
and the regularization term $\pone\Regn$ alone does not penalizes variations out of $\text{supp}(\marg)$ .
\begin{figure}[t]
\begin{center}
\caption{\label{fig:relevance_toy} 
Effect of the combined regularization $\Regn(\cdot) +\taumu\norh{\cdot}^2$ on a toy problem in $\rone^2$
where  the support of marginal distribution
approximately coincides with the coordinate subspace $x^2=0$.
The output labels are drawn from  $y= (x^1)^2 + w$, with $w\sim\mathcal{N}(0,0.1)$.
}
\includegraphics[width = .8\linewidth]{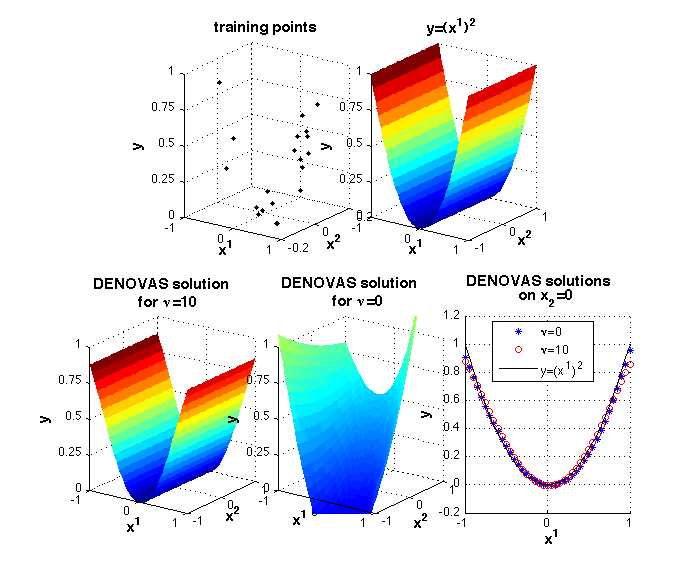}
\end{center}
\end{figure}

\paragraph{Effect of varying $\taumu$}
%From a theoretical stand point we have shown that $\taumu$ has to be nonzero, 
%in order for the proposed regularization problem to be well-posed.
%In the above experiment we empirically assessed the importance of choosing $\taumu>0$
%when the marginal distribution is approximately degenerate.
Here we empirically investigate the stabilizing effect of the smoothness enforcing penalty
when the marginal distribution is not degenerate.
The input variables $x = (x^1,\dots,x^{20})$ are uniformly drawn from $[-1,1]^{20}$.
The output labels are i.i.d. drawn from  $y = \lambda\sum_{\jj=1}^4\sum_{\jjj=\jj+1}^4 x^\jj x^\jjj + w$, 
where $w\sim\mathcal{N}(0,1)$, and $\lambda$ is a rescaling factor that determines the signal to noise ratio to be 15:1. 
We extract training sets of size $\n$ which varies from 40 to 120 with steps of 10.
We then apply our method  with polynomial kernel of degree $p=4$,
letting vary $\taumu$ in $\{0.1, 0.2, 0.5, 1, 2, 5, 10, 20\}$.
 For fixed $\n$ and $\taumu$ we evaluate the optimal value of the regularization parameter 
$\pone$ via hold out validation on an independent validation set of $\n_{\textrm{val}}= \n$ samples. 
We measure the  selection error as the mean of the false negative rate (fraction of relevant variables that were not selected) 
and false positive rate (fraction of irrelevant variables that were selected).
Then, we evaluate the prediction error as  the root mean square error (RMSE) error of the selected model 
on an independent test set of $\n_{\textrm{test}}=500$ samples.   
Finally we average over 50 repetitions.

In Figure \ref{fig:vario_mu} we display the prediction error,   selection error, and computing time,  versus $\n$ for different values of $\taumu$.
Clearly, if $\taumu$ is too small, both prediction and selection are poor.
For $\taumu\geq 1$ the algorithm is quite stable with respect to small variations
of $\taumu$. 
However, excessive increase of the smoothness parameter leads to a decrease 
in prediction and selection performance.  
%Furthermore there seems to be an optimal value, which is in this case $\taumu=1$, 
%which provides the best selection and prediction performance.
In terms of computing time, the higher the smoothness parameter  the better the performance.
\begin{figure}[!h]
	\caption{Selection error (left), prediction error (center),  and computing time (right) 
	versus $\n$ for different values of $\taumu$. 
	The points %and error bars 
	correspond to the mean % and standard deviation 
	over the repetitions. 
	The dotted line represents the white noise standard deviation. 
	In the left figure the curves corresponding to $\taumu=5$, $\taumu=10$, and $\taumu=20$ are overlapping.}
	\label{fig:vario_mu}
	 \begin{center}
	  \includegraphics[width=\linewidth]{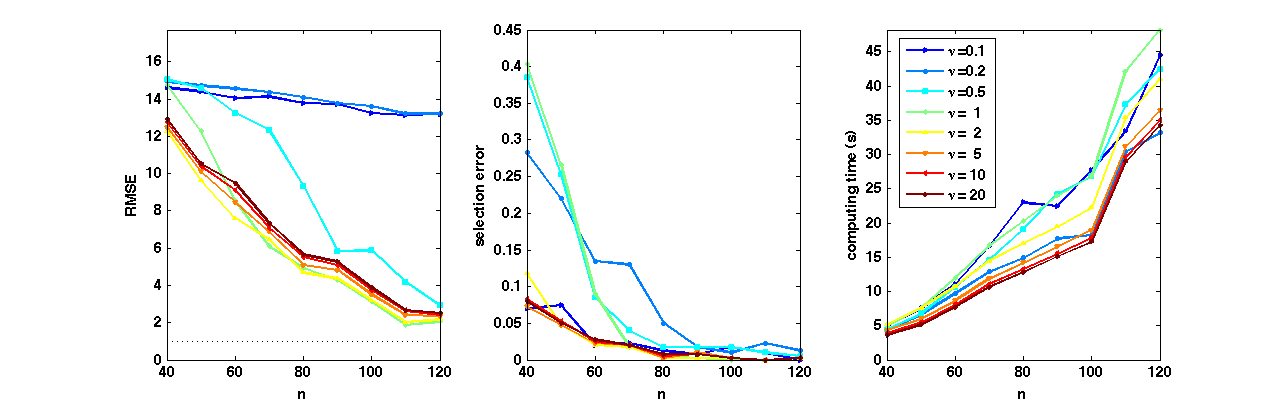}
	  \end{center}
\end{figure}

\subsubsection{Varying the model's parameters}

We present 3 sets of experiments where we evaluated the performance of our method (DENOVAS)
when varying part of the inputs parameters and leaving the others unchanged.
The parameters we take into account are the following
\begin{itemize} 
\item$\n$, training set size
\item$\dd$, input space dimensionality
\item$\deff$,   number of relevant variables
\item$p$, size of the hypotheses space, measured as the degree of the polynomial kernel.  
\end{itemize}

In all the following experiments the input variables $x = (x^1,\dots,x^d)$ are uniformly drawn from $[-1,1]^\dd$.
The output labels are computed using a noise-corrupted regression function $f$ that depends nonlinearly from a set of the input variables, 
i.e.  $y = \lambda f(x) + w$, where $w$ is  a white noise, sampled from a normal distribution with zero mean and 
variance 1, and $\lambda$ is a rescaling factor that determines the signal to noise ratio. % to be 3:1. 
 For fixed $\n, \dd,$ and $\deff$ we evaluate the optimal value of the regularization parameter 
$\pone$ via hold out validation on an independent validation set of $\n_{\textrm{val}}= \n$ samples. 
%Concerning the smoothness parameter $\ptwo$, we set it to $\ptwo = 0.01\nor{K}$.
%We measure the  selection error as the mean of the false negative rate (fraction of relevant variables that were not selected) 
%and false positive rate (fraction of irrelevant variables that were selected).
%Then, we evaluate the prediction error as  the root mean square error (RMSE) error of the selected model 
%on an independent test set of $\n_{\textrm{test}}=500$ samples.   
%Finally we average over 50 repetitions.

\paragraph{Varying $\n,\dd$, and $\deff$}
In this experiment we want to empirically evaluate the effect of the input space dimensionality, 
$\dd$, and the number of relevant variables, $\deff$, when the other parameters are left unchanged.
In particular we use $\dd = 10, 20, 30, 40$ and $\deff$ $= 2,3,4,5,6$. 
For each value of $\deff$ we use a different regression function,  
$f(x) = \lambda\sum_{\jj=1}^{\deff}\sum_{\jjj=\jj+1}^{\deff} c_{\jj\jjj} x^\jj x^\jjj$, so that for fixed $\deff$ all 2-way interaction terms are present, 
and the polynomial degree of the regression function is always 2.
The coefficients $c_{\jj\jjj}$ are randomly drawn from $[.5,1]$
And $\lambda$ is determined in order to set the signal to noise ratio as 15:1.
We then apply our method with polynomial kernel of degree $p=2$. 
The  regression function thus always belongs to the hypotheses space.\\
In Figure \ref{fig:vario_d_deff}, we display the selection error, and the prediction error, respectively,  
versus $\n$ for different values of $\dd$ and number of relevant variables  $\deff$.
Both errors decrease with $\n$ and  increase with $\dd$ and $\deff$. 
In order to better visualize the dependance of the selection performance with respect to $\dd$ and $\deff$, 
 in Figure \ref{fig:NMINvario_d_deff} we plotted
the minimum number of input points that are necessary in order to achieve $10\%$ of average selection error. 
It is clear by visual inspection  that $\deff$ has a higher influence  than $\dd$ on the selection performance of our method.
%By visual inspection it appears that $\deff$ has a higher influence  than $\dd$ on the perfomance.
%In order to quantify such a behavior we plotted in Figure \ref{fig:NMINvario_d_deff}
%the minimum number of input points that are necessary in order to achieve $10\%$ of average selection error. 
%The dependance of such quantity with respect to the number of relevant variables $\deff$ is approximately linear.
	\begin{figure}[!t]
	\caption{ Prediction error (top) and selection error (bottom) versus $\n$ 
	for different values of $\dd$ and number of relevant variables ($\deff$).
	The points correspond to the means over the repetitions.
	The dotted line represents the white noise standard deviation.}
	\label{fig:vario_d_deff}
	 \begin{center}
	  \includegraphics[width=1\linewidth]{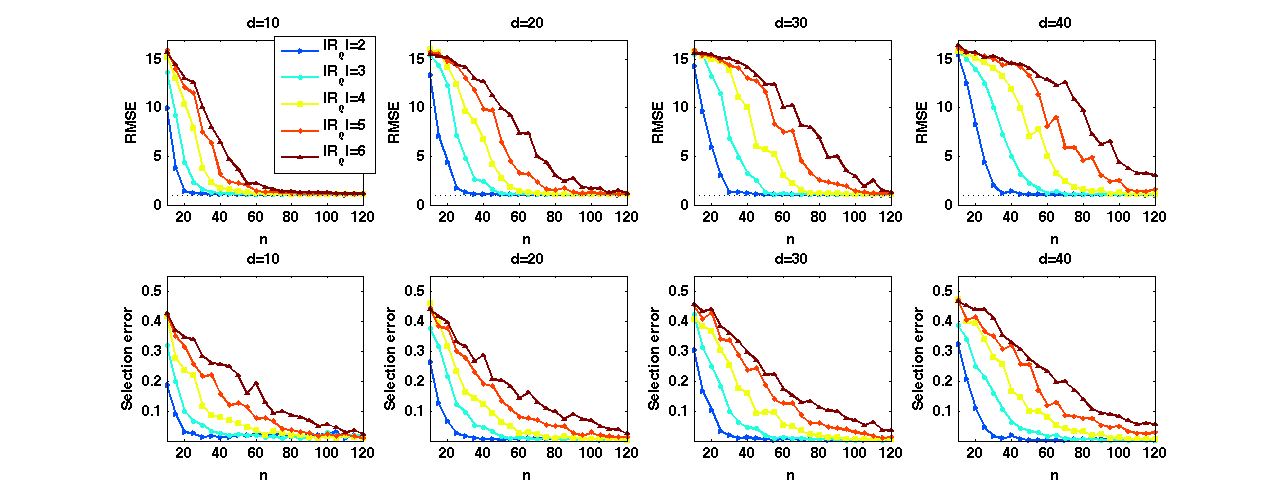}
	  \end{center}
	\end{figure}
	\begin{figure}[!t]
	\caption{ Minimum number of input points ($\n$) necessary to achieve $10\%$ of average selection error 
	 versus the number of relevant variables $\deff$ for different values of $\dd$ (left), and versus $\dd$ for different values of $\deff$ (right).}
	\label{fig:NMINvario_d_deff}
	 \begin{center}
	  \includegraphics[width=.45\linewidth]{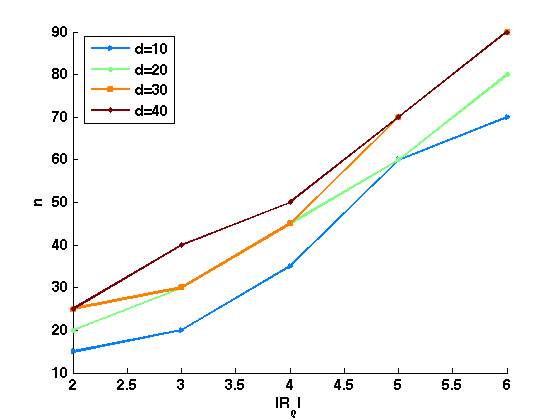}
	  \includegraphics[width=.45\linewidth]{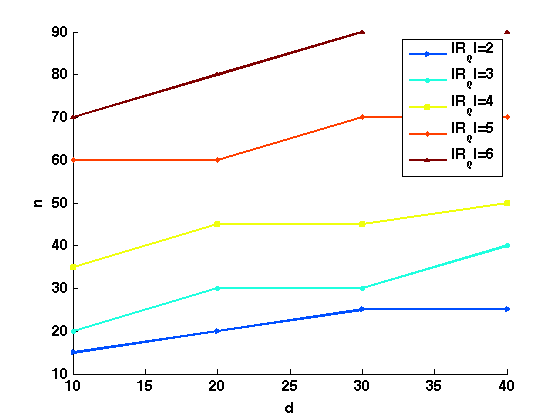}
	  \end{center}
	\end{figure}
	
\paragraph{Varying  $\n$ and $p$}
In this experiment we want to empirically evaluate the effect of the size of the hypotheses space on the performance of our method. We therefore leave unchanged the data generation setting, made exception for the number of training samples, and vary the polynomial kernel degree as $p$ $=1,2,3,4,5,6$.
We let $\dd =  20$, $\I = \{1,2\}$, and  $f(x) = x^1x^2$, and let vary $\n$ from 20 to 80 with steps of 5.
The signal to noise ratio is  3:1.\\
In Figure \ref{fig:vario_p}, we display  the prediction and  selection error,  versus $\n$, for different values of $p$.
For $p\geq2$, that is when the hypotheses space contains the regression function, 
both errors decrease with $\n$ and increase with $p$. 
Nevertheless the effect of $p$ decreases for large $p$, in fact for $p=4,5,$ and $6$, the performance is almost the same.
On the other hand, when the hypotheses space is too small to include the regression function, as for the set of linear functions ($p=1$), the selection error coincides with chance (0.5), and the prediction error is very high, even for large numbers of samples.
	\begin{figure}[!t]
	\caption{ Prediction error (left) and selection error (right) versus $\n$ 
	for different values of the polynomial kernel degree ($p$).
	The points correspond to the means over the repetitions.
	The dotted line represents the white noise standard deviation.}
	\label{fig:vario_p}
	 \begin{center}
	  \includegraphics[width=\linewidth]{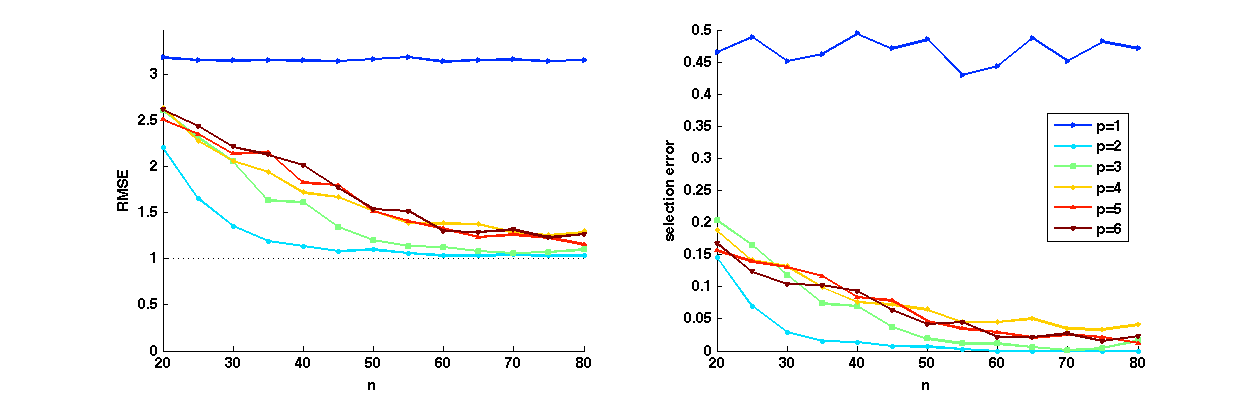}
	  \end{center}
	\end{figure}

\paragraph{Varying  the number of relevant features, for fixed $\deff$: 
comparison with $\ell^1$ regularization on the feature space}
In this experiment we want to empirically evaluate the effect of the number of features involved in the regression function (  that is the number of monomials constituting the polynomial) on the performance of our method  when $\deff$ remains the same as well as  all other input parameters. 
Note that while $\deff$ is the number of relevant variables, here we vary the number of relevant features (not variables!), 
which, in theory, has nothing to do with $\deff$. 
Furthermore we compare the performance of our method  to that of $\ell^1$ regularization on the feature space ($\ell^1$-features).
We therefore leave unchanged the data generation setting, made exception for the regression function.
We set $\dd =  10$, $\I = \{1,2,3\}$, $n=30$, and then use a polynomial kernel of degree 2. 
The signal to noise ratio is this time 3:1.
Note that with this setting the size of the features space is 66.
For fixed number of relevant features the regression function is set to be a randomly chosen linear combination of the features involving one or two of the first three variables ($x^1, (x^1)^2, x^1x^2, x^1x^3$, etc.), with the constraint that the combination must be a polynomial of degree 2, involving all 3 variables.\\
In Figure \ref{fig:vario_nfeat}, we display  the prediction  and selection error,  versus the number of relevant features.
While the performance of $\ell^1$-features fades when the number of relevant features increases,  our method  presents stable performance both in terms of selection and prediction error. 
From our simulation it appears that, while  our method   depends on the number of relevant variables, it is indeed independent of the number of features.
	\begin{figure}[!t]
	\caption{Prediction error (left) selection error (right)  versus the number of relevant features.
	The points correspond to the means over the repetitions.
	The dotted line represents the white noise standard deviation.}
	\label{fig:vario_nfeat}
	 \begin{center}
	  \includegraphics[width=\linewidth]{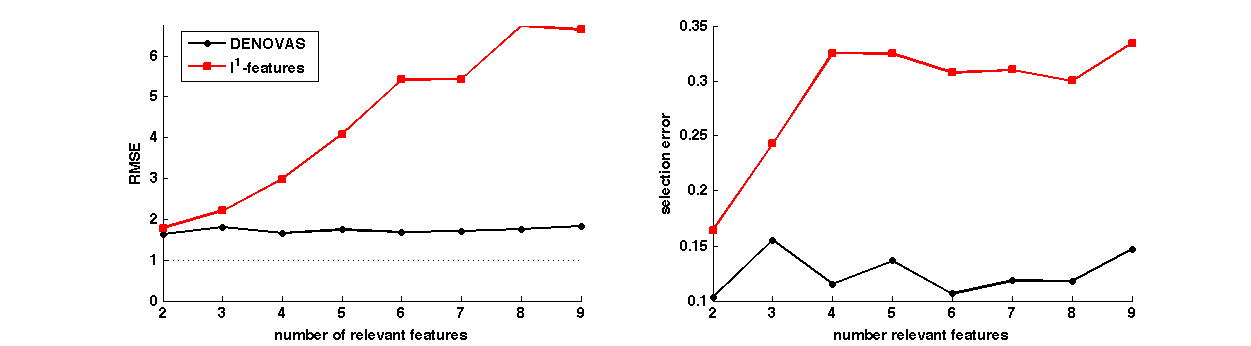}
	  \end{center}
	\end{figure}

%%%%%%%%%%%%%%%%%%%%%%%%%%%%%%%%%%%%%%%%%%%%
\subsection{Comparison with Other Methods}\label{sec:comparisons}
%%%%%%%%%%%%%%%%%%%%%%%%%%%%%%%%%%%%%%%%%%%%
In this section we present numerical experiments aimed at comparing our method 
 with state-of-the-art algorithms. 
In particular, since our method  is a regularization method, we focus on alternative regularization approaches to the problem of nonlinear variable selection.
For comparisons with more general techniques for nonlinear variable selection  we refer the interested reader to \cite{bach09}. 
\subsubsection{Compared algorithms}\label{sec:algos}

We consider the following regularization algorithms:
\begin{itemize}
\item Additive models with multiple kernels, that is  Multiple Kernel Learning (MKL)
\item $\ell^1$ regularization on the feature space associated to a polynomial kernel ($\ell^1$-features)
\item COSSO \cite{lin2006} with 1-way interaction (COSSO1) and 2-way interactions (COSSO2)
\footnote{In all toy data, and in part of the real data, the following warning message was displayed:\\
\texttt{Maximum number of iterations exceeded;
increase options.MaxIter. \\
To continue solving the problem with the current solution as the starting point, \\
set x0 = x before calling lsqlin}.\\
In those cases the algorithm did not reach convergence in a reasonable amount of time, 
therefore  the error bars corresponding to COSSO2 were omitted.}
\item  Hierarchical Kernel Learning \cite{bach09} with polynomial (HKL pol.) and hermite (HKL herm.) kernel
\item  Regularized Least Squares (RLS).
\end{itemize}
Note that, differently from the first 4 methods,  RLS is not a variable selection algorithm, however we consider it since it is typically a good benchmark for the prediction error. 

For $\ell^1$-features and MKL we use our own Matlab implementation based on proximal methods (for details see \cite{mosci2010ecml}).
For COSSO we used the Matlab code available at \url{www.stat.wisc.edu/~yilin} or \url{www4.stat.ncsu.edu/~hzhang} which can deal with 1 and 2-way interactions. 
%We will refer to the two COSSO algorithms as COSSO1 and COSSO2, for 1-way and 2-way interactions, respectively.
For HKL we used the code available online at \url{http://www.di.ens.fr/~fbach/hkl/index.html}.
While for MKL and $\ell^1$-features we are able to identify the set of selected variables, 
for COSSO and HKL  extracting the sparsity patterns from the available code is not straightforward.
We therefore compute the  selection errors only for $\ell^1$-features, MKL, and our method . 

\subsubsection{Synthetic data}

%\subsubsection{Data generation and Learning protocol}
We simulated data with $\dd$ input variables, where each variable is uniformly sampled from [-2,2]. The output $y$ is a nonlinear function  of  the first 4 variables, $y = f(x^1, x^2,x^3,x^4)+\epsilon$, 
where epsilon is a white noise, $\epsilon \sim \mathcal{N}(0,\sigma^2)$, and $\sigma$ is chosen so that the signal to noise ratio is 15:1.
%The regression function $f$ is a polynomial of degree $p$, where the presence of each monomial, $x_1^{p_1}x_2^{p_2}x_3^{p_3}x_4^{p_4}$ with $p_1+p_2+p_3+p_4 \leq p$,  is randomly  determined by a binomial distribution where the probability of success is $1-S$, and $S$ is a measure of sparsity: 
%if $S=0$ all monomials are used to build $f$, whereas if $S$ approaches 1, very few of them are used.
%The weight of each relevant monomial is uniformly sampled from [-1,1].
We consider the 4 models described in Table \ref{tab:toy_design}.
\begin{table}[h]
\caption{Synthetic data design}
\label{tab:toy_design}
\begin{center}
\begin{tabular}{lccccc}
	&~~~~{$\mathbf{\dd}$}~~~~&\begin{tabular}{cc}{\bf number of }\\ {\bf relevant variables}\end{tabular}&~~~~{$\mathbf{\n}$}~~~~&{\bf model ($f$)} \\%&{\bf S2N ratio}\\
\hline
additive p=2&$40$ 	&$4$  &$100$  &$y=\sum_{\jj=1}^4 (x^\jj)^2$\\%&15\\
%additive p=4&$40$  	&$4$  &$100$  &$y=\sum_{j=1}^4 x_j^4$&15\\
2way p=2	&$40$  	&$4$  &$100$  &$y=\sum_{\jj=1}^4\sum_{\jjj=\jj+1}^4x^\jj x^\jjj$\\%&15\\
3way p=6	&$40$  	&$3$  &$100$  &$y= (x^1x^2x^3)^2$\\%&15\\
radial	&$20$  	&$2$  &$100$  &$y=\frac{1}{\pi}((x^1)^2+(x^2)^2)e^{-((x^1)^2+(x^2)^2)}$ \\%&15\\
\end{tabular}
\end{center}
\end{table}

For model selection and testing we follow the same protocol described at the beginning of  Section \ref{sec:emp_val}, with $\n=100,100$ and $1000$ for training, validation and testing, respectively.  Finally we average over 20 repetitions.
In the first 3 models, for MKL, HKL, RLS and our method  we employed the polynomial kernel of degree $p$, where $p$ is the polynomial degree of the regression function $f$.
For $\ell^1$-features  we used the polynomial kernel with degree chosen as the minimum between the polynomial degree of $f$ and 3. This was due to computational reasons, 
in fact,  with $p=4$  and $\dd=40$, the number of features is highly above $100,000$. %(135,751)
%\subsubsection{Results}
For the last model, we used the polynomial kernel of degree  $4$ for MKL, $\ell^1$-features and HKL, 
and the Gaussian kernel with kernel parameter $\sigma=2$ for RLS and our method \footnote{Note that here we are interested in evaluating the ability of our method  of dealing with a general kernel like the Gaussian kernel, not in the choice of the kernel parameter itself.
Nonetheless, a data driven choice for $\sigma$ will be presented in the real data experiments in Subsection \ref{sec:real_data}.}.
COSSO2 never reached convergence. 
% chosen as the   
%mean over the samples of the euclidean distance form the 20-th nearest neighbor. 
Results in terms of prediction and selection errors are reported  in Figure \ref{fig:toy_pred_err}. 

\begin{figure}[h!]
	  \caption{Prediction error (top) and fraction of selected variables (bottom) on synthetic data for 
	  the proposed method (DENOVAS), multiple kernel learning for additive models (MKL), 
	  $\ell^1$ regularization on the feature space associated to a polynomial kernel ($\ell^1$-features),
	  COSSO with 1-way interactions (COSSO1), hierarchical kernel learning with polynomial kernel (HKL pol.),
	  and   regularized least squares (RLS).
	   The dotted line in the upper plot corresponds  to the white noise standard deviation.
	   Selection errors for COSSO, and HKL are not reported because they are not straightforwardly computable from the available code.
	   }
\label{fig:toy_pred_err}
	 \begin{center}
		  \includegraphics[width=1\linewidth]{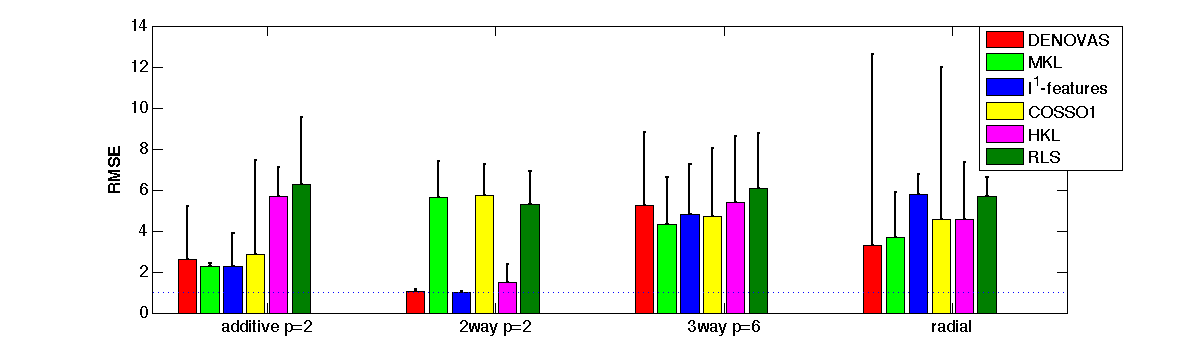}
%	  \end{center}
%\end{figure}
%	
%\begin{figure}
%\caption{Selection Error for each algorithms on synthetic data. 
%The black areas represent the fractions of false positives}
%\label{fig:toy_data_sel_err}
%	 \begin{center}
		  \includegraphics[width=1\linewidth]{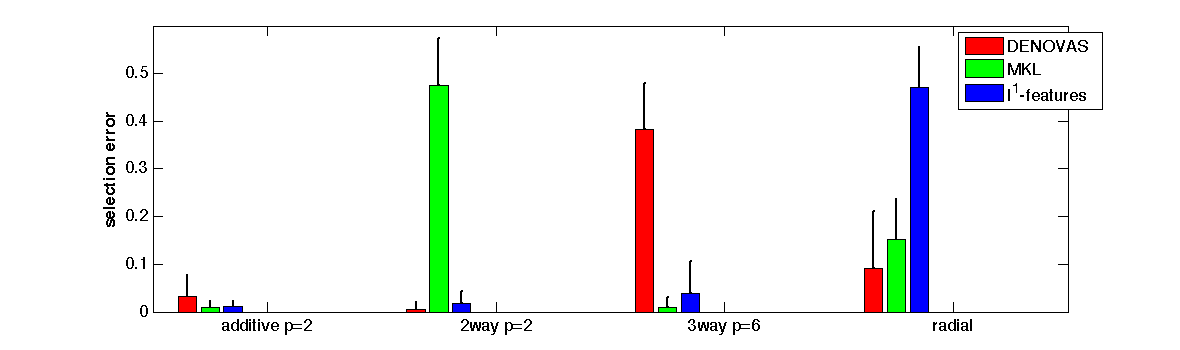}
	  \end{center}
\end{figure}

When the regression function is simple (low interaction degree or low 
polynomial degree) more tailored algorithms, such as MKL--which is additive by design-- for the experiment ``additive p=2'', or $\ell^1$-features for experiments ``2way p=4'' -- in this case the dictionary size is less than 1000--,  compare favorably with respect to our method.
However, when the nonlinearity of the regression function favors the use of a large hypotheses space,
our method  significantly outperforms the other methods. 
This is particularly evident in the experiment ``radial'', which was anticipated  in Section \ref{sec:approach},
where we plotted in Figure \ref{fig:radial_intro} the regression function and its estimates obtained with the different algorithms 
for one of the 20 repetitions.

\subsubsection{Real data}\label{sec:real_data}
We consider the 7 benchmark data sets described in Table \ref{tab:real_data}.
\begin{table}[h!]
\caption{Real data sets}
\begin{center}
\label{tab:real_data}
\begin{tabular}{lcccc}
 &  {\bf number of} & {\bf number of} & &\\
{\bf data name} &  {\bf input variables} & {\bf instances} & {\bf source} & {\bf task}\\
\hline\\
boston housing & 13 & 506 & LIACC\footnote{\url{http://www.liaad.up.pt/~ltorgo/Regression/DataSets.html}} &regression\\
census & 16 & 22784 & LIACC &regression\\
delta ailerons & 5 & 7129 & LIACC&regression\\
stock & 10 & 950 &LIACC &regression\\
image segmentation & 18 & 2310 &  IDA\footnote{IDA benchmark  repository (\url{http://www.fml.tuebingen.mpg.de/Members/raetsch/benchmark})} &classification\\  
pole telecomm & 26\footnote{we removed 12 constant variables} & 15000 &LIACC &regression\\
breast cancer &  32 & 198 & UCI  \footnote{machine learning repository (\url{http://archive.ics.uci.edu/ml/datasets.html})} & regression
\end{tabular}
\end{center}
\end{table}
We build training and validation sets by randomly drawing $n_{\textrm{train}}$ and $n_{\textrm{val}}$ samples, and using the remaining samples for testing. 
For the first 6 data sets we let $n_{\textrm{train}}=n_{\textrm{val}} = 150$, 
whereas for breast cancer data we let $n_{\textrm{train}}=n_{\textrm{val}} =60$.
We then apply the  algorithms described in Subsection \ref{sec:algos}.
with the validation protocol described in  Section \ref{sec:emp_val}.
%,
%setting the  smoothness parameter to $\ptwo = \nor{K}$.
For our method  and RLS we used the gaussian kernel with the kernel parameter $\sigma$ chosen as the mean over the samples of the euclidean distance form the 20-th nearest neighbor. 
Since the other methods cannot be run with the gaussian kernel we used a polynomial kernel of degree $p=3$ for MKL and $\ell^1$-features.
For HKL we used both the polynomial kernel and the hermite kernel, both with $p=3$.
Results in terms of prediction and selection error are reported in Figure \ref{fig:real_data}.\\
Some of the data, such as the stock data, seem not to be variable selection problem, in fact the best performance is achieved by our method  though selecting (almost) all variables, or, equivalently by RLS. Our method  outperforms all other methods on several data sets. 
In most cases, the performance of our method  and RLS are similar. 
Nonetheless our method  brings higher interpretability since it is able to select a smaller subset of relevant variable, while the estimate provided by RLS depends on all variables. 

%\begin{table}
%\begin{tabular}{lccccc}
% &  {\bf number of} & {\bf number of} & &  &\\
%{\bf data name} &  {\bf input variables} & {\bf instances} & {\bf source} & {\bf task} &\\
%\hline\\
%boston housing & 13 & 506 & LIACC\footnote{\url{http://www.liaad.up.pt/~ltorgo/Regression/DataSets.html}} & house value & demographic and geographical characteristics of the neighborhood\\
%census & 16 & 22784 & LIACC & house value &  demographic and market characteristics of the neighborhood\\
%delta ailerons & 5 & 7129 & LIACC&-- &--\\
%stock & 10 & 950 &LIACC & aircraft status & --\\
%image segmentation & 18 & 2310 &  IDA\footnote{IDA benchmark  repository (\url{http://www.fml.tuebingen.mpg.de/Members/raetsch/benchmark})} & region classification & --\\  
%pole telecomm & 26\footnote{we removed 12 constant variables} & 15000 &LIACC &-- &--\\
%breast cancer &  32 & 198 & UCI  \footnote{machine learning repository (\url{http://archive.ics.uci.edu/ml/datasets.html})} & survival time &characteristics of the cell nuclei present in the image
%\end{tabular}
%\end{table}

\begin{figure}
	 \begin{center}
		  \includegraphics[width=1\linewidth]{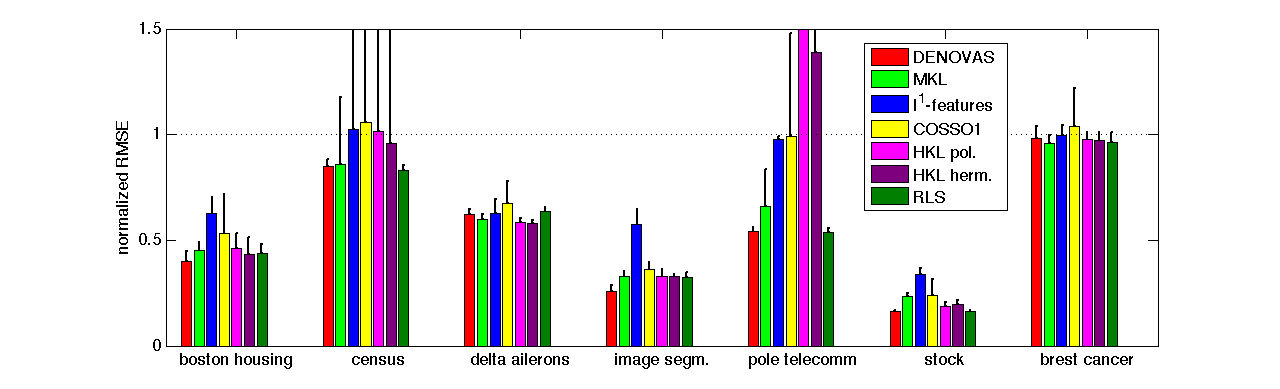}
		  \includegraphics[width=1\linewidth]{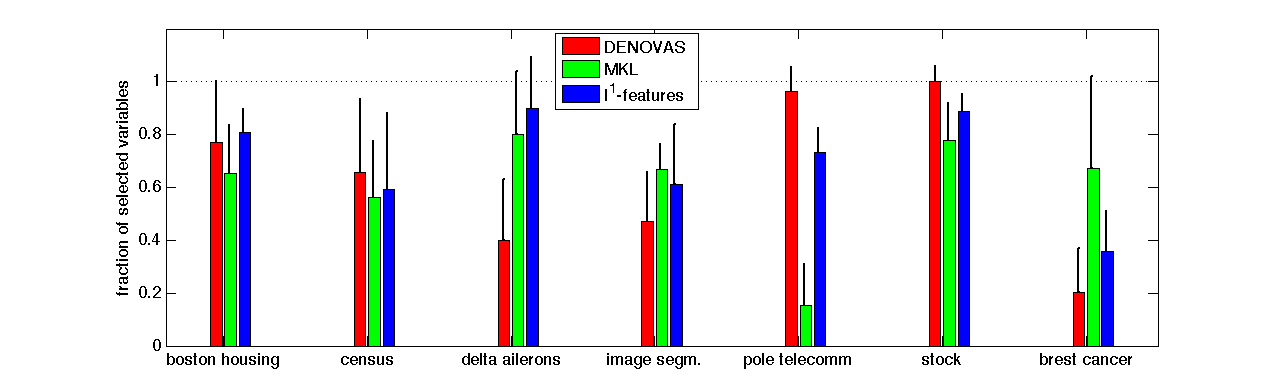}
  		  \includegraphics[width=1\linewidth]{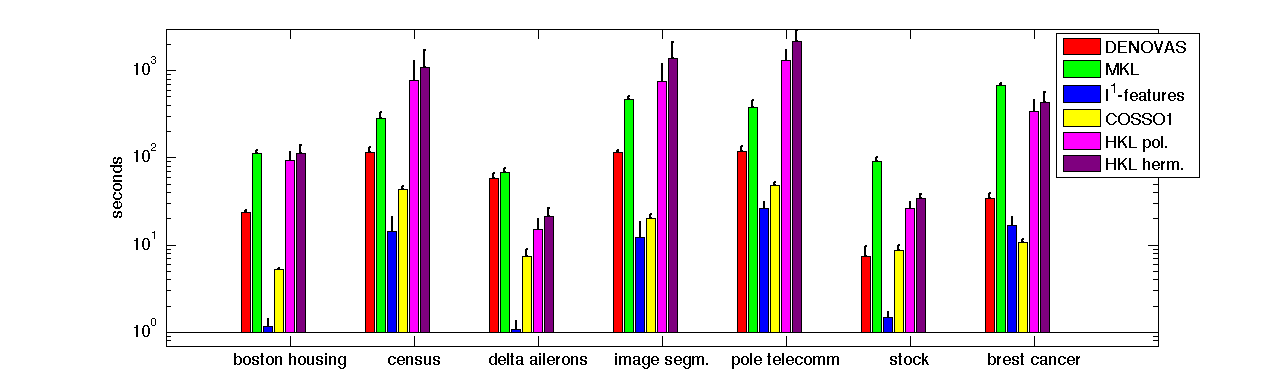}
	  \end{center}
	  \vspace{-0.7cm}
	  \caption{\small{Prediction error (top) and fraction of selected variables (center) and computing time (bottom) on real data for 
	  the proposed method (DENOVAS), multiple kernel learning for univariate additive functions (MKL),  
	  $\ell^1$ regularization on the feature space associated to a polynomial kernel ($\ell^1$-features),
	  COSSO with 1-way interactions (COSSO1), hierarchical kernel learning with polynomial kernel (HKL pol.),
	   hierarchical kernel learning with hermite kernel (HKL herm.) and   regularized least squares (RLS).
	  Prediction errors for  COSSO2 are not reported because it is always outperformed by COSSO1.
	  such errors were still too large to report in the first three data sets, 
	  and were not available since the algorithm did not reach convergence for image segmentation, pole telecomm and breast cancer data. 
	  To make the prediction errors comparable among experiments, 
	  root mean squared errors (RMSE)  were divided by the outputs  standard deviation, which corresponds to the dotted line. 
	  Error bars are the standard deviations of the normalized RMSE.
	  Though the largest normalized RMSE appear out of the figure axis,
	   we preferred to display the prediction errors with the current axes limits 
	   in order to allow the reader to appreciate the difference between the smallest, and thus most significant, errors.
	   Selection errors for COSSO, and HKL are not reported because they are not straightforwardly computable from the available code.
	   The computing time corresponds to the entire model selection and testing protocol. 
	   Computing time for RLS is not reported because it was always negligible with respect to the other methods.}
	  }\label{fig:real_data}
\end{figure}

%\begin{figure}
%\label{fig:real_data_time}
%	 \begin{center}
%		  \includegraphics[width=1\linewidth]{real_data_time.png}
%	  \end{center}
%	  \caption{Computing time  for different variable selection algorithms on real data. Error bars represent the standard deviations.}
%\end{figure}

We also run experiments on the same 7 data sets with different kernel choices for our method . We consider the polynomial kernel with degree $p=2, 3$ and $4$, and the gaussian kernel. Comparisons among the different kernels in terms of prediction and selection accuracy are plotted in Figure \ref{fig:real_dataDENOVAS}. Interestingly the choice of the gaussian kernel seems to be the preferable choice in most data sets.

\begin{figure}[h]
	 \begin{center}
		  \includegraphics[width=1\linewidth]{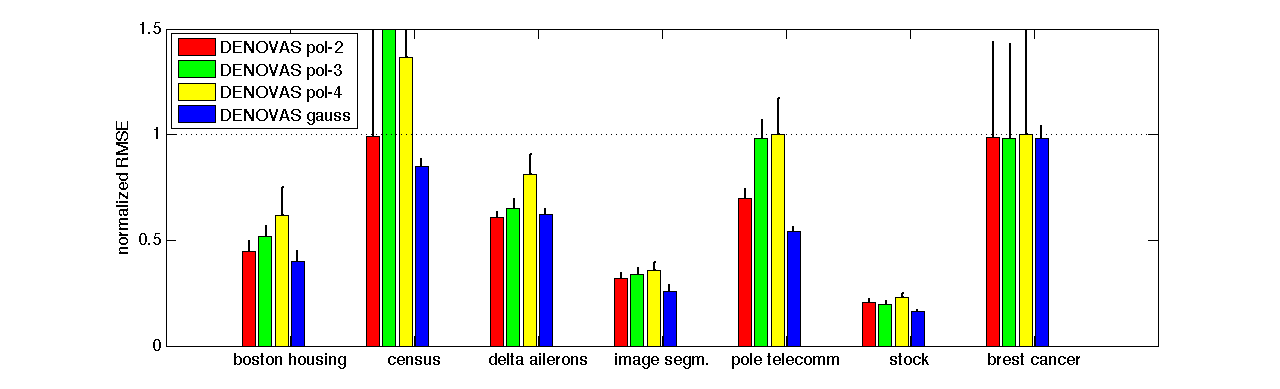}
		  \includegraphics[width=1\linewidth]{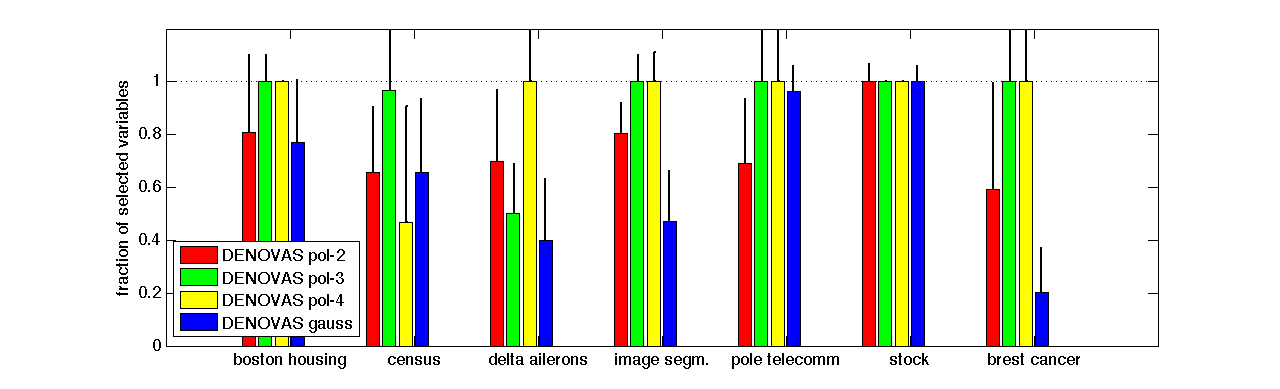}
	  \end{center}
	  \caption{Prediction error (top) and fraction of selected variables (bottom) on real data for our method  with different kernels:
	  polynomial kernel of degree $p=1, 2$ and 3 (DENOVAS pol-$p$), and Guassian kernel (DENOVAS gauss). 
	  Error bars represent the standard deviations.
	   In order to make the prediction errors comparable among experiments, root mean square errors  were divided by the outputs  standard deviation, which corresponds to the dotted line.}\label{fig:real_dataDENOVAS}
\end{figure}

 \section{Discussion}\label{sec:disc}

Sparsity based method has recently emerged as way to perform learning and variable selection
from high dimensional data. So far, compared to other machine learning techniques, 
this class of methods suffers from strong modeling assumptions and is in fact limited to 
parametric or semi-parametric models (additive models).
In this paper we discuss a possible way to circumvent this shortcoming and 
exploit sparsity ideas in a non-parametric context. 

We propose to use partial derivatives of functions in a RKHS to design 
a new sparsity penalty and a corresponding regularization scheme.
Using  results from the theory of RKHS and proximal methods
we show that the regularized estimator can be provably computed 
through an iterative procedure.
The consistency property of the proposed estimator are studied. 
 Exploiting the non-parametric nature of the method we can prove universal consistency.
 Moreover we study  selection properties and show that 
 that 
the proposed regularization scheme represents a safe
filter for variable selection, as it does not discard relevant variables.
Extensive simulations  on synthetic data demonstrate
the prediction and selection properties of the proposed algorithm.
Finally,  comparisons to state-of-the-art algorithms for nonlinear variable selection
on toy data as well as on a cohort of benchmark data sets,   
show that our approach often leads to better prediction and selection performance.

Our work can be considered as a first step towards understanding the role of 
sparsity beyond additive models. It substantially differs with respect to previous approaches  based
on local polinomial regression \cite{lafferty08,bertin08,MilHal10}, since it is a regularization scheme 
directly performing global variable selection. The RKHSs' machinery allows on the one hand to find a computationally efficient
algorithmic solution, and on the other hand to consider very general probability distributions $\rho$, which are not required to
have a positive density with respect to the Lebesgue measure (differently from \cite{comminges2011}).
Several research directions are yet to be explored.
\begin{itemize}
\item From a theoretical point of view it would be interesting to further analyzing the sparsity property of the obtained estimator in terms of finite sample estimates
for the prediction and the selection error. 
%\textbf{statistical point of  theory, approximation error, role of sparsity. selection converse result. Connections and difficulty wrt strcutured sparsity.}
\item From a computational point of view the main question is whether our method can be scaled to work in very high dimensions.
Current computations are limited  by memory constraints. A variety of  method for large scale optimization can be considered towards this end.
\item A natural by product of computational improvements would be the possibility of considering a semi-supervised setting which is  naturally suggested by our approach. More generally we plan to investigate the application of the  RKHS representation for differential operators in unsupervised learning. 
\item More generally,  our study begs the question of whether there are alternative/better ways to
perform learning and variable selection beyond additive models and using non parametric models. 
\end{itemize}

\small{ {\bf Acknowldgments} LR is assistant professor at DIBRIS, Universita` di Genova, Italy and currently on leave of absence. 
The authors would like to thank Ernesto De Vito for many useful discussions 
and suggesting  the proof of Lemma 4. SM and LR would like to thank 
Francis Bach and Guillame Obozinski for useful discussions.
This paper describes a joint research work done at 
and at the Departments of Computer Science and
Mathematics of the University of Genoa and at the IIT@MIT lab hosted in 
 the Center for Biological and Computational Learning (within the McGovern Institute for
Brain Research at MIT), at the Department of Brain and Cognitive
Sciences (affiliated with the Computer Sciences and Artificial
Intelligence Laboratory), The authors
have been partially supported by the Integrated Project Health-e-Child
IST-2004-027749 and by grants from DARPA (IPTO and DSO), National
Science Foundation (NSF-0640097, NSF-0827427), and  Compagnia di San
Paolo, Torino. Additional support was provided by: Adobe, Honda Research
Institute USA, King  Abdullah University Science and Technology grant to
B. DeVore, NEC, Sony and especially by the Eugene McDermott Foundation.  }

\appendix \label{app:theorem}
%\section*{Appendix A.}

% Note: in this sample, the section number is hard-coded in. Following
% proper LaTeX conventions, it should properly be coded as a reference:

%In this appendix we prove the following theorem from
%Section~\ref{sec:textree-generalization}:

%%%%%%%%%%%%%%%%%%%%%%%%%%%%%%%%%%%%%%%%%%%%%%%%
\section{Derivatives in RKHS and Representer Theorem}\label{tools_rkhs}
%%%%%%%%%%%%%%%%%%%%%%%%%%%%%%%%%%%%%%%%%%%%%%%%
 
 Consider  $\ldue=\{f:\X\to\rone~ \text{measurable}|~\int |f(x)|^2d\marg(x)<\infty\}$
 and  $\rone^\n$ with inner product normalized by a factor $1/\n$, $\nor{\cdot}_\n$. 

The operator $\Ik:\hh\to\ldue$ defined by  $(\Ik f)(x)=\scalh{f}{\kk_x}$,  for almost all $x\in X$, 
is well-defined and bounded thanks to  assumption A1.  The sampling operator \eqref{samp_op} 
can be seen as its empirical counterpart. 
Similarly $\der_\jj:\hh\to \ldue$ defined by $(\der_\jj f)(x)=\scal{f}{(\derh_\jj k)_{x}}$,  for almost all $x\in X$ and $\jj=1, \dots \dd$, 
is well-defined and bounded thanks to  assumption A2.  The  operator \eqref{emp_der} 
can be seen as its empirical counterpart. 
Several properties of such operators and related quantities are given by the following two propositions.
\ \\
\begin{proposition}\label{prop:HS}
If assumptions A1 and A2 are met, the  operator $\Ik$ and the continuous partial derivative $\der_\jj$ are
Hilbert-Schmidt operators from $\hh$ to $\ldue$, and
\begin{align*}
 \Ik^* g(t)=\int_\X \kk_{x}(t) g(x)d\marg(x),\qquad\qquad&
 \Ik^*\Ik f(t)=\int_\X \kk_{x} (t)\scalh{f}{k_x}d\marg(x)\\
 \der^*_\jj g(t)=\int_\X (\derh_\jj\kk)_{x}(t) g(x)d\marg(x),\qquad\qquad&
\der^*_\jj\der_\jjj f(t)=\int_\X (\derh_\jj\kk)_{x}(t) \scalh{f}{(\derh_\jjj\kk)_x}d\marg(x)\\
\end{align*}
\end{proposition}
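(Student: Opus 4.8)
The plan is to treat the two assertions --- the Hilbert--Schmidt property and the explicit form of the adjoints --- separately, and to carry out the argument for $\Ik$ in full, the case of $\der_\jj$ being identical after replacing $\kk_x$ by $(\derh_\jj\kk)_x$ and invoking assumption A2 in place of A1.

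First I would establish the formula for $\IIk$. By the defining property of the adjoint, $\IIk$ is characterised by $\scal{\Ik f}{g}_{\ldue}=\scalh{f}{\IIk g}$ for all $f\in\hh$ and $g\in\ldue$. Writing the left-hand side using $(\Ik f)(x)=\scalh{f}{\kk_x}$ gives $\int_\X \scalh{f}{\kk_x}\,g(x)\,d\marg(x)$. The candidate $\IIk g=\int_\X \kk_x\,g(x)\,d\marg(x)$, understood as a Bochner integral in $\hh$, is well defined because $\norh{\kk_x\,g(x)}\le\ka\,|g(x)|$ by A1 and $g\in\ldue\subseteq L^1(\X,\marg)$ since $\marg$ is a probability measure; pulling the bounded functional $\scalh{f}{\cdot}$ inside the Bochner integral then reproduces exactly the previous expression, which proves the identity. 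Evaluating $\IIk g$ at a point $t$ via the reproducing property, and commuting the bounded evaluation functional with the integral, yields the stated pointwise formula $\IIk g(t)=\int_\X \kk_x(t)\,g(x)\,d\marg(x)$; composing with $\Ik$ and using $(\Ik f)(x)=\scalh{f}{\kk_x}$ gives the formula for $\IIk\Ik f$. The same computation with $(\derh_\jj\kk)_x$ delivers the formulas for $\der^*_\jj$ and $\der^*_\jj\der_\jjj$.

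For the Hilbert--Schmidt property I would fix an orthonormal basis $(e_i)_i$ of $\hh$ and compute $\nor{\Ik}_{\mathrm{HS}}^2=\sum_i\nor{\Ik e_i}_{\ldue}^2=\sum_i\int_\X|\scalh{e_i}{\kk_x}|^2\,d\marg(x)$. Since all terms are nonnegative, Tonelli's theorem lets me exchange sum and integral, and Parseval's identity gives $\sum_i|\scalh{e_i}{\kk_x}|^2=\norh{\kk_x}^2$, so that $\nor{\Ik}_{\mathrm{HS}}^2=\int_\X\norh{\kk_x}^2\,d\marg(x)\le\ka^2$ by A1 together with $\marg(\X)=1$. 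This is finite, hence $\Ik$ is Hilbert--Schmidt. Replacing $\kk_x$ by $(\derh_\jj\kk)_x$ and using A2 gives $\nor{\der_\jj}_{\mathrm{HS}}^2\le\kader^2<\infty$ in the same way.

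The algebraic identities are routine; the only points requiring care are the measure-theoretic interchanges. The main obstacle is justifying the Bochner-integral manipulations in the adjoint computation --- namely that $x\mapsto \kk_x\,g(x)$ is strongly measurable and Bochner integrable, and that both the inner product $\scalh{f}{\cdot}$ and the evaluation functional may be moved inside the integral --- together with the application of Tonelli's theorem to swap the countable sum with the integral over $\X$. All of these follow from the uniform bounds in A1--A2 and the finiteness of $\marg$, but they are precisely what make the interchanges legitimate rather than merely formal.
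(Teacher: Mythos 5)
Your argument is correct and is essentially the standard one that the paper itself does not spell out but delegates to \cite{devito05}: the adjoint is identified via a Bochner integral combined with the reproducing property, and the Hilbert--Schmidt bound follows from Tonelli, Parseval, and the uniform bounds $\norh{k_x}\le\ka$, $\norh{(\derh_\jj k)_x}\le\kader$ of A1--A2. The only small refinement worth noting is that the strong measurability of $x\mapsto \kk_x g(x)$ comes from the continuity of $x\mapsto \kk_x$ in $\hh$ (which A2's $\mathcal{C}^2$ assumption guarantees, since $\norh{\kk_x-\kk_{x'}}^2=k(x,x)-2k(x,x')+k(x',x')$), not merely from the uniform bounds and finiteness of $\marg$.
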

\ \\
\begin{proposition} \label{prop:HSemp}
If assumptions A1 and A2 are met, the sampling operator $\Sx$ and the empirical partial derivative $\dern_\jj$ are
Hilbert-Schmidt operators from $\hh$ to $\rone^\n$, and
\begin{align*}
 \Sx^* v=\frac{1}{\n} \sum_{\ii=1}^\n\kk_{x_\ii} v_\ii,\qquad\qquad&
 \Sx^*\Sx f=\frac{1}{\n} \sum_{\ii=1}^\n\kk_{x_\ii} \scalh{f}{k_{x_\ii}}\\
 \dern^*_\jj v=\frac{1}{\n}\sum_{\ii=1}^\n (\derh_\jj\kk)_{x_\ii} v_\ii,\qquad\qquad&
\dern^*_\jj\dern_\jjj f=\frac{1}{\n} \sum_{\ii=1}^\n(\derh_\jj\kk)_{x_\ii} \scalh{f}{(\derh_\jjj\kk)_{x_\ii}}\\
\end{align*}
where $\jj,\jjj=1, \dots, \dd$.
\end{proposition}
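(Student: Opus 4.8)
The plan is to mirror the proof of Proposition~\ref{prop:HS}, exploiting that $\Sx$ and $\dern_\jj$ are the empirical analogues of $\Ik$ and $\der_\jj$, with the measure $\marg$ replaced by the normalized empirical measure on $\vx$. The argument is in fact simpler here, since the target space $\rone^\n$ is finite dimensional, so the only genuine content is the computation of the adjoints with the correct $1/\n$ factor.

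First I would establish boundedness. By the reproducing property \eqref{repro_prop} and the representation \eqref{der_repres}, each component $(\Sx f)_\ii=\scalh{f}{\kk_{x_\ii}}$ and $(\dern_\jj f)_\ii=\scalh{f}{(\derh_\jj\kk)_{x_\ii}}$ is a bounded linear functional, with norm controlled by $\norh{\kk_{x_\ii}}\le\ka$ under [A1] and $\norh{(\derh_\jj\kk)_{x_\ii}}\le\kader$ under [A2]. Hence $\Sx$ and $\dern_\jj$ are bounded. Because their range is finite dimensional they are automatically Hilbert--Schmidt, but I would make this quantitative: fixing any orthonormal basis $(e_l)_l$ of $\hh$ and using the $1/\n$-normalized norm $\nor{\cdot}_\n$ on $\rone^\n$, Parseval's identity gives
\[
\sum_l \nor{\Sx e_l}_\n^2=\frac1\n\sum_{\ii=1}^\n\sum_l |\scalh{e_l}{\kk_{x_\ii}}|^2=\frac1\n\sum_{\ii=1}^\n\norh{\kk_{x_\ii}}^2\le\ka^2,
\]
and the identical computation with $(\derh_\jj\kk)_{x_\ii}$ in place of $\kk_{x_\ii}$ yields $\sum_l\nor{\dern_\jj e_l}_\n^2\le\kader^2$, so both operators are Hilbert--Schmidt.

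Then I would read off the adjoints directly from the defining identity $\scal{\Sx f}{v}_\n=\scalh{f}{\Sx^* v}$. Using the normalized inner product on $\rone^\n$,
\[
\scal{\Sx f}{v}_\n=\frac1\n\sum_{\ii=1}^\n\scalh{f}{\kk_{x_\ii}}v_\ii=\scalh{f}{\tfrac1\n\sum_{\ii=1}^\n \kk_{x_\ii}v_\ii},
\]
where the last equality uses linearity of $\scalh{\cdot}{\cdot}$ and that the sum is finite; by uniqueness of the adjoint this forces $\Sx^* v=\frac1\n\sum_\ii \kk_{x_\ii}v_\ii$, and the same manipulation with $(\derh_\jj\kk)_{x_\ii}$ gives the formula for $\dern_\jj^*$. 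The expressions for $\Sx^*\Sx$ and $\dern_\jj^*\dern_\jjj$ then follow by composition, substituting $(\Sx f)_\ii=\scalh{f}{\kk_{x_\ii}}$ and $(\dern_\jjj f)_\ii=\scalh{f}{(\derh_\jjj\kk)_{x_\ii}}$ into the adjoint formulas just obtained.

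I do not anticipate a real obstacle. The only points that require care are keeping the $1/\n$ normalization of the inner product on $\rone^\n$ explicit, so that the factor $1/\n$ correctly surfaces in the adjoints, and invoking Parseval to collapse the basis sum into $\norh{\kk_{x_\ii}}^2$ and $\norh{(\derh_\jj\kk)_{x_\ii}}^2$ before applying the bounds [A1] and [A2]. Everything else is a finite-sum, finite-dimensional specialization of Proposition~\ref{prop:HS}.
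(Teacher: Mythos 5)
Your proof is correct and is essentially the argument the paper relies on: the paper simply cites \cite{devito05} for $\Sx$ and notes that the case of $\dern_\jj$ is identical under [A2], and your Parseval computation over an orthonormal basis together with the adjoint identity $\scal{\Sx f}{v}_\n=\scalh{f}{\Sx^* v}$ (with the $1/\n$-normalized inner product) is exactly that standard argument written out. No gaps.
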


The proof can be found in \cite{devito05} for $\Ik$ and $\Sx$, where assumption A1 is used. 
The proof for $\der_\jj$ and $\dern_\jj$ is based on the same tools and on assumption A2.
Furthermore, a similar result can be obtained for the continuous and empirical gradient
\begin{align*}
&\grad:\hh\to (\ldue)^\dd,\quad &\grad f &= (\der_\jj f)_{\jj=1}^\dd\\
&\gradn:\hh\to (\rone^\n)^\dd,\quad &\gradn f&= (\dern_\jj f)_{\jj=1}^\dd,
\end{align*}
which can be shown to be Hilbert-Schmidt operators from $\hh$ to $(\ldue)^\dd$ and from $\hh$ to $(\rone^\n)^\dd$, respectively.\\

We next restate Proposition \ref{lemma:repr_theo} in a slightly more abstract form and give its proof. \\
 {\bf Proposition [Proposition \ref{lemma:repr_theo} Extended]}
{\it  The minimizer of \eqn{algo_computation} satisfies
  $
\fz \in \text{Range}(\Sx^*)+ \text{Range}(\gradn^*).
  $
  Henceforth it satisfies  the following  representer theorem
  \beeq{func_rep}{
\fz=\Sx^* \alpha+ \gradn^* \beta=
  \sum_{\ii=1}^\n \frac 1 \n\alpha_\ii k_{x_\ii}+
  \sum_{\ii=1}^\n \sum_{\jj=1}^\dd \frac 1 \n \beta_{\jj,\ii}(\derh_\jj k)_{x_\ii}
  }
  with $\alpha \in \rone^\n$ and $\beta  \in \rone^{n\dd}$.}
    
\begin{proof}
 Being $\text{Range}(\Sx^*)+ \text{Range}(\gradn^*)$ a closed subspace of $\hh$, we can write 
 any function $f\in\hh$ as $f=f^{/\!/}+f^\bot$,
where $f^{/\!/}\in\text{Range}(\Sx^*)+ \text{Range}(\gradn^*)$
and $\scal{f^\bot}{g}_\hh$ for all $g\in\text{Range}(\Sx^*)+ \text{Range}(\gradn^*)$.
Now if we plug the decomposition $f=f^{/\!/}+f^\bot$ in the variational problem \eqref{algo_computation}, we obtain
$$
\fz = \argmin_{f\in\hh, ~f=f^{/\!/}+f^\bot} 
\left\{\emp (f^{/\!/}) +2\pone\Regn(f^{/\!/}) + \pone \taumu\nor{f^{/\!/}}^2_\hh+\pone \taumu\nor{f^\bot}^2_\hh
\right\}
$$
which is clearly minimized by $f^\bot=0$.
The second equality in \eqref{func_rep} then derives directly
from definition of $\Sx^*$  and $\gradn^*$. 
 \end{proof}
We conclude with the following example on how to compute derivatives and related quantities for the Gaussian Kernel.

\begin{example} \label{DerExample}
Note that all the terms involved in \eqref{repr_itext} are explictly computable. As an example we show how to compute them when $k(x,s)=e^{-\frac{\nor{x-s}^2}{2\gamma^2}}$ is the gaussian kernel on $\rone^\dd$. By definition
\[
(\derh_\jj k)_{x_{\ii}}(x)=\langle\left.\frac{\partial k(s,\cdot)}{\partial s^\jj}\right|_{s=x_\ii},k_x\rangle_\hh.
\]
Given $x\in \X$ it holds
\[
\frac{\partial k(s,x)}{\partial s^\jj}=e^{-\frac{\nor{x-s}^2}{2\gamma^2}}\cdot\left(-\frac{s^\jj-x^\jj}{\gamma^2}\right)
\quad\Longrightarrow\quad
(\derh_\jj k)_{x_{\ii}}(x)=e^{-\frac{\nor{x-x_\ii}^2}{2\gamma^2}}\cdot\left(-\frac{x_\ii^\jj-x^\jj}{\gamma^2}\right).
\]
Moreover, as we mentioned above, the computation of $\beta^\itext_{\jj,\ii}$ and $\alpha^\itext_\ii$ requires the knowledge of matrices  $\matK, \matZ_\jj, \matZ, \matL_\jj$. Also their entries are easily found starting from the kernel and the training points. We only show how the entries of  $\matZ$ and $\matL_\jj$ look like.
Using the previous computations we immediately get
\[
[\matZ_\jj]_{\ii,\iii}=e^{-\frac{\nor{x_\iii-x_\ii}^2}{2\gamma^2}}\cdot\left(-\frac{x_\ii^\jj-x_\iii^\jj}{\gamma^2}\right).
\]
In order to compute $\matL_\jj$ we need the second partial derivatives of the kernel:
\[
\frac{\partial k(s,x)}{\partial x^\jjj\partial s^\jj}=\begin{cases} -e^{-\frac{\nor{x-s}^2}{2\gamma^2}}\cdot\frac{s^\jj-x^\jj}{\gamma^2}\cdot \frac{s^\jjj-x^\jjj}{\gamma^2} & \text{ if } a\neq b\\
\ \\
-e^{-\frac{\nor{x-s}^2}{2\gamma^2}}\cdot \left(\frac{(s^\jj-x^\jj)^2}{\gamma^2}-\frac{1}{\gamma^2}\right) & \text{ if } a=b.
\end{cases}
\]
so that
\[
[\matL_{\jj,\jjj}]_{\ii,\iii}=\begin{cases} -e^{-\frac{\nor{x_\iii-x_\ii}^2}{2\gamma^2}}\cdot\frac{x_\ii^\jj-x_\iii^\jj}{\gamma^2}\cdot \frac{x_\ii^\jjj-x_\iii^\jjj}{\gamma^2} & \text{ if } a\neq b\\
\ \\
-e^{-\frac{\nor{x_\iii-x_\ii}^2}{2\gamma^2}}\cdot \left(\frac{(x_\ii^\jj-x_\iii^\jj)^2}{\gamma^2}-\frac{1}{\gamma^2}\right) & \text{ if } a=b.
\end{cases}
\]
\end{example}

%%%%%%%%%%%%%%%%%%%%%%%%%%%%%%%%%%%%%
\section{Proofs of Section \ref{sec:computation}} \label{app:computation}
%%%%%%%%%%%%%%%%%%%%%%%%%%%%%%%%%%%%%

In this appendix we collect the proofs related to the derivation of the iterative procedure given in Algorithm \ref{algo:DENOVAS}.
Theorem \ref{theo:convergence} is a consequence  of the general results about convergence of accelerated and inexact FB-splitting algorithms in \cite{salzo2011prox}. In that paper it is shown that inexact schemes converge only when the errors in the computation of the proximity operator are of a suitable type and satisfy a sufficiently fast decay condition. We first introduce the notion of admissible approximations.

\begin{definition}\label{def:inexactprox} Let $\eps\geq0$ and $\lambda>0$. We say that $h\in\hh$ is an \emph{approximation  of $\mathrm{prox}_{\lambda\Regn}(f)$  with $\eps$-precision} and we write $h\approxeq_\eps \mathrm{prox}_{\lambda\Regn}(f)$, if and only if 
\begin{equation}\label{eq:inexactprox}
 \frac{f-h}{\lambda}\in\partial_{\frac{\eps^2}{2\lambda}} \Regn(h), 
\end{equation}
where $\partial_{\frac{\eps^2}{2\lambda}}$ denotes the $\eps$-subdifferential.\footnote{ Recall that the $\eps$-subdifferential $\partial_\epsilon$ 
of a convex functional $\Omega:\hh\to \rone\cup\{+\infty\}$  is defined as the set
$$
\partial_\eps  \Omega (f) := \{h \in \hh\,:\, \Omega(g)-\Omega(f)\geq \scalh{h}{g-f}-\epsilon,~~ \forall g\in\hh\}, \qquad \forall f\in\hh.
$$}

\end{definition}

We will need the following results from \cite{salzo2011prox}.
\begin{theorem}\label{thm:err2} Consider the following inexact version of the accelerated FB-algorithm in \eqref{basic_prox} with $c_{1,\itext}$ and $c_{2,\itext}$ as in \eqref{eq:ct}
\beeq{inbasic_prox}{
f^\itext \approxeq_{\eps^\itext}
\text{prox}_{\frac{\pone}{\sigma} \Regn}
\Big(
\Big(I-\frac{1}{2\sigma} \nabla F\Big)(c_{1,t}f^{\itext-1}+c_{2,t}f^{\itext-2})
\Big) .
} 
Then, if $\eps^\itext\sim 1/\itext^l$ with $l>3/2$,    there exists a constant $C>0$ such that 
\[\emp^{\pone}(f^\itext)-\inf \emp^{\pone} \leq \frac{C}{\itext^2}.\]
\end{theorem}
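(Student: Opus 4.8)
The plan is to adapt the Lyapunov (energy) argument behind the convergence proof of FISTA in \cite{beck09} to the inexact regime, the only genuinely new ingredient being the treatment of the approximate backward step through the $\eps$-subdifferential characterization of Definition \ref{def:inexactprox}. Write $\emp^{\pone}=F+2\pone\Regn$ with $F=\emp(\cdot)+\pone\taumu\norh{\cdot}^2$ smooth and convex, whose gradient is Lipschitz with constant $2L$, $L=\nor{\Sx^*\Sx}+\pone\taumu$, so that the step condition $\step\geq L$ makes $\tfrac{1}{2\step}\leq\tfrac{1}{2L}$ an admissible gradient stepsize. Set $y^\itext=c_{1,\itext}f^{\itext-1}+c_{2,\itext}f^{\itext-2}$ and $g^\itext=y^\itext-\tfrac{1}{2\step}\grad F(y^\itext)$, so that \eqref{inbasic_prox} reads $f^\itext\approxeq_{\eps^\itext}\mathrm{prox}_{\frac{\pone}{\step}\Regn}(g^\itext)$. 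Unfolding Definition \ref{def:inexactprox} gives $2\step(g^\itext-f^\itext)\in\partial_{\step(\eps^\itext)^2}(2\pone\Regn)(f^\itext)$, i.e. a subgradient inequality for the nonsmooth part perturbed only by the additive tolerance $\step(\eps^\itext)^2$, with no linear-in-$\eps^\itext$ correction. This additivity is the decisive structural feature of the subdifferential notion of approximation, and the reason a distance-type error would be more demanding.

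Combining this perturbed subgradient inequality with the descent lemma for $F$ (which is where $\step\geq L$ enters), I would first establish the inexact comparison inequality
\[
\emp^{\pone}(f^\itext)-\emp^{\pone}(x)\ \leq\ \step\big(\nor{x-y^\itext}^2-\nor{x-f^\itext}^2\big)+\step(\eps^\itext)^2\qquad\text{for all }x\in\hh .
\]
Evaluating it at $x=f^{\itext-1}$ and at $x=\fz$, taking the convex combination with weights dictated by $s_\itext$ in \eqref{eq:ct}, then multiplying by $s_\itext$ and invoking the identity $s_\itext(s_\itext-1)=s_{\itext-1}^2$ together with the parallelogram manipulation of Beck--Teboulle (which turns the squared-norm bracket into a telescoping difference of $\nor{u_\itext-\fz}^2$, where $u_\itext=s_\itext f^\itext-(s_\itext-1)f^{\itext-1}$), I would obtain the one-step recursion
\[
s_\itext^2\big(\emp^{\pone}(f^\itext)-\emp^{\pone}(\fz)\big)-s_{\itext-1}^2\big(\emp^{\pone}(f^{\itext-1})-\emp^{\pone}(\fz)\big)\ \leq\ \step\big(\nor{u_{\itext-1}-\fz}^2-\nor{u_\itext-\fz}^2\big)+\step\, s_\itext^2(\eps^\itext)^2 .
\]
Here the error of the $\itext$-th step is amplified by exactly the acceleration weight $s_\itext^2$.

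Summing this recursion and discarding the nonnegative term $\step\nor{u_T-\fz}^2$ yields $s_T^2\big(\emp^{\pone}(f^T)-\emp^{\pone}(\fz)\big)\leq C_0+\step\sum_{\itext\geq1}s_\itext^2(\eps^\itext)^2$ for a constant $C_0$ depending only on the initialization. Since $s_\itext\sim\itext/2$, one has $s_\itext^2(\eps^\itext)^2\sim\itext^{2-2l}$, and $\sum_\itext\itext^{2-2l}$ converges precisely when $l>3/2$; this is exactly where, and the only place where, the hypothesis on $l$ is used. Calling $C$ the resulting finite bound and using $s_T\geq(T+1)/2$ gives $\emp^{\pone}(f^T)-\inf\emp^{\pone}\leq C/s_T^2\leq 4C/(T+1)^2$, the claimed $O(1/\itext^2)$ rate. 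I expect the main obstacle to be the careful derivation of the inexact comparison inequality — in particular verifying that the $\eps$-subdifferential really enters additively so that no discrete Gronwall argument is needed — since everything downstream is the classical accelerated telescoping; this is precisely the content established in \cite{salzo2011prox}.
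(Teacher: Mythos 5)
Your outline is correct, but note that the paper does not prove this statement at all: Theorem \ref{thm:err2} is imported verbatim from \cite{salzo2011prox} (``We will need the following results from...''), so there is no internal proof to compare against. What you have written is a faithful reconstruction of the argument of that reference, and the three load-bearing points all check out. First, with Definition \ref{def:inexactprox} one gets $2\step(g^\itext-f^\itext)\in\partial_{\step(\eps^\itext)^2}(2\pone\Regn)(f^\itext)$, and combining the $\eps$-subgradient inequality with the descent lemma for $F$ (whose gradient is $2L$-Lipschitz, $L=\nor{\Sx^*\Sx}+\pone\taumu\le\step$) and convexity of $F$ at $y^\itext$ does yield
\[
\emp^{\pone}(f^\itext)-\emp^{\pone}(x)\le\step\big(\nor{x-y^\itext}^2-\nor{x-f^\itext}^2\big)+\step(\eps^\itext)^2,
\]
with the perturbation entering purely additively --- this is indeed the decisive feature of the subdifferential notion of approximation; a distance-type or gradient-type error would introduce a term linear in $\nor{x-f^\itext}$ and force a discrete Gronwall step, which is why the general bounds in \cite{salzo2011prox} carry extra square-root terms that are absent here. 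Second, the Beck--Teboulle weighting turns the per-step tolerance into $\step\,s_\itext^2(\eps^\itext)^2$ after multiplication by $s_\itext$, exactly as you say. Third, $s_\itext\ge(\itext+1)/2$ gives $\sum_\itext s_\itext^2(\eps^\itext)^2\sim\sum_\itext \itext^{2-2l}<\infty$ iff $l>3/2$, which is precisely the hypothesis of the theorem, so the condition on $l$ is used exactly where you locate it and nowhere else. The only caveat is that your text is a plan rather than a full proof (the derivation of the comparison inequality and the telescoping identity $s_\itext(s_\itext-1)=s_{\itext-1}^2$ manipulation are asserted, not carried out), but every asserted step is correct and the gaps are routine.
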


\begin{proposition}\label{pr:svb}  Suppose that $\Omega:\hh\to \mathbb{R}\cup\{+\infty\}$ can be written as $\Omega=\omega\circ B$, where $B:\hh\to \mathcal{G}$ is a linear and bounded operator between Hilbert spaces and $\omega:\mathcal{G}\to \mathbb{R}\cup\{+\infty\}$ is a one-homogeneous function such that  $S:=\partial \omega(0)$ is bounded. Then for any $f\in\hh$ and any $v\in S$ such that 
\[
2\lambda \omega(B f) -2\langle \lambda B^*v,f\rangle \leq \varepsilon^2
\]
it holds
\[
f-\lambda B^*v\approxeq_{\eps} \mathrm{prox}_{\lambda \Omega}(f).
\]
\end{proposition}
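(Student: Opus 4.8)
The plan is to strip both definitions down to a single convexity estimate. Set $h:=f-\lambda B^*v$ and $\delta:=\eps^2/(2\lambda)$. Since $(f-h)/\lambda=B^*v$, Definition \ref{def:inexactprox} says that the claim $h\approxeq_\eps\mathrm{prox}_{\lambda\Omega}(f)$ is, verbatim, the $\delta$-subgradient inclusion $B^*v\in\partial_\delta\Omega(h)$. So the whole proposition reduces to checking this one inclusion, i.e. to verifying $\Omega(g)-\Omega(h)\geq\langle B^*v,g-h\rangle-\delta$ for every $g\in\hh$.

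The engine of the proof is the one-homogeneity of $\omega$. It gives $\omega(0)=0$, so $v\in S=\partial\omega(0)$ is equivalent to the global linear underestimate $\langle v,u\rangle\leq\omega(u)$ for all $u\in\mathcal G$ (boundedness of $S$ ensures $\omega$, hence $\Omega$, is everywhere finite, so $\mathrm{prox}_{\lambda\Omega}$ is well defined and these manipulations are licit). Taking $u=Bg$ and using $\langle B^*v,g\rangle=\langle v,Bg\rangle$, I obtain the key nonnegativity $\Omega(g)-\langle B^*v,g\rangle=\omega(Bg)-\langle v,Bg\rangle\geq0$ for every $g\in\hh$.

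Combining the two facts finishes the argument. For arbitrary $g\in\hh$,
\begin{align*}
\Omega(g)-\Omega(h)-\langle B^*v,g-h\rangle
&=\bigl(\omega(Bg)-\langle v,Bg\rangle\bigr)-\bigl(\omega(Bh)-\langle v,Bh\rangle\bigr)\\
&\geq-\bigl(\omega(Bh)-\langle v,Bh\rangle\bigr)\geq-\delta,
\end{align*}
where the first inequality drops the nonnegative term just established and the second is exactly the admissibility bound in the form $\omega(Bh)-\langle v,Bh\rangle\leq\eps^2/(2\lambda)=\delta$, i.e. the hypothesis $2\lambda\omega(Bf)-2\langle\lambda B^*v,f\rangle\leq\eps^2$ read at the returned point $h=f-\lambda B^*v$. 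As $g$ was arbitrary, this is precisely $B^*v\in\partial_\delta\Omega(h)$.

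The step needing the most care is the bookkeeping of the point at which the residual is evaluated: the defining inequality of $\partial_\delta\Omega(h)$ forces one to control $\omega(Bh)-\langle v,Bh\rangle$ at the computed approximation $h=f-\lambda B^*v$ itself, which is exactly the quantity monitored by the stopping rule \eqref{eq:init}. Everything else is the clean observation that, for one-homogeneous $\omega$, membership in $\partial\omega(0)$ collapses to the linear bound $\langle v,\cdot\rangle\leq\omega(\cdot)$; one could equivalently phrase the whole thing through the Fenchel--Young characterization $p\in\partial_\delta\Omega(x)\iff\Omega(x)+\Omega^*(p)-\langle p,x\rangle\leq\delta$ together with $\Omega^*(B^*v)=\sup_x\bigl[\langle v,Bx\rangle-\omega(Bx)\bigr]=0$, but the direct estimate above avoids computing the conjugate of the composition $\omega\circ B$.
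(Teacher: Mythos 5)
First, a point of reference: the paper does not actually prove Proposition \ref{pr:svb} --- it is imported from \cite{salzo2011prox} as a black box --- so there is no in-paper argument to compare yours against. Your reduction is the right one: unwinding Definition \ref{def:inexactprox} with $h=f-\lambda B^*v$ turns the claim into the single inclusion $B^*v\in\partial_{\eps^2/(2\lambda)}\Omega(h)$, and the inequality $\langle v,u\rangle\le\omega(u)$ (valid for every $v\in\partial\omega(0)$ because one-homogeneity forces $\omega(0)=0$) makes the residual $g\mapsto\omega(Bg)-\langle v,Bg\rangle$ nonnegative, so the $\delta$-subgradient inequality holds as soon as this residual is at most $\delta$ at the single point $h$. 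That quantity, $\lambda\bigl(\omega(Bh)-\langle v,Bh\rangle\bigr)$, is exactly the duality gap of the dual problem $\min_{v\in S}\nor{f-\lambda B^*v}^2$, so your direct computation recovers the standard Fenchel-duality argument without ever writing down the dual.

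However, there is one step you cannot gloss over. The hypothesis of the proposition, as printed, controls $2\lambda\omega(Bf)-2\langle\lambda B^*v,f\rangle$, i.e.\ the residual at the prox \emph{center} $f$; your proof needs it at the \emph{returned point} $h=f-\lambda B^*v$, and the phrase ``the hypothesis read at the returned point'' silently substitutes one for the other. The two conditions are not equivalent, and with the condition evaluated at $f$ the proposition is in fact false: take $\hh=\mathcal{G}=\rone$, $B=I$, $\omega=|\cdot|$, $v=1\in S=[-1,1]$, $f=\lambda/2$, $\eps=0$; then $2\lambda\omega(Bf)-2\langle\lambda B^*v,f\rangle=0$, yet $h=-\lambda/2$ while $\mathrm{prox}_{\lambda|\cdot|}(\lambda/2)=0$ and $(f-h)/\lambda=1\notin\partial|\cdot|(-\lambda/2)=\{-1\}$. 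So what you have proved is the corrected statement, with $f$ replaced by $f-\lambda B^*v$ in the hypothesis --- which is the version the paper actually needs, since the stopping rule \eqref{eq:init} is evaluated at $f^\itext=g^\itext-\gradn^*\bar{v}^\itext$, the output of the proximal step, not at $g^\itext$. Present the substitution as a correction of the stated hypothesis rather than as a rereading of it, and your argument is complete and correct.
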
 
\begin{proof} [Proof of Theorem \ref{theo:convergence}]
Since the the regularizer $\Regn$ can be written as a composition of $\omega\circ B$, with $B=\gradn$ and $\omega:\rone^\dd\to[0,+\infty)$, $\omega(v)=\sum_{\jj=1}^\dd\nor{v_\jj}_\n$ Proposition \ref{pr:svb} applied with $\lambda=\pone/\step$, ensures that each sequence of the type $\gradn^*v^\itint$ which meets the condition  \eqref{lim-proj} generates, via \eqref{itext}, admissible approximations of $\text{prox}_{\frac{\pone}{\sigma} \Regn}$. 
Therefore, if  $\eps^\itext$ is such that $\eps^\itext\sim 1/t^l$ with $l>3/2$, Theorem \ref{thm:err2} implies that the  inexact version of the  FB-splitting algorithm in \eqref{itext}  shares the $1/\itext^2$ convergence rate. Equation \eqref{lim-f} directly follows from the definition of strong convexity,
\[
\frac{\pone\taumu}{8}\nor{f^\itext-\fz}^2\leq \emp^{\pone}(f^t)/2+\emp^{\pone}(\fz)/2-\emp^{\pone} (f^\itext/2+\fz/2)\leq \frac{1}{2}(\emp^{\pone}(f^t)-\emp^{\pone}(\fz))
\]
\end{proof}
			
\begin{proof}[Proof of Proposition \ref{prop:itext}]
We first show that the matrices $\matK, \matZ_\jj, \matL_\jj$ 
defined in \eqref{defK},\eqref{defZ}, and \eqref{defL}, 
are the matrices associated to  the  operators
$\Sx \Sx^*: \rone^\n \to \rone^\n$, $\Sx \dern_\jj^*: \rone^\n \to \rone^\n$
and $\dern_\jj\gradn^*:\rone^{\n\dd}\to \rone^\n$, respectively.
For  $\matK$, the proof is trivial and derives directly from the definition 
of adjoint of $\Sx$-- see Proposition \ref{prop:HSemp}.
For  $\matZ_\jj$ and $\matZ$, from the definition of $\dern_\jj^*$ we have that
$$
\left(\Sx\dern_\jj^* \alpha\right)_{\ii} =
\frac{1}{\n} \sum_{\iii=1}^\n \alpha_\iii (\derh_\jj k)_{x_\iii} (x_\ii) =
\sum_{\iii=1}^\n (\matZ_\jj)_{\ii,\iii}\alpha_\iii=
(\matZ_\jj \alpha)_\ii,
$$
so that
$\Sx\gradn^*\beta = \sum_{\jj=1}^\dd\Sx\dern_\jj^* (\beta_{\jj,\ii})_{\ii=1}^\n=\sum_{\jj=1}^\dd \matZ_\jj (\beta_{\jj,\ii})_{\ii=1}^\n=\matZ \beta$.
For  $\matL_\jj$, we first observe that
$$
 \scalh{(\derh_\jj\kk)_{x}}{(\derh_\jjj\kk)_{x'}}=
 \left. \frac{\partial (\derh_\jjj\kk)_{x'}(t)}{\partial t^\jj}\right|_{t=x}
=\left.\frac{\partial^2 k(s,t)}{\partial t^\jj\partial s^\jjj}\right|_{t=x,s=x'},
$$
so that operator $\dern_\jj {\dern_\jjj}^*: \rone^\n \to \rone^\n$ is given by 
$$
\left(\lp \dern_\jj  \dern_\jjj^*\rp v\right)_\ii= 
\scalh{(\derh_\jj\kk)_{x_\ii}}{\dern^*_\jjj v} =
\frac{1}{\n} \sum_{\iii=1}^\n \scalh{(\derh_\jj\kk)_{x_\ii}}{(\derh_\jjj\kk)_{x_\iii}}v_\iii 
 = {(\matL_{\jj,\jjj})}_{\ii,\iii} v_\iii
$$
for $i=1,\dots,\n$, for all $v\in\rone^\n$.
Then, since $\dern_\jj\gradn^* \beta = \sum_{\jj=1}^\dd \dern_\jj {\dern_\jjj}^*(\beta_{\jj,\ii})_{\ii=1}^\n$, 
we have that $\matL_\jj$ is the matrix associated to the operator
$\dern_\jj\gradn^*:\rone^{\n\dd}\to\rone^\n$, that is 
$$
(\dern_\jj\gradn^* \beta)_\ii = \sum_{\iii=1}^\n\sum_{\jjj=1}^\dd (\matL_{\jj,\jjj})_{\ii,\iii}\beta_{\jjj,\iii},
$$
for $\ii=1,\dots,\n$, for all $\beta\in\rone^{\n\dd}$.
To prove equation  \eqref{repr_itext}, first note that, as we have done in Proposition \ref{lemma:repr_theo} extended, \eqref{repr_itext} can be equivalently rewritten as $f^\itext=\Sx^*\alpha^{\itext} +\gradn^*\beta^{\itext}$. We now proceed by induction. The base case, namely the representation for $\itext=0$ and $\itext=1$,  is clear.
   Then, by the inductive hypothesis we have that
$f^{\itext-1} =   \Sx^* \alpha^{\itext-1} +\gradn^*\beta^{\itext-1}$, and 
$f^{\itext-2} =   \Sx^* \alpha^{\itext-2} +\gradn^*\beta^{\itext-2}$
so that 
$ \tilde{f}^{\itext} =   \Sx^* \tilde{\alpha}^{\itext} +\gradn^*\tilde{\beta}^{\itext}$
with $\tilde{\alpha}^\itext$ and $\tilde{\beta}^\itext$ defined by \eqref{update_s} and \eqref{update_tilde}.
   Therefore, using \eqn{basic_prox}, \eqn{itext}, \eqn{eq:Moreau} it follows that $f^{\itext}$ can be expressed as:
$$
   \lp I-\pi_{\frac{\tau}{\sigma}\Cn}\rp \lp      \Sx^* \lp \lp1-\frac{\ptwo\taumu}{\step} \rp \tilde{\alpha}^\itext- \frac{1}{\step} \lp \matK \tilde{\alpha}^\itext+ \matZ \tilde{\beta}^\itext -\vy \rp\rp + \lp1-\frac{\ptwo\taumu}{\step} \rp\gradn^* \tilde{\beta}^\itext
   \rp 
$$
and the proposition is proved, letting $\tilde{\alpha}^{\itext}$, $\tilde{\beta}^{\itext}$ and
$\bar{v}^{\itext}$ as in Equations \eqref{update_a}, \eqref{update_b} and \eqref{vbar}.

For the projection, we first observe that operator $\dern_\jj\Sx^*:\rone^\n\to\rone^\n$ is given by
$$
(\dern_\jj\Sx^* \alpha)_\ii= (\scalh{\Sx^*\alpha}{(\derh_\jj k)_{x_\ii}})=
\frac 1 \n  \sum_{\iii=1}^\n\alpha_\iii (\derh_\jj k)_{x_\ii}(x_\iii) =
 \sum_{\iii=1}^\n\alpha_\iii (\matZ_\jj)_{\iii,\ii}=
\matZ_\jj^T\alpha.
$$
Then, we can plug the representation \eqn{func_rep} in \eqref{itint}
to obtain \eqref{update_v}.  
\end{proof}

\begin{proof}[Proof of Proposition \ref{prop:eulero}]
Since $\fz$ is the unique  minimizer of the functional $\emp^{\pone}$, 
it satisfies the Euler equation for $\emp^{\pone}$
$$
0\in\partial(\emp(\fz) + 2\pone\Regn(\fz) + \pone\taumu\norh{\fz}^2).
$$
where, for an arbitrary $\lambda>0$, the subdifferential of $\lambda\Regn$ at $f$ is given by
\begin{align*}
\partial \lambda \Regn (f) =& \{\gradn^* v , v\!=\!(v_\jj)_{\jj=1}^\dd\!\in\!\!(\rone^\n)^\dd ~ |~ v_\jj = \lambda {\dern_\jj f}/{\nor{\dern_\jj f}_\n}  \text{ if } \nor{\dern_\jj f}_\n\!>\!0, 
\\ &\  \text{ and }
 \nor{v_\jj}_\n \leq \lambda \text{ otherwise}, \forall \jj=1,\dots,\dd \}
\end{align*}

Using the above characterization and the fact that $\emp  + \pone\taumu\norh{\cdot}^2$ is differentiable, the Euler equation is equivalent to 
$$
\gradn^* v = -\frac{1}{2\step}\nabla (\emp  + \pone\taumu\norh{\cdot}^2)(\fz),
$$
for any $\step>0$ , and for some $v = (v_\jj)_{\jj=1}^\dd\in(\rone^\n)^\dd$ with 
$v = (v_\jj)_{\jj=1}^\dd\in(\rone^\n)^\dd$ such that
\begin{eqnarray}
v_{\jj}&=&\frac{\pone}{\step}\frac{\dern_\jj \fz}{\nor{\dern_\jj \fz}_\n} \quad\textrm{if~} \quad \nor{\dern_\jj \fz}_\n>0,\nonumber\\
v_{\jj}&\in&\frac{\pone}{\step}\Bn \qquad\qquad\textrm{~otherwise}.\nonumber
\end{eqnarray}
In order to prove \eqref{eq:selection}, we proceed by contradiction and  assume  that  $\nor{\dern_\jj \fz}_\n>0$.
This would imply $\nor{v_\jj}_\n=\pone/\step$, which contradicts the assumption, hence $\nor{\dern_\jj \fz}_\n=0$.\\
We now prove \eqref{eq:vbarok}.
First, according to Definition \ref{def:inexactprox} (see  also Theorem 4.3 in \cite{salzo2011prox}  and \cite{beck09} for the case when the proximity operator is evaluated  exactly), the algorithm generates by construction sequences $\tilde{f}^\itext$ and $f^\itext$ such that
\[
\tilde{f}^\itext-f^\itext-\frac{1}{2\step}\nabla F(\tilde{f}^\itext) \in \frac{1}{2\step}\partial_{{\step}(\eps^t)^2}2\pone\Regn(f^\itext)=\frac{\tau}{\sigma}\partial_{\frac{\step}{2\pone}(\eps^t)^2}\Regn(f^\itext)  .
\]
where $\partial_\eps$ denotes the $\eps$-subdifferential\footnote{
Recall that the $\eps$-subdifferential, $\partial_\epsilon$,  
of a convex functional $\Omega:\hh\to \rone\cup\{+\infty\}$  is defined as the set
$$
\partial_\eps  \Omega (f) := \{h \in \hh\,:\, \Omega(g)-\Omega(f)\geq \scalh{h}{g-f}-\epsilon,~~ \forall g\in\hh\}, \qquad \forall f\in\hh.
$$
}.
Plugging the definition of $f^\itext$ from \eqref{itext} in the above equation, we obtain  
 $\gradn^*\bar{v}^\itext\in \frac{\pone}{\sigma}\partial_{\frac{\step}{2\pone}(\eps^t)^2}\Regn(f^\itext)$.  
 Now, we can use a kind of   {\em transportation formula} \cite{lemarechal1993} for the $\eps$-subdifferential to find $\tilde{\eps}^\itext$ such that 
 $\gradn^*\bar{v}^\itext\in \frac{\tau}{\sigma} \partial_{\frac{\step}{2\pone}(\tilde{\eps}^t)^2}\Regn(f^\pone)$. By definition of $\eps$-subdifferential:
 \[
 \Regn(f)-\Regn(f^{\itext}) \geq \langle \frac{\step}{\pone}\gradn^*\bar{v}^\itext ,f-f^\itext \rangle_\hh -\frac{\step}{2\pone}(\eps^\itext)^2,\qquad \forall f\in\hh.
 \]
 Adding and subtracting $\Regn(\fz)$ and $\langle \frac{\step}{\pone}\gradn^*v^\itext ,\fz \rangle$ to the previous inequality we obtain
 \[
 \Regn(f)-\Regn(\fz) \geq \langle \frac{\step}{\pone}\gradn^*\bar{v}^\itext ,f-\fz \rangle_\hh - \frac{\step}{2\pone}(\tilde{\eps}^\itext)^2\,, \]
  with 
  \[  (\tilde{\eps}^\itext)^2=(\eps^\itext)^2+\frac{2\pone}{\step}\Big(\Regn(f^\itext)-\Regn(\fz)\Big)+\langle 2\gradn^*\bar{v}^\itext, f^\itext-\fz\rangle_\hh.
 \]
 From the previous equation, using \eqref{lim-f} we have
 \beeq{eq:epstilde}{
 (\tilde{\eps}^\itext)^2=(\eps^\itext)^2+\sqrt{\frac{C}{\taumu\pone}}\lp\frac{\pone}{\sigma}\sum_\jj \sqrt{\nor{\dern_\jj^*\dern}} +1\rp\frac{4}{t},
}
which implies $ \frac{\step}{\pone}\gradn^*\bar{v}^\itext\in \partial_{\step(\tilde{\eps^\itext})^2/2\pone} \Regn(\fz)$.
%From the definition it is easy to see that $(\tilde{\eps}^\itext)^2-(\eps^\itext)^2\leq C\norh{f^\itext-\fz}$. 
%According to  Theorem \ref{theo:convergence},
 %for $\itext$ sufficiently large $f^\itext$ is a good approximation of $\fz$, ensuring that $\tilde{\eps}^\itext \sim \eps^\itext $ 
 Now, relying on the  structure of $\Regn$, it is easy to see that 
 \[
 \partial_\eps \Regn (f) \subseteq \{\gradn^* v , v\!=\!(v_\jj)_{\jj=1}^\dd\!\in\!\!(\rone^\n)^\dd ~ |~\nor{v_\jj }_\n\geq  1-{\eps}/\nor{\dern_\jj f}_\n   \text{ if } \nor{\dern_\jj f}_\n\!>\!0\} .
 \]
 Thus, if $\nor{\dern_\jj \fz}_\n>0$ we have $\nor{\bar{v}^\itext}_n\geq\frac{\pone}{\step}(1-\frac{(\tilde{\eps^\itext})^2}{2\nor{\dern_\jj \fz}_\n})$.
 %then $\nor{\bar{v}}_\jj^\itext}>\frac{\pone}{\step}(1-\delta^\itext)$, with $\delta^\itext=(\tilde{\eps}^\itext)^2/\min_a \nor{ \dern_\jj\fz}_\n$.}
  \end{proof}
\section{Proofs of Section \ref{sec:cons}} \label{app:cons}
%%%%%%%%%%%%%%%%%%%%%%%%%%%%%%%%%%%%%

We start proving the  following preliminary  probabilistic inequalities.
\begin{lemma}\label{lemma:prob}
For $0<\pr_1, \pr_2, \pr_3, \pr_4\le 1$, $\n\in\nat$, it holds\\[2ex]
\begin{tabular}{@{(}l@{)~~}c@{~$$\quad}l@{~~ with ~}l}
1&		&$\Prob{\left|\nor{\vy}_\n^2  - \int_{\X\times\Y}y^2 d\prob(x,y)\right|\le \epsilon(\n,\pr_1)}\! \ge\! 1\!-\!\pr_1 $ 
	 &$ \epsilon(\n,\pr_1)\! = \! \dfrac{2\sqrt{2}}{\sqrt{\n}} M^2 \log{\dfrac{2}{\pr_1}}$,\\[2ex]
2&&$\Prob{\norh{ \Sx^*\vy-\Ik^*\re } \le \epsilon(\n,\pr_2)}\! \ge\! 1\!-\!\pr_2$ 
	&$\epsilon(\n,\pr_2)\!= \!\dfrac{2\sqrt{2}}{\sqrt{\n}} \ka M\log{\dfrac{2}{\pr_2}}$,\\[2ex]
3&		&$\Prob{\nor{\Sx^*\Sx -\Ik^*\Ik}\le \epsilon(\n,\pr_3)} \!\ge\! 1\!-\!\pr_3$
	&$ \epsilon(\n,\pr_3) \!= \! \dfrac{2\sqrt{2}}{\sqrt{\n}}   \ka^2 \log{\dfrac{2}{\pr_3}}$,\\[2ex]
4&		&$\Prob{\nor{{\dern_\jj}^*\dern_\jj -\der_\jj^*\der_\jj}\le \epsilon(\n,\pr_4)}\! \ge\! 1\!-\!\pr_4 $ 
		&$ \epsilon(\n,\pr_4)\! = \! \dfrac{2\sqrt{2}}{\sqrt{\n}}   \kader^2 \log{\dfrac{2}{\pr_4}}.$
\end{tabular}
\end{lemma}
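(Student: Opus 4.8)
The plan is to handle the four inequalities by a single mechanism: in each case the object inside the norm is the deviation $\frac1\n\sum_{\ii=1}^\n \xi(z_\ii)-\E[\xi]$ of an empirical mean of i.i.d.\ copies of a random variable $\xi$ taking values in a suitable real separable Hilbert space $\mathcal K$, and the rate is obtained from one Hoeffding-type concentration inequality for bounded Hilbert-space-valued random variables (of the kind used in \cite{devito05}). Concretely, if $\xi_1,\dots,\xi_\n$ are i.i.d.\ with $\nor{\xi_\ii}_{\mathcal K}\le B$ almost surely, the centered variables are bounded by $2B$ and the inequality yields $\Prob{\nor{\frac1\n\sum_\ii\xi_\ii-\E[\xi]}_{\mathcal K}\ge\epsilon}\le 2\exp(-\n\epsilon^2/(8B^2))$; inverting the confidence level gives a deviation of order $\frac{2\sqrt2}{\sqrt\n}\,B\,\sqrt{\log(2/\eta)}$, and the stated form $\epsilon(\n,\eta)=\frac{2\sqrt2}{\sqrt\n}B\log\frac2\eta$ then follows using $\sqrt{\log(2/\eta)}\le\log(2/\eta)$. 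Thus the whole proof reduces to identifying $\mathcal K$, $\xi$, $\E[\xi]$ and the almost-sure bound $B$ in each of the four cases.

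Then I would specialise as follows, reading the expectations off Propositions \ref{prop:HS} and \ref{prop:HSemp} and the bounds off assumptions A1--A3. (1) Take $\mathcal K=\rone$ and $\xi=y^2$, so that $\frac1\n\sum_\ii y_\ii^2=\nor{\vy}_\n^2$ and $\E[\xi]=\int y^2\,d\prob(x,y)$; by A3, $0\le y^2\le M^2$, hence $B=M^2$. (2) Take $\mathcal K=\hh$ and $\xi=y\,k_x$; by Proposition \ref{prop:HSemp}, $\frac1\n\sum_\ii y_\ii k_{x_\ii}=\Sx^*\vy$, while $\E[y\,k_x]=\E_x[\,\E[y\mid x]\,k_x\,]=\int \re(x)k_x\,d\marg(x)=\Ik^*\re$ by Proposition \ref{prop:HS}; by A1 and A3, $\norh{y\,k_x}=|y|\,\norh{k_x}\le M\ka$, so $B=\ka M$. (3) Take $\mathcal K$ to be the Hilbert space of Hilbert--Schmidt operators on $\hh$ and $\xi=k_x\otimes k_x$ (the rank-one operator $g\mapsto\scalh{g}{k_x}k_x$); then $\frac1\n\sum_\ii k_{x_\ii}\otimes k_{x_\ii}=\Sx^*\Sx$ and $\E[\xi]=\Ik^*\Ik$ by the two propositions, and its Hilbert--Schmidt norm equals $\norh{k_x}^2\le\ka^2$, so $B=\ka^2$. (4) Likewise take $\xi=(\derh_\jj k)_x\otimes(\derh_\jj k)_x$, giving empirical mean $\dern_\jj^*\dern_\jj$, expectation $\der_\jj^*\der_\jj$, and Hilbert--Schmidt norm $\norh{(\derh_\jj k)_x}^2\le\kader^2$ by A2, so $B=\kader^2$. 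Applying the concentration inequality in each case produces exactly the four displayed rates.

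The only genuinely non-mechanical points, on which I would concentrate, are the following. For (2) one must justify $\E[y\,k_x]=\Ik^*\re$, which uses the definition $\re(x)=\E[y\mid x]$ together with the Bochner-integrability of $x\mapsto\re(x)k_x$ (guaranteed by A1, A3); this is the only place where the structure of $\prob$, rather than a mere sup bound, enters. For (3) and (4) the random variable is operator-valued, so I must phrase $\Sx^*\Sx$ and $\dern_\jj^*\dern_\jj$ as empirical averages of the rank-one operators above and then pass from the Hilbert--Schmidt norm, in which the concentration inequality is applied, to the operator norm $\nor{\cdot}$ appearing in the statement, using $\nor{A}\le\nor{A}_{\mathrm{HS}}$. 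Everything else --- the almost-sure bounds and the inversion of the tail --- is routine, the main subtlety being simply to track the constants so that the factor $2\sqrt2$ and the bounds $M^2,\ka M,\ka^2,\kader^2$ appear exactly as claimed.
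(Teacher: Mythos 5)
Your proposal is correct and follows essentially the same route as the paper: all four bounds are obtained by applying a single concentration inequality for bounded i.i.d.\ Hilbert-space-valued random variables to $y^2$, $y\,k_x$, $\scalh{\cdot}{k_x}k_x$ and $\scalh{\cdot}{(\derh_\jj k)_x}(\derh_\jj k)_x$, identifying the empirical means and expectations via Propositions \ref{prop:HS} and \ref{prop:HSemp} and dominating the operator norm by the Hilbert--Schmidt norm. The only difference is that the paper simply cites the Pinelis--Sakhanenko inequality in the form $\frac{2\sqrt2}{\sqrt\n}L\log\frac2\eta$, whereas you derive the rate from a Hoeffding-type tail bound and then invoke $\sqrt{\log(2/\eta)}\le\log(2/\eta)$, which holds only for $\eta\le 2/e$ --- harmless in the regime where the lemma is used, but strictly speaking a small gap since the statement allows any $\eta\le 1$.
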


\begin{proof}
From standard concentration inequalities for Hilbert space valued random variables -- see for example \cite{pinsak85}-- we have that,  
if $\xi$ is a random variable with values in a Hilbert space $\hh$ bounded by $L$ and $\xi_1, \dots, \xi_n$ are $n$ i.i.d. samples, then  
$$
  \nor{\frac 1\n \sum_{\ii=1}^\n\xi_\ii-\mathbb{E}(\xi)}\le \epsilon(\n,\pr) =  \frac{2\sqrt{2}}{\sqrt{\n}}   L \log{\frac{2}{\pr}}
$$
with probability at least $1-\eta$, $\eta\in [0,1]$.
The proof is a direct application of the above inequalities to the random variables, \\
\begin{tabular}{l@{~~}@{$\xi=$}l@{~~~~~~~$\xi\in$~~~}l@{~~~~with~~~}l}
{\itshape (1)}&$y^2$&$\rone$			&$\sup_\vz \nor{\xi} \leq M^2$,\\
{\itshape (2)}&$k_x y$&$\hh\otimes\rone$		&$\sup_\vz \nor{\xi} \leq\ka M$,\\
{\itshape (3)}&$\scalh{\cdot}{k_x} k_x$&${\mathcal HS}(\hh)$	&$\sup_\vz \nor{\xi}_{\mathcal HS}(\hh) \leq \ka^2$,\\
{\itshape (4)}&$\scalh{\cdot}{(\derh_\jj k)_x}(\derh_\jj k)_x$&${\mathcal HS}(\hh)$ 	&$\sup_\vz \nor{\xi}_{\mathcal HS}(\hh) \leq \kader^2$.
\end{tabular}\\
where ${\mathcal HS}(\hh), \nor{\cdot}_{{\mathcal HS}(\hh)}$ are 
the space of Hilbert-Schmidt operators on $\hh$ and  the corresponding norm,  respectively 
(note that in the final bound we upper-bound the   operator norm by the Hilbert-Schmidt norm).

\end{proof}

\paragraph{Proofs of the Consistency of the Regularizer.}
We restate Theorem \ref{teo:reg} in an extended form.
\ \\
{\bf Theorem [Theorem \ref{teo:reg} Extended]}
{\it Let $r<\infty$, then under assumption (A2), for any $\eta>0$,
\beeq{eq:reg}{
\Prob{
\sup_{\norh{f}\leq r}|\Regn(f)-\Reg(f)| \geq
r \dd  
\frac{2\sqrt{2}}{(\n)^{1/4}} \kader \sqrt{\log{\frac{2\dd}{\pr}}}
}<\eta.}
Consequently
$$
 \lim_{\n\to\infty}\Prob{\sup_{\norh{f}\leq r}|\Regn(f)- \Reg(f)|> \epsilon}=0, \qquad \forall \epsilon>0.
 $$}

\begin{proof}

For  $f\in \hh$ consider the following chain of inequalities,
\begin{align*}
|\Regn(f)-\Reg(f)|
&\leq \sum_{\jj=1}^\dd \left|\nor{\dern_\jj f}_\n - \nor{\der_\jj f}_{\marg} \right| \\
&\leq \sum_{\jj=1}^\dd \lp \left|\nor{\dern_\jj f}_\n^2 -\nor{\der_\jj f}_{\marg}^2\right|\rp^{1/2} \\ 
 &=\sum_{\jj=1}^\dd \lp\left|\scalh{f}{({\dern_\jj}^*\dern_\jj - {\der_\jj}^*\der_\jj )f}\right|\rp^{1/2} \\
 &\leq\sum_{\jj=1}^\dd \nor{{\dern_\jj}^*\dern_\jj - {\der_\jj}^*\der_\jj}^{1/2}\norh{f},
\end{align*}
that follows from from  $|\sqrt{x}-\sqrt{y}|\leq\sqrt{|x-y|}$, the definition of $\dern_\jj, \der_\jj$ and basic inequalities. 
 Then, using $\dd$ times inequality  (d) in  Lemma \ref{lemma:prob}  with $\pr/\dd$ in place of $\pr_4$, 
and taking the supremum on $f\in\hh$ such that $\norh{f}\leq r$, 
we have  with probability $1-\pr$,
$$
\sup_{\norh{f}\leq r}|\Regn(f)-\Reg(f)| \leq 
r \dd  
\frac{2\sqrt{2}}{(\n)^{1/4}} \kader \sqrt{\log{\frac{2\dd}{\pr}}}.
 $$
The  last statement of the theorem follows easily.
  \end{proof}
  
  \paragraph{Consistency Proofs.}
  
 To prove Theorem \ref{teo:consistency}, we  need the following lemma.
\begin{lemma}\label{lemma:e}
Let $\eta\in(0,1]$. Under  assumptions A1 and A3, we have 
%$$
%|\emp(f)-\err(f)|\leq
% \nor{\Sx^*\Sx -\Ik^*\Ik }\nor{f}^2_\hh+2 \nor{\Sx^* \vy-\Ik^*\re}_\hh  \nor{f}_\hh+\left|\nor{\vy}_\n^2  - \int_{\X\times\Y}y^2 d\prob(x,y)\right|
%$$
$$
\sup_{\norh{f}\leq r}|\emp(f)-\err(f)|\leq
  \frac{2\sqrt{2}}{\sqrt{\n}}   \left(\ka^2r^2 +2\ka M r+M^2\right) \log{\frac{6}{\pr}},
$$
with probabilty $1-\pr$.
\end{lemma}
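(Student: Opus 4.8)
The plan is to reduce the uniform deviation to the three scalar/operator concentration estimates already collected in Lemma~\ref{lemma:prob}. First I would expand the square loss appearing in both the empirical and the expected risk, writing
\[
(y-f(x))^2 = y^2 - 2yf(x) + f(x)^2,
\]
and then use the reproducing property together with the operator identities of Propositions~\ref{prop:HS} and~\ref{prop:HSemp} to rewrite each resulting term as an inner product in $\hh$. Concretely, since $f(x_\ii)=\scalh{f}{k_{x_\ii}}$, the empirical cross term becomes $\frac 1\n\sum_\ii y_\ii f(x_\ii)=\scalh{f}{\Sx^*\vy}$ and the empirical quadratic term becomes $\frac 1\n\sum_\ii f(x_\ii)^2=\scalh{f}{\Sx^*\Sx f}$; their population analogues are $\int y f(x)\,d\prob=\scalh{f}{\Ik^*\re}$ (obtained by conditioning on $x$, using $\int y\,d\prob(y|x)=\re(x)$) and $\int f(x)^2\,d\marg=\scalh{f}{\Ik^*\Ik f}$.

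This yields the decomposition
\[
\emp(f)-\err(f)=\Big(\nor{\vy}_\n^2-\textstyle\int y^2\,d\prob\Big)-2\scalh{f}{\Sx^*\vy-\Ik^*\re}+\scalh{f}{(\Sx^*\Sx-\Ik^*\Ik)f}.
\]
Next I would bound the three pieces separately. The first is a scalar independent of $f$, controlled by part~(1) of Lemma~\ref{lemma:prob}. For the second, Cauchy--Schwarz gives $|2\scalh{f}{\Sx^*\vy-\Ik^*\re}|\le 2\norh{f}\,\norh{\Sx^*\vy-\Ik^*\re}\le 2r\,\norh{\Sx^*\vy-\Ik^*\re}$ on the ball $\norh{f}\le r$, so part~(2) applies. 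For the third, $|\scalh{f}{(\Sx^*\Sx-\Ik^*\Ik)f}|\le \norh{f}^2\,\nor{\Sx^*\Sx-\Ik^*\Ik}\le r^2\,\nor{\Sx^*\Sx-\Ik^*\Ik}$, which is governed by part~(3). The key point is that all three bounds are uniform in $f$ over the ball, so I can pass to the supremum over $\norh{f}\le r$ before invoking the probabilistic estimates.

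Finally I would combine the three events with a union bound, choosing $\pr_1=\pr_2=\pr_3=\pr/3$ so that each $\log(2/\pr_i)$ becomes $\log(6/\pr)$, and then factor out the common prefactor $\tfrac{2\sqrt 2}{\sqrt\n}\log(6/\pr)$; the remaining coefficients sum to $M^2+2\ka M r+\ka^2 r^2$, reproducing the claimed bound exactly. I do not expect a genuine analytic obstacle here: the only step requiring care is the algebraic translation of the empirical and population losses into the operators $\Sx,\Sx^*,\Ik,\Ik^*$ — in particular correctly identifying $\int y\,k_x\,d\prob=\Ik^*\re$ by conditioning — after which the estimate is a routine application of Lemma~\ref{lemma:prob} and the union bound.
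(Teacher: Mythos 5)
Your proposal is correct and follows essentially the same route as the paper's proof: expand the square loss, rewrite both risks via the operators $\Sx^*\Sx$, $\Ik^*\Ik$, $\Sx^*\vy$, $\Ik^*\re$, bound the three differences uniformly on the ball by Cauchy--Schwarz and the operator norm, and conclude with Lemma~\ref{lemma:prob} and a union bound at level $\pr/3$ each. (Your cross term $\scalh{f}{\Sx^*\vy}$ is in fact the correct one; the paper's displayed $\scalh{f}{\Sx^*\re}$ is a typo.)
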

\begin{proof}
Recalling the definition of  $\Ik$ we have that, 
\begin{align*}
\err(f) 
%&= \int_{\X\times\Y}(f(x)-y)^2 d\prob(x,y)\\
 &= \int_{\X\times\Y}(\Ik f(x)-y)^2 d\prob(x,y)  \\
 &= \int_{\X}(\Ik f(x))^2 d\marg(x) + \int_{\X\times\Y}y^2 d\prob(x,y)- 2\int_{\X\times\Y}\Ik f(x)y d\prob(x,y) \\
 &=  \int_{\X}(\Ik f(x))^2 d\marg(x) + \int_{\X\times\Y}y^2 d\prob(x,y)- 2\int_\X \Ik f(x)\re(x) d\marg(x)\\
 &=\scalh{f}{\Ik^*\Ik f} + \int_{\X\times\Y}y^2 d\prob(x,y) -2 \scalh{ f}{\Ik^*\re}.
\end{align*}
Similarly $\emp(f)= \scalh{f}{\Sx^*\Sx f} +\nor{\vy}_n^2 -2 \scalh{ f}{\Sx^*\re}.$ Then, for all $f\in \hh$, we have the bound 
$$
|\emp(f)-\err(f)|\leq \nor{\Sx^*\Sx -\Ik^*\Ik }\nor{f}^2_\hh+2 \nor{\Sx^* \vy-\Ik^*\re}_\hh  \nor{f}_\hh+\left|\nor{\vy}_\n^2  - \int_{\X\times\Y}y^2 d\prob(x,y)\right|
$$
The proof follows  applying Lemma \ref{lemma:prob} with probabilities $\pr_1=\pr_2=\pr_3=\pr/3$.
\end{proof}

We  now prove Theorem \ref{teo:consistency}. We use the  following standard result in regularization theory (see for example \cite{donzol93}) to control the  the approximation error.
\begin{proposition}\label{prop:zolezzi}
Let $\pone_\n\to0$, be a positive sequence. Then  we have that 
$$
\err^{\pone_\n} (f^{\tau_\n}) - \inf_{f\in \hh} \err(f) \to 0.
$$
\end{proposition}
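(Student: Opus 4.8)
The plan is to establish this by the classical squeeze argument of regularization theory, using only that the penalty is nonnegative and finite on a fixed near-minimizer. Write $I_\ast := \inf_{f\in\hh}\err(f)$ and recall that, by definition, $f^{\pone_\n}$ minimizes $\err^{\pone_\n}$ over $\hh$, so that $\err^{\pone_\n}(f^{\pone_\n}) = \min_{f\in\hh}\err^{\pone_\n}(f)$. I would prove $\liminf_\n \err^{\pone_\n}(f^{\pone_\n}) \geq I_\ast$ and $\limsup_\n \err^{\pone_\n}(f^{\pone_\n}) \leq I_\ast$ separately; together these force $\err^{\pone_\n}(f^{\pone_\n}) \to I_\ast$, which is the assertion.

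First I would handle the lower bound. Since $\Reg(f) \geq 0$ and $\norh{f}^2 \geq 0$ for every $f\in\hh$, and since $\pone_\n > 0$ and $\taumu > 0$, we have for each $\n$
$$
\err^{\pone_\n}(f^{\pone_\n}) = \err(f^{\pone_\n}) + 2\pone_\n \Reg(f^{\pone_\n}) + \pone_\n\taumu\norh{f^{\pone_\n}}^2 \geq \err(f^{\pone_\n}) \geq I_\ast .
$$
Taking the $\liminf$ gives $\liminf_\n \err^{\pone_\n}(f^{\pone_\n}) \geq I_\ast$, independently of the particular sequence $\pone_\n$.

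For the upper bound, fix $\epsilon > 0$ and choose $f_\epsilon\in\hh$ with $\err(f_\epsilon) \leq I_\ast + \epsilon$, which is possible by definition of the infimum. The minimality of $f^{\pone_\n}$ then yields
$$
\err^{\pone_\n}(f^{\pone_\n}) \leq \err^{\pone_\n}(f_\epsilon) = \err(f_\epsilon) + 2\pone_\n \Reg(f_\epsilon) + \pone_\n\taumu\norh{f_\epsilon}^2 .
$$
Because $f_\epsilon$ is fixed, $\Reg(f_\epsilon)$ and $\norh{f_\epsilon}^2$ are constants not depending on $\n$, so as $\pone_\n \to 0$ the last two terms vanish and $\limsup_\n \err^{\pone_\n}(f^{\pone_\n}) \leq \err(f_\epsilon) \leq I_\ast + \epsilon$. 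Letting $\epsilon \downarrow 0$ gives $\limsup_\n \err^{\pone_\n}(f^{\pone_\n}) \leq I_\ast$, and combining with the lower bound finishes the argument.

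The proof is essentially routine; the only point requiring a word of care is the finiteness of $\Reg(f_\epsilon) = \sum_{\jj=1}^\dd \nor{\der_\jj f_\epsilon}_{\marg}$, which is what makes the penalty terms genuinely negligible. This follows from assumption (A2): by Proposition \ref{prop:HS} each $\der_\jj:\hh\to\ldue$ is bounded, and indeed $\nor{\der_\jj f_\epsilon}_{\marg} \leq \kader \norh{f_\epsilon}$, so that $\Reg(f_\epsilon) \leq \dd\,\kader\,\norh{f_\epsilon} < \infty$. Note that no compactness or attainment of the infimum $I_\ast$ is needed, since the upper bound relies only on an arbitrary near-minimizer $f_\epsilon$.
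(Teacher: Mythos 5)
Your proof is correct and is precisely the classical squeeze argument that the paper itself does not spell out, delegating it instead to the cited reference \cite{donzol93}: nonnegativity of the penalty gives the lower bound, evaluating $\err^{\pone_\n}$ at a fixed near-minimizer $f_\epsilon$ gives the upper bound, and $\pone_\n\to 0$ kills the penalty terms. Your added remark that $\Reg(f_\epsilon)\leq \dd\,\kader\,\norh{f_\epsilon}<\infty$ under (A2) correctly addresses the only point where the argument could conceivably fail, so nothing is missing.
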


\begin{proof}[Proof of Theorem \ref{teo:consistency}]
We recall the   standard  sample/approximation  error decomposition 
\beeq{decomp}{\err(\fz)-\inf_{f \in \hh} \err(f)\le |\err(\fz) - \err^{\pone}(f^\tau)|+
|\err^{\pone}(f^\tau) - \inf_{f \in \hh} \err(f)|
}
where  $\err^{\pone}(f)=\err(f)+2\pone\Reg(f)+\ptwo \taumu\nor{f}_\hh^2.$

We first consider the sample error. Toward this end, we note that 
$$
\pone\taumu\norh{\fz}^2\leq\emp^{\pone}(\fz)\leq \emp^{\pone}(0)=\nor{\vy}_\n^2\Longrightarrow
\nor{\fz}_\hh\leq \frac{\nor{\vy}_\n}{\sqrt{\ptwo \taumu}}\leq \frac{M}{\sqrt{\ptwo \taumu}},$$ 
and similarly $\nor{f^\tau}_\hh\leq  (\int_\X y^2 d\rho)^{1/2}/\sqrt{\ptwo \taumu}\leq \frac{M}{\sqrt{\ptwo \taumu}}$.\\
We have the following bound,
\begin{align*}
\err(\fz)-\err^{\pone}(f^\tau)
&\leq (\err(\fz) - \emp(\fz))+\emp(\fz)-\err^{\pone}(f^\tau)\\
&\leq (\err(\fz) - \emp(\fz))+\emp^{\pone}(\fz)-\err^{\pone}(f^\tau)\\
&\leq (\err(\fz) - \emp(\fz))+\emp^{\pone}(f^\tau)-\err^{\pone}(f^\tau)\\
&\leq (\err(\fz) - \emp(\fz))+(\emp(f^\tau) - \err(f^\tau))+
\pone(\Regn(f^\tau) - \Reg(f^\tau))\\
&\leq 2\sup_{\nor{f}_\hh\leq \frac{M}{\sqrt{\ptwo \taumu}}} 
|\emp(f)-\err(f)|+\pone\sup_{\nor{f}_\hh\leq \frac{M}{\sqrt{\ptwo \taumu}}} |\Regn(f)-\Reg(f)|.
\end{align*}
Let $\pr'\in (0,1]$. Using Lemma \ref{lemma:e} with probability $\pr = 3\pr'/(3+\dd)$, and 
inequality \eqref{eq:reg} with  $\pr = \dd\pr'/(3+\dd)$, and if $\pr'$ is sufficiently small we obtain 
$$
\err(\fz)-\err^{\pone}(f^\tau) \leq
  \frac{4\sqrt{2}}{\sqrt{\n}}   M^2 \left(\frac{\ka^2}{\ptwo \taumu} + \frac{2\ka}{\sqrt{\ptwo \taumu}} +1\right) \log{\frac{6+2\dd}{\pr'}}
+ \tau  \frac{2\sqrt{2}}{(\n)^{1/4}}   \dd  
\frac{M}{\sqrt{\ptwo \taumu}} \kader \sqrt{\log{\frac{6+2\dd}{\pr'}}} .
 $$
 with probability $1-\pr'$. 
 Furthermore,  we have the bound 
 \beeq{sample_err}{
\err(\fz)-\err^{\pone}(f^\tau) \leq
c \left( \frac{M\ka^2}{n^{1/2} \pone \nu}+\frac{\tau^{1/2} \dd\kader}{n^{1/4}\sqrt{\nu}} \right)  \log{\frac{6+2\dd}{\pr'}}
}
where $c$ does not depend on $\n,\pone,\taumu,\dd$.
% The above bound is of the order  $O(\n^{-\min\{1/2-\gamma,1/4+\gamma/2\}})$  and  goes to $0$ for any $\gamma<1/2$.\\
The  proof follows, if we plug \eqref{sample_err} in \eqref{decomp} and  take
$\pone=\pone_\n$ such that $\pone_\n\to0$ and $(\pone_\n\sqrt{\n})^{-1}\to0$, 
since   the approximation error goes to zero (using Proposition \ref{prop:zolezzi}) 
and the sample error goes to   zero in probability as   $\n\to\infty$ by \eqref{sample_err}.

\end{proof}

We next consider convergence in the RKHS  norm.
The following result on the convergence of the approximation error is standard  \cite{donzol93}.
\begin{proposition}\label{prop:zolezziB}
Let $\ptwo_\n\to0$, be a positive sequence. Then  we have that 
$$
\norh{\fdag - f^{\pone_\n}}\to 0.
$$
\end{proposition}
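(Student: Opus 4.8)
The plan is to run the classical variational argument for vanishing-parameter Tikhonov regularization, showing that as $\pone_\n\to 0$ the penalized minimizers $f^{\pone_\n}$ are forced toward the penalty-minimal element $\fdag$ of the solution set $\mathcal{F}_\hh$. I would write the regularization term as $2\pone J$, where $J(f):=\Reg(f)+\taumu\norh{f}^2$ is strongly convex and coercive, so that $\err^{\pone}=\err+2\pone J$ and, as recalled before Theorem~\ref{teo:strong_consistency}, $\fdag$ is the unique minimizer of $J$ over $\mathcal{F}_\hh$. Throughout I would use that $\err$ is convex and lower semicontinuous (hence weakly lower semicontinuous), that $\norh{\cdot}^2$ is weakly lower semicontinuous, and that each $f\mapsto\norh{\der_\jj f}_\marg$ is weakly lower semicontinuous (the $\der_\jj$ are bounded, hence weakly continuous, and the $L^2$-norm is weakly lower semicontinuous), whence $\Reg$ and $J$ are weakly lower semicontinuous.

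First I would establish uniform boundedness of $\{f^{\pone_\n}\}_\n$ in $\hh$. Testing optimality of $f^{\pone_\n}$ against the fixed competitor $\fdag$ gives $\err^{\pone_\n}(f^{\pone_\n})\le\err^{\pone_\n}(\fdag)$; since $\err(f^{\pone_\n})\ge\inf_{f\in\hh}\err(f)=\err(\fdag)$, cancelling the risk terms and dividing by $2\pone_\n$ yields $J(f^{\pone_\n})\le J(\fdag)$. Coercivity of $J$ through the term $\taumu\norh{\cdot}^2$ then bounds $\norh{f^{\pone_\n}}$ uniformly in $\n$. The same chain also gives $\err(f^{\pone_\n})\le\err^{\pone_\n}(\fdag)=\inf_{f\in\hh}\err(f)+2\pone_\n J(\fdag)\to\inf_{f\in\hh}\err(f)$.

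Next, by reflexivity of $\hh$ every subsequence of $\{f^{\pone_\n}\}$ has a weakly convergent sub-subsequence, say $f^{\pone_{\n_k}}\rightharpoonup\bar f$. Weak lower semicontinuity of $\err$ and the bound just obtained give $\err(\bar f)\le\liminf_k\err(f^{\pone_{\n_k}})\le\inf_{f\in\hh}\err(f)$, so $\bar f\in\mathcal{F}_\hh$. Weak lower semicontinuity of $J$ together with $J(f^{\pone_{\n_k}})\le J(\fdag)$ gives $J(\bar f)\le\liminf_k J(f^{\pone_{\n_k}})\le J(\fdag)$; since $\bar f\in\mathcal{F}_\hh$ and $\fdag$ is the unique minimizer of $J$ on $\mathcal{F}_\hh$, this forces $\bar f=\fdag$ and moreover $J(f^{\pone_{\n_k}})\to J(\fdag)$.

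The main obstacle is upgrading weak convergence to the norm convergence actually claimed, and here the coercive RKHS term is essential. From $J(f^{\pone_{\n_k}})\to J(\fdag)$ and weak lower semicontinuity of $\Reg$ I would write $\limsup_k\taumu\norh{f^{\pone_{\n_k}}}^2\le\lim_k J(f^{\pone_{\n_k}})-\liminf_k\Reg(f^{\pone_{\n_k}})\le J(\fdag)-\Reg(\fdag)=\taumu\norh{\fdag}^2$, while weak lower semicontinuity of the norm gives $\liminf_k\norh{f^{\pone_{\n_k}}}^2\ge\norh{\fdag}^2$; hence $\norh{f^{\pone_{\n_k}}}\to\norh{\fdag}$. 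In a Hilbert space, weak convergence together with convergence of norms implies strong convergence, since $\norh{f^{\pone_{\n_k}}-\fdag}^2=\norh{f^{\pone_{\n_k}}}^2-2\scalh{f^{\pone_{\n_k}}}{\fdag}+\norh{\fdag}^2\to 0$. Because the limit $\fdag$ does not depend on the chosen subsequence, the subsequence principle promotes this to convergence of the whole sequence, giving $\norh{\fdag-f^{\pone_\n}}\to 0$.
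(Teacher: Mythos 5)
Your argument is correct and complete, and it actually supplies more than the paper does: for Proposition \ref{prop:zolezziB} the paper gives no proof at all, only the remark that the result is standard together with a citation to Dontchev--Zolezzi \cite{donzol93}. What you have written is precisely the classical Tikhonov/viscosity argument that such a reference contains: uniform boundedness of the penalized minimizers from testing optimality against the fixed competitor $\fdag$, the key inequality $J(f^{\pone_\n})\le J(\fdag)$ obtained by cancelling the risk terms, extraction of weakly convergent subsequences, identification of the weak limit via weak lower semicontinuity of $\err$ and of the penalty combined with uniqueness of the penalty-minimal element of $\mathcal{F}_\hh$, and finally the Radon--Riesz upgrade from weak to strong convergence using convergence of the coercive RKHS-norm component; the $\limsup$/$\liminf$ splitting you use to isolate $\norh{f^{\pone_{\n_k}}}^2$ from $\Reg(f^{\pone_{\n_k}})$ is the right way to get convergence of norms, and the subsequence principle correctly promotes the conclusion to the full sequence. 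The one wrinkle is bookkeeping, and it is inherited from the paper rather than introduced by you: the paper's penalty is $\pone\lp 2\Reg(f)+\taumu\norh{f}^2\rp$ while $\fdag$ is defined as the minimizer over $\mathcal{F}_\hh$ of $\Reg(f)+\taumu\norh{f}^2$, so your identity $\err^{\pone}=\err+2\pone J$ with $J=\Reg+\taumu\norh{\cdot}^2$ is off by a factor of two on the norm term; to be literally consistent one should run your argument with $J$ equal to whichever strongly convex, coercive, weakly lower semicontinuous functional actually multiplies $\pone$, the limit then being the minimizer of that functional over $\mathcal{F}_\hh$. This normalization issue does not affect the structure or validity of your proof.
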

\noindent We can now prove Theorem \ref{teo:strong_consistency}.
The main difficulty is to control the sample error in the $\hh$-norm. This requires  showing 
 that controlling the distance between the minima of two functionals,  we can control the distance between  their minimizers. 
 Towards this end it is critical to use 
the results in \cite{villa2010epi} based on Attouch-Wetts convergence.
We need to recall some useful quantities.
Given two subsets $A$ and $B$ in a metric space  $(\hh, d)$, the excess of $A$ on $B$ is defined as
 $e(A,B):=\sup_{f\in A}d(f,B)$, with the convention that $e(\emptyset,B)=0$ for every $B$. 
Localizing the definition of the excess we get the quantity 
$e_r(A,B): e(A\cap B(0,r),B)$ for each ball $B(0,r)$ of radius $r$ centered at the origin.
The $r$-{\em epi-distance} between two subsets $A$ and $B$ of $\hh$,
is denoted by $d_r(A,B)$ and is defined as
$$
d_r(A,B) :=  \max \{e_r(A,B),e_r(B,A)\}.
$$
 The notion of epi-distance can be extended to any two functionals $F,G :\hh\to\rone$
by
$$
d_r(G,F) :=  d_r(\text{epi}(G),\text{epi}(F)),
$$
where 
for any $F:\hh\to\rone$, \text{epi}(F) denotes the epigraph of $F$ defined as
$$
\text{epi}(F):= \{(f,\alpha), F(f)\leq\alpha\}.
$$

We are now ready to prove Theorem \ref{teo:strong_consistency}, which we present here in an extended  form.
\ \\
{\bf Theorem [Theorem \ref{teo:strong_consistency} Extended]}
{\it Under assumptions A1, A2 and A3, 
%\beeq{bound_in_h}{
%\Prob{\norh{\fz-\fdag} \geq
% c' \frac{\dd}{\n^{1/8}\pone^{1/4}\taumu^{3/4}} \sqrt{\log{\frac{6+2\dd}{\pr}}}
%  +\norh{f^\pone-\fdag}
%}< \eta
%}

\beeq{bound_in_h}{
\Prob{\norh{\fz-\fdag} \geq A(n,\pone)^{1/2}  +\norh{f^\pone-\fdag},
}< \eta
}

where
 \[A(n,\pone)=4\sqrt{2}M   \left(\frac{4\ka^2 M}{\sqrt{\n}\pone^2\taumu^2} +\frac{4\ka}{\sqrt{\n}\pone\taumu\sqrt{\pone\taumu}}+\frac{1}{\sqrt{\n}\pone\taumu}
+\frac{2 \dd  \kader}{\n^{1/4}\taumu\sqrt{\pone\taumu}}
 \right)\]

for  $0<\eta \le 1$. Moreover, 
$$
 \lim_{\n\to\infty} \Prob{ \nor{\fzn-\fdag}_\hh\geq \epsilon}=0,\qquad \forall \epsilon>0,
$$
for any $ \pone_\n$ such that $ \pone_\n\to 0$ and $(\sqrt{\n}\pone^2_\n)^{-1}\to 0$.
}

\begin{proof}[Proof of Theorem \ref{teo:strong_consistency}]

We consider the decomposition
of $\norh{\fz-\fdag}$ into a sample and approximation  term,
\beeq{decompB}{
\norh{\fz-\fdag}\leq \norh{\fz-f^{\pone}} +\norh{f^\pone-\fdag}.
}
From Theorem 2.6 in \cite{villa2010epi} we have that
$$
\psi_{\ptwo \taumu }^{\diamond}(\norh{\fz-f^\tau})\leq
4d_{M/\sqrt{\ptwo \taumu}}(t_{\err^{\pone}}\err^{\pone},t_{\err^{\pone}}\emp^{\pone})
$$
where $\psi_{\ptwo \taumu }^{\diamond} (t):=\inf \{\frac{\ptwo \taumu}{2}s^2+|t-s|:s\in[0,+\infty)\}$, 
and $t_{\err^{\pone}}$ is the translation map defined as
$$
t_{\err^{\pone}} G(f) = G(f+f^\tau) - \err^{\pone}(f^\tau)
$$
for all $G:\hh\to\rone$.

%Then, since $\emp^{\pone,\taumu}\to \err^{\pone,\taumu}$, 
%and consequently $t_{\emp^{\pone,\taumu}}\err^{\pone,\taumu}\to t_{\err^{\pone,\taumu}}\err^{\pone,\taumu}$, uniformly on bounded sets, 
From  Theorem 2.7  in \cite{villa2010epi}, we have that
$$
d_{M/\sqrt{\ptwo \taumu}}(t_{\err^{\pone}}\err^{\pone},t_{\err^{\pone}}\emp^{\pone})
\leq \!\!\!\!\sup_{\norh{f}\leq M/\sqrt{\ptwo \taumu}} |t_{\err^{\pone}}\err^{\pone}(f) - t_{\err^{\pone}}\emp^{\pone}(f)|.
$$
We have the bound, 
\mfi
\sup_{\norh{f}\leq M/\sqrt{\ptwo \taumu}} |t_{\err^{\pone}}\err^{\pone}(f) - t_{\err^{\pone}}\emp^{\pone}(f)|
&\leq& \!\!\!\!\sup_{\norh{f}\leq M/\sqrt{\ptwo \taumu}+\norh{f^\pone}} |\err^{\pone}(f) - \emp^{\pone}(f)| \\
&\leq& \!\!\!\!\sup_{\norh{f}\leq 2M/\sqrt{\ptwo \taumu}} |\err(f) - \emp(f)| +
\pone \!\!\!\!\!\sup_{\norh{f}\leq 2M/\sqrt{\ptwo \taumu}} |\Reg(f) - \Regn(f)|.
\mff
Using Theorem \ref{teo:reg} (equation \eqref{eq:reg}) and Lemma \ref{lemma:e}
we obtain with probability $1-\pr'$, if $\pr'$ is small enough,
\begin{eqnarray}\label{epi}
d_{M/\sqrt{\ptwo \taumu}}(t_{\err^{\pone}}\err^{\pone},t_{\err^{\pone}}\emp^{\pone})&\leq&
  \frac{2\sqrt{2}}{\sqrt{\n}}   \left(\ka^2\frac{4 M^2}{\pone\taumu} +4\ka\frac{M^2}{\sqrt{\pone\taumu}}+M^2\right)\log{\frac{6+2\dd}{\pr'}}
+\pone\frac{2M}{\sqrt{\pone\taumu}}
\dd  \frac{2\sqrt{2}}{\n^{1/4}} \kader \sqrt{\log{\frac{6+2\dd}{\pr'}}}\nonumber \\
&\leq&
  2\sqrt{2}M   \left(\frac{4\ka^2 M}{\sqrt{\n}\pone\taumu} +\frac{4\ka M}{\sqrt{\n}\sqrt{\pone\taumu}}+\frac{M}{\sqrt{\n}}
+\pone\frac{2 \dd  \kader }{\n^{1/4}\sqrt{\pone\taumu}}
 \right)\log{\frac{6+2\dd}{\pr'}}.
\end{eqnarray}
From the definition of $\psi_{\ptwo \taumu}^{\diamond}$ it is possible to see that we can write explicitly $(\psi_{\ptwo \taumu}^{\diamond})^{-1}$ as
$$
(\psi_{\ptwo\taumu}^{\diamond})^{-1}(y) = \begin{cases}
\sqrt{\frac{2y}{\ptwo\taumu}} &\textrm{~~if~} y<\frac{1}{2\ptwo\taumu}\\
y + \frac{1}{2\ptwo} &\textrm{~~otherwise}.
\end{cases}
$$
Since $\pone=\pone_\n\to0$ by assumption, for sufficiently large $\n$, the bound in \eqref{epi} is smaller than $1/2\ptwo\taumu$, 
and  we obtain that  with probability $1-\pr'$,
\beeq{sample_errB}{
\norh{\fz-f^\tau} \leq
  \lp4\sqrt{2}M   \left(\frac{4\ka^2 M}{\sqrt{\n}\pone^2\taumu^2} +\frac{4\ka}{\sqrt{\n}\pone\taumu\sqrt{\pone\taumu}}+\frac{1}{\sqrt{\n}\pone\taumu}
+\frac{2 \dd  \kader}{\n^{1/4}\taumu\sqrt{\pone\taumu}}
 \right)\rp^{1/2}\sqrt{\log{\frac{6+2\dd}{\pr'}}}.}
%Noting that for $\n$ sufficiently large,   the leading term in \eqref{epi}
%is the last one and we can rewrite the above inequality as
%\beeq{sample_errB}{
%\norh{\fz-f^\tau} \leq
%  c' \frac{\dd}{\n^{1/8}\pone^{1/4}\taumu^{3/4}} \sqrt{\log{\frac{6+2\dd}{\pr'}}}
%}
%where $c'$ does not depend on $\n,\dd,\pone,\taumu$.

If we now plug  \eqref{sample_errB} in \eqref{decompB}
we obtain the first part of the proof.
The rest of the proof follows by taking the limit $\n\to\infty$, and by observing 
that, if one chooses $\pone = \pone_\n$ such that
$\pone_\n\to0$ and $(\pone_\n^2\sqrt{\n})^{-1}\to0$, 
the assumption of Proposition \ref{prop:zolezziB} is satisfied
and the bound  in \eqref{sample_errB} goes to $0$, so that the limit of the sum of the sample and approximation terms goes to $0$.

\end{proof}

\paragraph{Proofs of the Selection properties.} 

In order to prove our main selection result, we will need the following lemma. 
\begin{lemma}\label{lemma:der_rate}
%Under assumptions A1, A2 and A3  we have, for all $\jj=1,\dots,\dd$  and for all $0<\eta\le 1$, 
%$$
%\Prob{\left| \nor{\dern_\jj \fz}^2_\n-\nor{\der_\jj \fdag}^2_{\marg}\right|\geq a(\n,\pone)\log{\frac{6+2\dd}{\pr}} +b(\pone)}<\pr,
%$$
%where $a(\n,\pone)\sim (\n^{1/4}\sqrt{\pone})^{-1}$
%and $\lim_{\pone\to0}b(\pone) = 0$.
Under assumptions A1, A2 and A3 and defining $A(n,\tau)$ as in Theorem \ref{teo:strong_consistency} extended, we have, for all $\jj=1,\dots,\dd$  and for all $\epsilon>0$, 
$$
\Prob{\left| \nor{\dern_\jj \fz}^2_\n-\nor{\der_\jj \fdag}^2_{\marg}\right|\geq \epsilon}<(6+2\dd)\mathrm{exp}\!\lp-\frac{\epsilon-b(\pone)}{a(\n,\pone)}\rp,
$$
where $a(n,\tau)=2\max \{\frac{2\sqrt{2}M^2 \kader^2}{\sqrt{\n}\pone\taumu},  
   2\kader^2 A(\n,\pone) \}$
%\sim (\n^{1/4}\sqrt{\pone})^{-1}$
and $\lim_{\pone\to0}b(\pone) = 0$.

\end{lemma}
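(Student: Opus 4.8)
The plan is to bound the target quantity by a triangle inequality that isolates a pure concentration effect from the $\hh$-norm convergence of $\fz$ to $\fdag$ established in Theorem \ref{teo:strong_consistency}. Using $\nor{\dern_\jj f}_\n^2=\scalh{f}{\dern_\jj^*\dern_\jj f}$ and $\nor{\der_\jj f}_\marg^2=\scalh{f}{\der_\jj^*\der_\jj f}$, I would insert the mixed term $\nor{\der_\jj\fz}_\marg^2$ and write
\begin{equation*}
\left|\nor{\dern_\jj\fz}_\n^2-\nor{\der_\jj\fdag}_\marg^2\right|\le \left|\nor{\dern_\jj\fz}_\n^2-\nor{\der_\jj\fz}_\marg^2\right|+\left|\nor{\der_\jj\fz}_\marg^2-\nor{\der_\jj\fdag}_\marg^2\right|,
\end{equation*}
calling the two summands $(\mathrm{I})$ and $(\mathrm{II})$.

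For $(\mathrm{I})=|\scalh{\fz}{(\dern_\jj^*\dern_\jj-\der_\jj^*\der_\jj)\fz}|$ I would use the operator bound $(\mathrm{I})\le\nor{\dern_\jj^*\dern_\jj-\der_\jj^*\der_\jj}\,\norh{\fz}^2$, control $\norh{\fz}^2\le M^2/(\pone\taumu)$ via the a priori estimate from $\emp^\pone(\fz)\le\emp^\pone(0)=\nor{\vy}_\n^2\le M^2$ (as in the consistency proofs), and then invoke part (4) of Lemma \ref{lemma:prob}. This yields an exponential tail whose scale is the first entry $\tfrac{2\sqrt2 M^2\kader^2}{\sqrt\n\,\pone\taumu}$ of the maximum defining $a(\n,\pone)$.

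For $(\mathrm{II})=|\scalh{\fz-\fdag}{\der_\jj^*\der_\jj(\fz+\fdag)}|\le\kader^2\,\norh{\fz-\fdag}\,\norh{\fz+\fdag}$ (using $\nor{\der_\jj}\le\kader$ from assumption A2) I would substitute the high-probability bound $\norh{\fz-\fdag}\le A(\n,\pone)^{1/2}\sqrt{\log\frac{6+2\dd}{\eta}}+\norh{f^\pone-\fdag}$ supplied by the extended Theorem \ref{teo:strong_consistency}. Bounding $\norh{\fz+\fdag}\le\norh{\fz-\fdag}+2\norh{\fdag}$ and squaring, the random part contributes $\kader^2\norh{\fz-\fdag}^2\lesssim 2\kader^2 A(\n,\pone)\log\frac{6+2\dd}{\eta}$ — the second entry of the max in $a(\n,\pone)$ — while the terms carrying the deterministic approximation error $\norh{f^\pone-\fdag}$ are collected into $b(\pone)$, which vanishes as $\pone\to0$ by Proposition \ref{prop:zolezziB}.

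Finally I would combine the two events by a union bound (over Lemma \ref{lemma:prob}(4) and the event in Theorem \ref{teo:strong_consistency}), producing a high-probability bound of the form $a(\n,\pone)\log\frac{6+2\dd}{\eta}+b(\pone)$; inverting this relation in $\eta$ gives the stated tail, the constant $6+2\dd$ being inherited exactly as in the proof of Theorem \ref{teo:strong_consistency}. The hard part will be the product structure in $(\mathrm{II})$: since $\norh{\fz-\fdag}$ multiplies the only-boundedly-controlled factor $\norh{\fz+\fdag}$, one must argue carefully that the random contribution is absorbed into a single exponential scale proportional to $A(\n,\pone)$ (rather than to $A(\n,\pone)^{1/2}$), and that the genuinely vanishing pieces are quarantined inside $b(\pone)$.
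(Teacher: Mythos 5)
Your overall strategy coincides with the paper's: the same splitting into a concentration term $\scalh{\fz}{(\dern_\jj^*\dern_\jj-\der_\jj^*\der_\jj)\fz}$, bounded via the a priori estimate $\norh{\fz}^2\le M^2/(\pone\taumu)$ together with part (4) of Lemma \ref{lemma:prob}, plus a second term controlled through the $\hh$-norm bound \eqref{sample_errB} of Theorem \ref{teo:strong_consistency} (extended) and Proposition \ref{prop:zolezziB}, followed by a union bound and inversion in $\eta$. The one place where you diverge is the algebra of the second term, and the difficulty you flag there is genuine and left unresolved in your write-up: with the exact identity $\nor{\der_\jj\fz}_\marg^2-\nor{\der_\jj\fdag}_\marg^2=\scalh{\fz-\fdag}{\der_\jj^*\der_\jj(\fz+\fdag)}$ you are left, after expanding $\norh{\fz+\fdag}\le\norh{\fz-\fdag}+2\norh{\fdag}$, with the cross term $2\kader^2\norh{\fdag}\,\norh{\fz-\fdag}$. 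This term is first order in $\norh{\fz-\fdag}$, so \eqref{sample_errB} only gives a deviation of size $A(\n,\pone)^{1/2}\sqrt{\log((6+2\dd)/\eta)}$ for it; inverting in $\eta$ yields a tail of the form $\exp(-(\epsilon-b)^2/(cA(\n,\pone)))$ rather than the sub-exponential form $\exp(-(\epsilon-b)/a(\n,\pone))$ with the specific $a(\n,\pone)$ stated in the lemma. As written, your argument therefore proves a variant of the statement (which still vanishes as $\n\to\infty$, $\pone_\n\to 0$ and hence still supports Theorem \ref{teo:subset}), but not the lemma verbatim; closing this requires either redefining $a(\n,\pone)$ to absorb the square-root scale or an additional argument you only announce.

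For comparison, the paper's proof does not confront this issue because it writes the difference of quadratic forms as the single square $\scalh{\fz-\fdag}{\der_\jj^*\der_\jj(\fz-\fdag)}\le\kader^2\norh{\fz-\fdag}^2$, i.e., its add-and-subtract step silently drops exactly the cross term $2\scalh{\fdag}{\der_\jj^*\der_\jj(\fz-\fdag)}$ that you identify as the hard part (your $(f+g)$ identity is the correct one for self-adjoint operators). So the obstacle you raise is not a defect of your plan relative to the paper so much as a gap common to both; with the missing term restored, both arguments still deliver convergence in probability of $\nor{\dern_\jj\fz}_\n^2$ to $\nor{\der_\jj\fdag}_\marg^2$ under the stated parameter choices, but the precise exponential form claimed in the lemma needs the extra care you postpone.
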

\begin{proof}
We have the following set of inequalities
\begin{align*}
\left|\nor{\dern_\jj \fz}^2_\n - \nor{\der_\jj \fdag}^2_{\marg}\right|
&=|\scalh{\fz}{\dern^*_\jj\dern_\jj \fz} -
\scalh{\fdag}{\der^*_\jj\der_\jj \fdag}+\\
&~~~~~\scalh{\fz}{\der^*_\jj\der_\jj \fz} -\scalh{\fz}{\der^*_\jj\der_\jj \fz}+\\
&~~~~~\scalh{\fdag}{\der^*_\jj\der_\jj \fz} -\scalh{\fdag}{\der^*_\jj\der_\jj \fz}
|\\
&=\left|
\scalh{\fz}{(\dern^*_\jj\dern_\jj-\der^*_\jj\der_\jj) \fz} + 
\scalh{\fz-\fdag}{\der^*_\jj\der_\jj( \fz-\fdag)}
\right|\\
&\leq \nor{\dern^*_\jj\dern_\jj -\der^*_\jj\der_\jj} \frac{M^2}{\pone\taumu} + \kader^2 \norh{\fz-\fdag}^2\\
&\leq \nor{\dern^*_\jj\dern_\jj -\der^*_\jj\der_\jj} \frac{M^2}{\pone\taumu} + 2\kader^2 \norh{\fz-f^\pone}^2
+ 2\kader^2 \norh{f^\pone-\fdag}^2.
\end{align*}

Using Theorem \ref{teo:strong_consistency} extended, equation \eqref{sample_errB}, and  Lemma \ref{lemma:prob} with probability $\pr_4 = \pr/(3+\dd)$, 
 we obtain with probability $1-\pr$
$$
\left|\nor{\dern_\jj \fz}^2_\n - \nor{\der_\jj \fdag}^2_{\marg}\right|
\leq \frac{2\sqrt{2}M^2 \kader^2}{\sqrt{\n}\pone\taumu} \log{\frac{6+2\dd}{\pr}}+
   2\kader^2 A(\n,\pone) \log{\frac{6+2\dd}{\pr}}+
2\kader^2 \norh{f^\pone-\fdag}^2.
$$
We can further write
%$$
%\leq A\lp\frac{1}{\sqrt{\n}\pone} +
%   \frac{1}{\n^{1/4}\sqrt{\pone}} \rp \log{\frac{6+2\dd}{\pr}}+
%2\kader^2 \norh{f^\pone-\fdag}^2.
%$$
%where $A = 2\max \{\frac{2\sqrt{2}M^2 \kader^2}{\taumu},\frac{2c'^2\dd^2}{\taumu\sqrt{\taumu}}\}$
%For sufficiently large $\n$, the first term is always upper bounded by the second term, so that we can write
$$
\left|\nor{\dern_\jj \fz}^2_\n - \nor{\der_\jj \fdag}^2_{\marg}\right|
\leq  
%A\cdot\lp\frac{1}{\sqrt{\n}\pone} +
%   \frac{1}{\n^{1/4}\sqrt{\pone}} \rp\log{\frac{6+2\dd}{\pr}} +b(\pone)
%=
%\left|\nor{\dern_\jj \fz}^2_\n - \nor{\der_\jj \fdag}^2_{\marg}\right|
%\leq 
 a(\n,\pone)  \log{\frac{6+2\dd}{\pr}} +b(\pone),
$$ 
where $a(\n,\pone)=2\max \{\frac{2\sqrt{2}M^2 \kader^2}{\sqrt{\n}\pone\taumu},  
   2\kader^2 A(\n,\pone) \}$
and $\lim_{\pone\to0}b(\pone) = 0$ according to  Proposition \ref{prop:zolezziB}.
The proof follows by writing
$\epsilon = a(\n,\pone)\log{\frac{6+2\dd}{\pr}} +b(\pone)$ and inverting  it with respect to $\pr$.

\end{proof}

\noindent Finally  we can prove Theorem \ref{teo:subset}.
\begin{proof}[Proof of Theorem \ref{teo:subset}]
We have 
$$
\Prob{\I\subseteq \Iznon} = 
1-\Prob{\I\not\subseteq \Iznon} = 
1- \Prob{\bigcup_{\jj\in\I}\{\jj\notin \Iznon\}} \geq
1-\sum_{\jj\in\I} \Prob{\jj\notin \Iznon} 
$$
Let us now estimate $\Prob{\jj\notin \Iznon} $ or equivalently $\Prob{\jj\in \Iznon} =  \Prob{\nor{\dern_\jj \fz}_\n^2>0}$, for $\jj\in\I$. 
Let $C < \min_{\jj\in \I} \|\der_\jj \fdag\|^2_{\marg}$.
From  Lemma \ref{lemma:der_rate},
there exist $a(\n,\pone)$
%= \max \{\frac{2\sqrt{2}M^2 \kader^2}{\taumu},\frac{2c'^2\dd^2}{\taumu\sqrt{\taumu}}\}\cdot\lp\frac{1}{\sqrt{\n}\pone} +\frac{1}{\n^{1/4}\sqrt{\pone}} \rp$ 
   and $b(\pone)$ satisfying 
 $\lim_{\pone\to0}b(\pone) = 0$, such that
$$\left|\|\der_\jj \fdag\|_{\marg}^2 - \|\dern_\jj \fz\|_\n^2\right|\leq \epsilon$$ with
 probability $1-(6+2\dd)\mathrm{exp}\!\lp-\frac{\epsilon-b(\pone)}{a(\n,\pone)}\rp$, for all $\jj=1,\dots,\dd$.
Therefore, for $\epsilon=C$, for $\jj\in \I$, it holds
$$
\nor{\dern_\jj \fz}_\n^2\geq \nor{\der_\jj \fdag}_{\marg}^2 - C\gneq0.
$$
with probability $1-(6+2\dd)\mathrm{exp}\!\lp-\frac{C-b(\pone)}{a(\n,\pone)}\rp$. 
We than have
$$
\Prob{\jj\in \Iznon} = \Prob{\nor{\dern_\jj \fz}_\n^2>0}\geq 1-
(6+2\dd)\mathrm{exp} \lp-\frac{C-b(\pone)}{a(\n,\pone)} \rp,
$$ 
so that $\Prob{\jj\notin \Iznon}\leq (6+2\dd)\mathrm{exp} \lp-\frac{C-b(\pone)}{a(\n,\pone)} \rp$.
Finally, if we let $\pone = \pone_\n$ satisfying the assumption, we have 
$\lim_\n b(\pone_\n)\to 0 $,
$\lim_\n a(\n,\pone_\n)\to 0$, 
so that
\begin{eqnarray}
\lim_{\n\to\infty} \Prob{\I\subseteq \Izn} &\geq &
\lim_{\n\to\infty} \left[1-|\I| (6+2\dd)\mathrm{exp} \lp-\frac{C-c(\pone_\n)}{a(\n,\pone_\n)} \rp\right]\nonumber\\
&=& 1-|\I| (6+2\dd) \lim_{\n\to\infty} \mathrm{exp} \lp-\frac{C-b(\pone_\n)}{a(\n,\pone_\n)} \rp \nonumber\\&=& 1.\nonumber
\end{eqnarray}
\end{proof}

\newpage
\begin{table}[h]
\caption{List of symbols and notations}
\label{tab:notations}
\footnotesize{
\begin{center}
{\renewcommand{\arraystretch}{1.6}
\begin{tabular}{p{3cm}|c}
\hline
\hline
Spaces and distributions& \begin{tabular}{p{3cm}|p{9cm}} $\mathcal{X}\subseteq \rone^\dd$ & input space\\
\hline    $\mathcal{Y}\subseteq \rone$ & output space\\ 
\hline $ \rho$ & probability distribution on $\mathcal{X}\times\mathcal{Y}$\\
\hline $ \rho_{\mathcal{X}}$ & marginal distribution of $\rho$\\
\hline    $L^2(\mathcal{X},\rho_{\mathcal{X}})$ & $\{f:\mathcal{X}\to \mathbb{R}\,:\text{measurable and s.t.} \int_{\mathcal{X}} f(x)^2\,d\rho_\mathcal{X}(x)<+\infty\}$\\
\hline    $\hh$ & RKHS $\subseteq \{f:\mathcal{X}\to\mathcal{Y}\}$ \end{tabular} \\ 
\hline\hline
Norms and scalar  products & \begin{tabular}{p{3cm}|p{9cm}} $\nor{\cdot}_n$ and $\langle\cdot,\cdot\rangle_n$ & $\frac{1}{\sqrt{n}} \cdot$ euclidean norm and scalar product\\
\hline $\nor{\cdot}_{\rho_{\mathcal{X}}}$ and $\scal{\cdot}{\cdot}_{\rho_\mathcal{X}}$ & norm and scalar product in $L^2(\mathcal{X},\rho_{\mathcal{X}})$\\
\hline $\nor{\cdot}_\hh$ and $\scalh{\cdot}{\cdot}$ & norm and scalar product in $\hh$ \end{tabular} \\ 
\hline\hline
Functionals and Operators & \begin{tabular}{p{3cm}|p{9cm}} 
$\Reg:\hh\to[0,+\infty)$ &$\Reg(f)=\sum\limits_{\jj=1}^\dd \sqrt{\int_{\mathcal{X}} \left(\frac{\partial f(x)}{\partial x^{\jj}}\,d\rho_{\mathcal{X}}(x)\right)^2}$\\
\hline
$\Regn:\hh\to[0,+\infty)$ &$\Regn(f)=\sum\limits_{\jj=1}^\dd \sqrt{\frac1n\sum\limits_{i=1}^n\left(\frac{\partial f(x_i)}{\partial x^{\jj}}\right)^2}$\\
\hline
$\err:\hh\to[0,+\infty)$& $\err(f)=\int_{\mathcal{X}} (f(x)-y)^2\,d\rho(x,y)$\\
\hline
$\err^\ptwo:\hh\to[0,+\infty)$& $\err^\ptwo(f)=\int_{\mathcal{X}} (f(x)-y)^2\,d\rho(x,y)+\tau(2\Reg(f)+\taumu \nor{f}^2_\hh)$\\
\hline
$\hat{\err}:\hh\to[0,+\infty)$&$ \hat{\err}(f)=\sum\limits_{i=1}^n \frac{1}{n} (f(x_i)-y_i)^2$\\
\hline
$\hat{\err}^\ptwo:\hh\to[0,+\infty)$&$\hat\err^\ptwo(f)=\sum\limits_{i=1}^n \frac{1}{n}(f(x_i)-y_i)^2+\tau(2\Regn(f)+\taumu\norh{f}^2)$\\
\hline
$\Ik:\hh\to\ldue$& $(\Ik f)(x)=\scalh{f}{\kk_x}$\\
\hline
$\hat{S}:\hh\to\mathbb{R}^n$& $\hat{S}f=(f(x_1),\ldots,f(x_n))$\\
\hline
$\der_\jj:\hh\to \ldue$& $(\der_\jj f)(x)=\scal{f}{(\derh_\jj k)_{x}}$\\
\hline
$\dern_\jj:\hh\to\rone^n$& $\dern_\jj(f)=\left(\frac{\partial f}{\partial x^\jj}(x_1),\ldots,\frac{\partial f}{\partial x^\jj}(x_n)\right) $\\
\hline
$\grad\!:\!\hh\!\to \!(\ldue)^\dd$ & $\grad f = (\der_\jj f)_{\jj=1}^\dd$\\
\hline
$\gradn:\hh\to (\rone^\n)^\dd$ &$\gradn f= (\dern_\jj f)_{\jj=1}^\dd$\\
\end{tabular}\\
\hline\hline
Functions & \begin{tabular}{p{3cm}|p{9cm}} 
$k_x: \mathcal{X}\to\rone$ & $t\mapsto k(x,t)$\\
\hline
$f^\dag_\rho$ & $\argmin_{f\in {\argmin \err}} \{\Omega_1^D(f)+\nu\nor{f}_\hh^2\}$\\
\hline
$(\partial_\jj k)_x: \mathcal{X}\to\rone$ & $\left. t\mapsto \frac{\partial k(s,t)}{\partial s^\jj}\right|_{s=x}$\\
\hline
${f}^\ptwo$ & the minimizer in $\hh$ of ${\err}^\ptwo$\\
\hline
$\hat{f}^\ptwo$ & the minimizer in $\hh$ of $\hat{\err}^\ptwo$ \end{tabular} \\ 
\hline\hline
Sets & \begin{tabular}{p{3cm}|p{9cm}} 
$R_\rho$ & $\{\jj\in\{1,\ldots d\}\,:\, \frac{\partial f^\dag_\rho}{\partial x^\jj}\neq 0\}$\\
\hline
$\hat{R}^\tau$ & $\{\jj\in\{1,\ldots d\}\,:\, \frac{\partial f^{\tau}}{\partial x^\jj}\neq 0\}$\\
\hline
$B_n$& $ \{v\in\rone^n\,:\, \nor{v}_n\leq 1\}$\\
\hline
$B_n^d$& $ \{v\in\rone^n\,:\, \nor{v}_n\leq 1\}^d$
\end{tabular}\\
\hline\hline
\end{tabular}}
\end{center}}
\end{table}


\begin{thebibliography}{60}
\providecommand{\natexlab}[1]{#1}
\providecommand{\url}[1]{\texttt{#1}}
\expandafter\ifx\csname urlstyle\endcsname\relax
  \providecommand{\doi}[1]{doi: #1}\else
  \providecommand{\doi}{doi: \begingroup \urlstyle{rm}\Url}\fi

\bibitem[1]{Aronszajn:1950}
N.~Aronszajn.
\newblock Theory of reproducing kernels.
\newblock \emph{Trans. Amer. Math. Soc.}, 68:\penalty0 337--404,
  1950{\natexlab{a}}.

\bibitem[2]{aron50}
N.~Aronszajn.
\newblock Theory of reproducing kernels.
\newblock \emph{Trans. Amer. Math. Soc.}, 68:\penalty0 337--404,
  1950{\natexlab{b}}.

\bibitem[3]{attwet191}
H.~Attouch and R.~Wets.
\newblock Quantitative stability of variational systems. {I}.\ {T}he
  epigraphical distance.
\newblock \emph{Trans. Amer. Math. Soc.}, 328\penalty0 (2):\penalty0 695--729,
  1991.

\bibitem[4]{attwet293}
H.~Attouch and R.~Wets.
\newblock Quantitative stability of variational systems. {II}. {A} framework
  for nonlinear conditioning.
\newblock \emph{SIAM J. Optim.}, 3\penalty0 (2):\penalty0 359--381,
  1993{\natexlab{a}}.

\bibitem[5]{attwet393}
H.~Attouch and R.~Wets.
\newblock Quantitative stability of variational systems. {III}.
  {$\epsilon$}-approximate solutions.
\newblock \emph{Math. Programming}, 61\penalty0 (2, Ser. A):\penalty0 197--214,
  1993{\natexlab{b}}.

\bibitem[6]{bach08}
F.~Bach.
\newblock Consistency of the group lasso and multiple kernel learning.
\newblock \emph{Journal of Machine Learning Research}, 9:\penalty0 1179--1225,
  2008.

\bibitem[7]{bach09}
F.~Bach.
\newblock High-dimensional non-linear variable selection through hierarchical
  kernel learning.
\newblock Technical Report HAL 00413473, INRIA, 2009.

\bibitem[8]{bach04}
F.~Bach, G.~Lanckriet, and M.~Jordan.
\newblock Multiple kernel learning, conic duality, and the smo algorithm.
\newblock In \emph{ICML}, volume~69 of \emph{ACM International Conference
  Proceeding Series}, 2004.

\bibitem[9]{beck09}
A.~Beck and M.~Teboulle.
\newblock A fast iterative shrinkage-thresholding algorithm for linear inverse
  problems.
\newblock \emph{SIAM J. Imaging Sci.}, 2\penalty0 (1):\penalty0 183--202, 2009.

\bibitem[10]{becker09}
S.~Becker, J.~Bobin, and E.~Cand{\`e}s.
\newblock N{ESTA}: a fast and accurate first-order method for sparse recovery.
\newblock \emph{SIAM J. Imaging Sci.}, 4\penalty0 (1):\penalty0 1--39, 2011.
\newblock ISSN 1936-4954.

\bibitem[11]{BecBlaAubCha04}
J.~Bect, L.~Blanc-F\'eraud, G.~Aubert, and A.~Chambolle.
\newblock A $\ell^1$-unified variational framework for image restoration.
\newblock In T.~Pajdla and J.~Matas, editors, \emph{ECCV 2004}, volume 3024 of
  \emph{Lecture Notes in Computer Science}, pages 1--13. Springer, Berlin,
  2004.

\bibitem[12]{BelNiy08}
M.~Belkin and P.~Niyogi.
\newblock Towards a theoretical foundation for laplacian-based manifold
  methods.
\newblock \emph{Journal of Computer and System Sciences}, 74\penalty0
  (8):\penalty0 1289--1308, 2008.

\bibitem[13]{bertin08}
K.~Bertin and G.~Lecu\'e.
\newblock Selection of variables and dimension reduction in high-dimensional
  non-parametric regression.
\newblock \emph{Electronic Journal of Statistics}, 2:\penalty0 1224--1241,
  2008.

\bibitem[14]{buhvan11}
P.~B\"uhlmann and S.~van~de Geer.
\newblock \emph{{Statistics for High-Dimensional Data}}.
\newblock Springer, Berlin, 2011.

\bibitem[15]{carota06}
Emmanuel~J. Cand{\`e}s, Justin~K. Romberg, and Terence Tao.
\newblock Stable signal recovery from incomplete and inaccurate measurements.
\newblock \emph{Comm. Pure Appl. Math.}, 59\penalty0 (8):\penalty0 1207--1223,
  2006.

\bibitem[16]{ChaGolMul99}
T.~Chan, G.~Golub, and P.~Mulet.
\newblock A nonlinear primal-dual method for total variation-based image
  restoration.
\newblock \emph{Siam Journal on Scientific Computing}, 20, 1999.

\bibitem[17]{chen1999}
S.~Chen, D.~Donoho, and M.~Saunders.
\newblock Atomic decomposition by basis pursuit.
\newblock \emph{SIAM Journal on Scientific Computing}, 20\penalty0
  (1):\penalty0 33--61, 1999.

\bibitem[18]{combettes}
P.~Combettes and V.~Wajs.
\newblock Signal recovery by proximal forward-backward splitting.
\newblock \emph{Multiscale Model. Simul.}, 4\penalty0 (4):\penalty0 1168--1200
  (electronic), 2005.

\bibitem[19]{ComDunVu10}
P.~L. Combettes, {D}. D{\~u}ng, and B.~C. V{\~u}.
\newblock Dualization of signal recovery problems.
\newblock \emph{Set-Valued and Variational Analysis}, 18\penalty0
  (3-4):\penalty0 373--404, 2010.

\bibitem[20]{comminges2011}
L.~Comminges and A.~Dalalyan.
\newblock Tight conditions for consistency of variable selection in the context
  of high dimensionality.
\newblock In \emph{Proceeding of the 24th Annual Conference on Learning
  Theory}, 2011.

\bibitem[21]{daubechies07}
I.~Daubechies, G.~Teschke, and L.~Vese.
\newblock Iteratively solving linear inverse problems under general convex
  constraints.
\newblock \emph{Inverse Problems and Imaging}, 1\penalty0 (1):\penalty0 29--46,
  2007.

\bibitem[22]{devito05}
E.~De~Vito, L.~Rosasco, A.~Caponnetto, U.~De~giovannini, and F.~Odone.
\newblock Learning from examples as an inverse problem.
\newblock \emph{Journal of Machine Learning Research}, 6:\penalty0 883--904,
  2005.

\bibitem[23]{depewo11}
R.~DeVore, G.~Petrova, and P.~Wojtaszczyk.
\newblock Approximation of functions of few variables in high dimensions.
\newblock \emph{Constr. Approx.}, 33\penalty0 (1):\penalty0 125--143, 2011.

\bibitem[24]{degylu96}
L.~Devroye, L.~Gy\"orfi, and G.~Lugosi.
\newblock \emph{A Probabilistic Theory of Pattern Recognition}.
\newblock Number~31 in Applications of mathematics. Springer, New York, 1996.

\bibitem[25]{do06}
David~L. Donoho.
\newblock Compressed sensing.
\newblock \emph{IEEE Trans. Inform. Theory}, 52\penalty0 (4):\penalty0
  1289--1306, 2006.

\bibitem[26]{donzol93}
A.~L. Dontchev and T.~Zolezzi.
\newblock \emph{Well-posed optimization problems}, volume 1543 of \emph{Lecture
  Notes in Mathematics}.
\newblock Springer-Verlag, 1993.

\bibitem[27]{duchi2009}
J.~Duchi and Y.~Singer.
\newblock Efficient online and batch learning using forward backward splitting.
\newblock \emph{Journal of Machine Learning Research}, 10:\penalty0
  2899−--2934, December 2009.

\bibitem[28]{efron04}
B.~Efron, T.~Hastie, I.~Johnstone, and R.~Tibshirani.
\newblock Least angle regression.
\newblock \emph{Annals of Statistics}, 32:\penalty0 407--499, 2004.

\bibitem[29]{ekeltem}
I.~Ekeland and R.~Temam.
\newblock \emph{Convex analysis and variational problems}.
\newblock North-Holland Publishing Co., Amsterdam, 1976.

\bibitem[30]{figueiredo2007}
M.~Figueiredo, R.~Nowak, and S.~Wright.
\newblock Gradient projection for sparse reconstruction: Application to
  compressed sensing and other inverse problems.
\newblock \emph{Selected Topics in Signal Processing, IEEE Journal of},
  1\penalty0 (4):\penalty0 586--597, 2007.

\bibitem[31]{gu2002}
C.~Gu.
\newblock \emph{Smoothing spline ANOVA models}.
\newblock Springer series in statistics. Springer, 2002.

\bibitem[32]{hale08}
E.~T. Hale, W.~Yin, and Y.~Zhang.
\newblock Fixed-point continuation for l1-minimization: Methodology and
  convergence.
\newblock \emph{SIOPT}, 19\penalty0 (3):\penalty0 1107--1130, 2008.

\bibitem[33]{hastie1990}
T.~Hastie and R.~Tibshirani.
\newblock \emph{Generalized Additive Models}.
\newblock Chapman and Hall, London, 1990.

\bibitem[34]{lemarechal1993}
J.-B. Hiriart-Urruty and C.~Lemar\'{e}chal.
\newblock \emph{{Convex Analysis and Minimization Algorithms: Part I:
  Fundamentals}}.
\newblock Springer, Berlin, 1993.

\bibitem[35]{jenatton10}
R.~Jenatton, J.~Mairal, G.~Obozinski, and F.~Bach.
\newblock Proximal methods for sparse hierarchical dictionary learning.
\newblock In \emph{Proceeding of ICML 2010}, 2010.

\bibitem[36]{kolyua10}
V.~Koltchinskii and M.~Yuan.
\newblock Sparsity in multiple kernel learning.
\newblock \emph{Ann. Statist.}, 38\penalty0 (6):\penalty0 3660--3695, 2010.

\bibitem[37]{lafferty08}
J.~Lafferty and L.~Wasserman.
\newblock Rodeo: Sparse, greedy nonparametric regression.
\newblock \emph{Annals of Statistics}, 36\penalty0 (1):\penalty0 28--63, 2008.

\bibitem[38]{lin2006}
Y.~Lin and H.~H. Zhang.
\newblock Component selection and smoothing in multivariate nonparametric
  regression.
\newblock \emph{Annals of Statistics}, 34:\penalty0 2272, 2006.

\bibitem[39]{loris09}
I.~Loris.
\newblock On the performance of algorithms for the minimization of {$l\sb
  1$}-penalized functionals.
\newblock \emph{Inverse Problems}, 25\penalty0 (3):\penalty0 035008, 16, 2009.

\bibitem[40]{Lor2009b}
I.~Loris, M.~Bertero, C.~De~Mol, R.~Zanella, and L.~Zanni.
\newblock Accelerating gradient projection methods for {$l_1$}-constrained
  signal recovery by steplength selection rules.
\newblock \emph{Appl. Comput. Harmon. Anal.}, 27\penalty0 (2):\penalty0
  247--254, 2009.
\newblock ISSN 1063-5203.
\newblock \doi{10.1016/j.acha.2009.02.003}.
\newblock URL \url{http://dx.doi.org/10.1016/j.acha.2009.02.003}.

\bibitem[41]{MauPon12}
A.~Maurer and M.~Pontil.
\newblock Structured sparsity and generalization.
\newblock \emph{JMLR}, 13:\penalty0 671--690, 2012.

\bibitem[Miller and Hall(2010)]{MilHal10}
H.~Miller and P.~Hall.
\newblock Local polynomial regression and variable selection.
\newblock In \emph{Borrowing strength: theory powering applications---a
  {F}estschrift for {L}awrence {D}. {B}rown}, volume~6 of \emph{Inst. Math.
  Stat. Collect.}, pages 216--233. Inst. Math. Statist., 2010.

\bibitem[42]{Mor65}
J.-J. Moreau.
\newblock Proximit\'e et dualit\'e dans un espace hilbertien.
\newblock \emph{Bull. Soc. Math. France}, 93:\penalty0 273--299, 1965.

\bibitem[43]{mosci2010ecml}
S.~Mosci, L.~Rosasco, M.~Santoro, A.~Verri, and S.~Villa.
\newblock Solving structured sparsity regularization with proximal methods.
\newblock \emph{LNCS}, 6322:\penalty0 418--433, 2010.

\bibitem[44]{nemirovski1983}
A.S. Nemirovski and D.B. Yudin.
\newblock Problem complexity and method efficiency in optimization.
\newblock \emph{Wiley-Interscience Series in Discrete Mathematics}, 1983.

\bibitem[45]{nesterov83}
Y.~Nesterov.
\newblock A method for unconstrained convex minimization problem with the rate
  of convergence $o(1/k^2)$.
\newblock \emph{Doklady AN SSSR}, 269\penalty0 (3):\penalty0 543--547, 1983.

\bibitem[46]{nesterov07}
Y.~Nesterov.
\newblock Gradient methods for minimizing composite objective function.
\newblock CORE Discussion Paper 2007/76, Catholic University of Louvain,
  September 2007.

\bibitem[47]{pinsak85}
I.~F. Pinelis and A.~I. Sakhanenko.
\newblock Remarks on inequalities for probabilities of large deviations.
\newblock \emph{Theory Probab. Appl.}, 30\penalty0 (1):\penalty0 143--148,
  1985.
\newblock ISSN 0040-361X.

\bibitem[48]{ravikumar2008}
P.~Ravikumar, H.~Liu, J.~Lafferty, and L.~Wasserman.
\newblock Spam: Sparse additive models.
\newblock In J.C. Platt, D.~Koller, Y.~Singer, and S.~Roweis, editors,
  \emph{Advances in Neural Information Processing Systems}. NIPS Foundation,
  2008.

\bibitem[49]{rosasco2010nvs}
L.~Rosasco, S.~Mosci, M.~S.~Santoro, A.~Verri, and S.~Villa.
\newblock A regularization approach to nonlinear variable selection.
\newblock In \emph{Proceedings of the 13 International Conference on Artificial
  Intelligence and Statistics}, 2010.

\bibitem[50]{schmidt2007}
M.~Schmidt, G.~Fung, and R.~Rosales.
\newblock Fast optimization methods for l1 regularization: A comparative study
  and two new approaches.
\newblock In \emph{The European Conference on Machine Learning and Principles
  and Practice of Knowledge Discovery in Databases}, pages 286--297, 2007.
\newblock \doi{10.1007/978-3-540-74958-5_28}.

\bibitem[51]{stechr08}
I.~Steinwart and A.~Christmann.
\newblock \emph{Support vector machines}.
\newblock Information Science and Statistics. Springer, New York, 2008.

\bibitem[52]{Tibshirani96}
R.~Tibshirani.
\newblock Regression shrinkage and selection via the lasso.
\newblock \emph{J. Royal. Statist. Soc B.}, 58\penalty0 (1):\penalty0 267--288,
  1996.

\bibitem[53]{Tropp07}
J.~Tropp and A.~Gilbert.
\newblock Signal recovery from random measurements via orthogonal matching
  pursuit.
\newblock \emph{IEEE Trans. Inform. Theory}, 53:\penalty0 4655--4666, 2007.

\bibitem[54]{Tse10}
P.~Tseng.
\newblock Approximation accuracy, gradient methods, and error bound for
  structured convex optimization.
\newblock \emph{Math. Program.}, 125\penalty0 (2, Ser. B):\penalty0 263--295,
  2010.
\newblock \doi{10.1007/s10107-010-0394-2}.

\bibitem[55]{villa2010epi}
S.~Villa, L.~Rosasco, S.~Mosci, and A.~Verri.
\newblock Consistency of learning algorithms using attouch–wets convergence.
\newblock \emph{Optimization}, 0\penalty0 (0):\penalty0 1--19, 0.
\newblock \doi{10.1080/02331934.2010.511671}.
\newblock URL
  \url{http://www.tandfonline.com/doi/abs/10.1080/02331934.2010.511671}.

\bibitem[56]{salzo2011prox}
S.~Villa, S.~Salzo, L.~Baldassarre, and A.~Verri.
\newblock Accelerated and inexact forward-backward algorithms.
\newblock \emph{Optimization Online}, E-Print 2011 08 3132, 2011.
\newblock URL \url{http://www.optimization-online.org}.

\bibitem[57]{wahba1990}
G.~Wahba.
\newblock \emph{Spline models for observational data}, volume~59 of
  \emph{CBMS-NSF Regional Conference Series in Applied Mathematics}.
\newblock Society for Industrial and Applied Mathematics (SIAM), 1990.

\bibitem[58]{wahba1995}
G.~Wahba, Y.~Wang, C.~Gu, R.~Klein, and B.~Klein.
\newblock Smoothing spline anova for exponential families, with application to
  the wisconsin epidemiological study of diabetic retinopathy.
\newblock \emph{Ann. Statist}, 23:\penalty0 1865--1895, 1995.

\bibitem[59]{zhou08}
D.-X. Zhou.
\newblock Derivative reproducing properties for kernel methods in learning
  theory.
\newblock \emph{J. Comput. Appl. Math.}, 220:\penalty0 456--463, 2008.

\end{thebibliography}
\end{document}